\newcommand{\E}{\mathop{\mathbb{E}}}
\newcommand{\remove}[1]{}
\newcommand{\R}{\mathbb{R}}
\newcommand{\N}{\mathbb{N}}
\newcommand{\cS}{\mathcal{S}}
\newcommand{\cT}{\mathcal{T}}
\newcommand{\cL}{\mathcal{L}}
\newcommand{\cA}{\mathcal{A}}
\newcommand{\cC}{\mathcal{C}}
\newcommand{\cD}{\mathcal{D}}
\newcommand{\cE}{\mathcal{E}}
\newcommand{\cP}{\mathcal{P}}
\newcommand{\cN}{\mathcal{N}}
\newcommand{\cO}{\mathcal{O}}
\newcommand{\cG}{\mathcal{G}}
\newcommand{\cX}{\mathcal{X}}
\newcommand{\eps}{\varepsilon}
\newcommand{\wt}{\texttt{wt}}
\newcommand{\wh}{\widehat}
\newcommand{\KL}{\mathrm{KL}}
\newcommand{\TV}{\mathsf{d_{TV}}}
\newcommand{\Reg}{\mathsf{Reg}}
\newcommand{\EWA}{\textsf{EWA}}
\newcommand{\RWM}{\textsf{RWM}}
\newcommand{\Clip}{\mathsf{Clip}}
\newcommand{\Bucket}{\mathsf{Bucket}}
\newcommand{\pa}{\mathsf{pa}} %
\newcommand{\ignore}[1]{}
\newtheorem{theo}{Theorem}[section]
\newtheorem{lem}[theo]{Lemma}
\newtheorem{coro}[theo]{Corollary}
\newtheorem{cl}[theo]{Claim}
\theoremstyle{definition}
\newtheorem{defi}[theo]{Definition}
\newtheorem{rem}[theo]{Remark}
\newtheorem{fact}[theo]{Fact}
\newcounter{mynotes}
\newcommand{\nbr}{\mathsf{Nbr}}
\newcommand{\kl}{\mathsf{D_{KL}}}
\newcommand{\paren}[1]{\left( #1 \right)}
\newcommand{\bracket}[1]{\left[ #1 \right]}
\newcommand{\abs}[1]{\left| #1 \right|}
\newcommand{\sampchordal}{$\mathsf{SamplingChordalDist}$ }
\newcommand{\countchordal}{$\mathsf{CountChordalDist}$ }
\newcommand{\learnchordal}{$\mathsf{LearnChordalDist}$ }
\newcommand{\ao}{\mathsf{AO}}
\newcommand{\tg}{\mathcal{T}^G}
\newcommand{\lnk}{\mathsf{Link}}
\newcommand{\sep}{\mathsf{Sep}}
\newcommand{\stk}{\mathsf{Stack}}
\newcommand{\ldag}{\mathcal{DAG}^{(\ell)}}
\newcommand{\tab}{\mathsf{Table}}
\newcommand{\hel}{\mathsf{H}}
\newcommand{\adds}[1]{{#1}}
\newcommand{\dels}[1]{}
\newcommand{\aodist}[2]{\ensuremath P^{+}_{#1,#2}} %
\newcommand{\aonodedist}[3]{\ensuremath P^{+}_{#1,#2,#3}} %
\newcommand{\aoestdsamp}[2]{\ensuremath \widehat{P}^{+}_{#1,#2}} %
\newcommand\numrelop[2]%
\newenvironment{ack}
    {\section*{Acknowledgments}
    }
    {%
    }
\title{Distribution Learning Meets Graph Structure Sampling}
\author{%
 Arnab Bhattacharyya\\
 University of Warwick\\
 \texttt{arnab.bhattacharyya@warwick.ac.uk}
 \and
 Sutanu Gayen\\
 IIT Kanpur\\
 \texttt{sutanugayen@gmail.com}
 \and
 Philips {George John}\\
 CNRS@CREATE \& Dept of Computer Science\\
 National University of Singapore\\
 \texttt{philips.george.john@u.nus.edu}
 \and
 Sayantan Sen\\
 Centre for Quantum Technologies\\
 National University of Singapore\\
 \texttt{sayantan789@gmail.com}
 \and
 N. V. Vinodchandran\\
 University of Nebraska-Lincoln\\
 \texttt{vinod@cse.unl.edu}
}
\date{}
\begin{document}

\maketitle

\begin{abstract}
    This work establishes a novel link between the problem of PAC-learning high-dimensional graphical models and the task of (efficient) counting and sampling of graph structures, using an online learning framework. The problem of efficiently counting and sampling graphical structures, such as spanning trees and acyclic orientations, has been a vibrant area of research in algorithms. We show that this rich algorithmic foundation can be leveraged to develop new algorithms for learning high-dimensional graphical models.

We present the first efficient algorithm for (both realizable and agnostic) learning of Bayes nets with a chordal skeleton. In particular, we present an algorithm that, given integers $k,d > 0$, error parameter $\varepsilon > 0$, an undirected chordal graph $G$ on $n$ vertices, and sample access to a distribution $P^*$ on $[k]^n$; (1) returns a Bayes net $\widehat{P}$ with skeleton $G$ and indegree $d$, whose KL-divergence from $P^*$ is at most $\varepsilon$ more than the optimal KL-divergence between $P^*$ and any Bayes net with skeleton $G$ and indegree $d$, (2) uses $\widetilde{O}(n^3k^{d+1}/\varepsilon^2)$ samples from $P^*$ and runs in time $\mathrm{poly}(n,k,\varepsilon^{-1})$ for constant $d$. Prior results in this spirit were for only for \adds{trees ($d=1$, tree skeleton) via Chow-Liu}, and in the realizable setting for polytrees (arbitrary $d$ but tree skeleton). Thus, our result significantly extends the state-of-the-art in learning Bayes net distributions. We also establish new results for learning tree and polytree distributions.

\end{abstract}

\section{Introduction}

High-dimensional distributions are pivotal in contemporary machine learning, with widespread applications across various domains such as gene regulation networks~\citep{friedman2000using,castelo2006robust,castelo2009reverse,edwards2010selecting}, protein signaling networks~\citep{durbin1998biological,spirin2003protein,thomas2005graphical}, brain connectivity networks~\citep{huang2010learning,varoquaux2010brain}, and psychiatric symptom networks~\citep{boschloo2015network,peralta2020symptom,xie2022conditional}. Probabilistic graphical models provide succinct representations of high-dimensional distributions over an exponentially large sample space such as $\mathbb{R}^n$ or $\{0,1\}^n$. 
These models leverage the limited dependence between component variables, encoded by a dependency graph, to describe joint probability distributions over a large set of variables in a succinct and interpretable manner.
Probabilistic graphical models such as Bayesian networks, Ising models, and Gaussian graphical models are extensively utilized to model a wide range of data generation processes in practice (refer to \cite{lauritzen1996graphical,wainwright2008graphical,koller2009probabilistic} and the references therein). Learning distributions represented by these graphical models is a central challenge with significant theoretical and practical implications.

{\em The present work focuses on learning an unknown Bayesian network from sample data.} A {Bayesian network (Bayes net)} with \( n \) variables and alphabet size \( k \) is a probability distribution over \([k]^n\) defined by a directed acyclic graph (DAG) \( G \) on \( [n] \). Each node represents a random variable, which is conditionally independent of non-descendants given its parents. By {Bayes rule}, the distribution factorizes into \( n \) conditional probabilities. If \( G \) has in-degree at most \( d \), it requires at most \( n k^{d+1} \) parameters to describe the distribution, significantly reducing the descriptional complexity from \( k^n \) parameters required for an arbitrary distribution.

Learning Bayesian network distributions  involves two steps: structure learning (identifying the dependency graph) and parameter learning (estimating conditional probability tables). Structure learning methods fall into two categories: constraint-based, which iteratively removes edges by testing for conditional independence, and score-based, which assigns scores to DAGs and frames structure recovery as an optimization problem, often solved using heuristics like greedy hill climbing.
The current work broadly fits in the framework of  score-based approach. However, instead of optimizing the score directly, we use the framework of {\em online learning} to reduce the problem to {\em sampling} from a family of high-dimensional structures. %

In the online learning framework, the goal is to design a forecaster that observes a sequence of examples $x^{(1)}, x^{(2)}, \dots, x^{(T)}$, and at each time (or round) $t$, outputs a prediction $\wh{p}_t$ based only on $x^{(1)}, \dots, x^{(t-1)}$. After predicting $\wh{p}_t$, it observes $x^{(t)}$, and it incurs a loss $\ell(\wh{p}_t, x^{(t)})$ for a {\em loss function} $\ell$. The cumulative loss of the forecaster is benchmarked against that of a fixed and known set of {\em experts}. The goal of the algorithm is to minimize the {\em regret}, defined as the difference between the cumulative loss over all rounds and the loss it would incur if it were to follow the best expert. Online learning is a well-established field with a wide range of applications in theoretical computer science, including game theory, approximation algorithms, and complexity theory (see \cite{freund1996game,freund1999adaptive,DBLP:conf/soda/BeagleholeHKLL23,daskalakis2009complexity,rubinstein2017settling,behnezhad2019optimal,kovenock2012coalitional} and the references therein). %

In distribution learning, the experts are all the possible candidate Bayesian networks (up to a sufficient discretization). The observations are random samples from the unknown distribution, and we set the loss function to be the negative log-likelihood $\ell(\wh{p},x) = -\log \wh{p}(x)$. The primary obstacle in applying the online approach to distribution learning lies in ensuring computational efficiency. All the standard forecasting algorithms have running time at least linear in the number of experts. In our case, the experts are all the discretization of all candidate Bayes nets, which is exponentially many.  A key insight of our work is the discovery of the close relationship between this computational challenge and the task of efficient counting and sampling of DAGs from a class of DAGs. This link allows us to transfer techniques and algorithms from the counting and sampling literature to the realm of distribution learning, leading to significant new results in learning Bayes net distributions.

\section{Our Results}
We first set up the framework of PAC-learning \citep{valiant1984theory} for distributions; formal definitions appear in \Cref{sec:prelim}. We use KL-divergence (denoted as $\kl$) as the notion of similarity between probability distributions, and we will  work with distributions on $[k]^n = \{1,\dots, k\}^n$. Let $\cC$ be a class of such distributions; in our applications, $\cC$ will correspond to some family of Bayes nets.

For $\eps>0, A\geq 1$ and two distributions $P$ and $\wh{P}$ over $[k]^n$, we say $\wh{P}$ is an {\em $(\eps, A)$-approximation for $P$ with respect to $\cC$} if $\kl(P \| \wh{P}) \leq  A \cdot \min_{Q \in \cC} \kl(P \| Q) + \eps$. When $A = 1$, we simply say $\wh{P}$ is an {\em $\eps$-approximation} of $P$. An algorithm is said to be an {\em agnostic PAC-learner for $\cC$} if for any $\eps, \delta>0$ and access to i.i.d.~samples from an input distribution $P^*$, it outputs a distribution $\wh{P}$ which is an $\eps$-approximation for $P^*$ with probability at least $1-\delta$. If $\wh{P}$ is not necessarily in $\cC$, the algorithm is called {\em improper}; otherwise, it is called {\em proper}. Also, the {\em realizable setting} corresponds to the case when the input $P^*$ is guaranteed to be in $\cC$.

\begin{table}
\setlength{\tabcolsep}{8pt}
\renewcommand{\arraystretch}{1.5}
    \centering
    \begin{tabular}{llcc}
    \toprule
       & ~ & \makecell{Chordal graph with indegree $\leq d$\\ and known skeleton}  & Tree with unknown skeleton\\   
    \midrule
\multirow{2}{*}{Realizable} & Proper  & $\widetilde{O}\left(\max\left\{\frac{n^3}{\eps^2\delta^2}, \frac{n k^{d+1}}{\eps}\right\}\right)$ & $\widetilde{O}\left(\max\left\{\frac{n^3}{\eps^2\delta^2}, \frac{n k^{2}}{\eps}\right\}\right)$\\
   & Improper  & $\widetilde{O}\left(\frac{n k^{d+1}}{\eps \delta}\right)$ &  $\widetilde{O}\left(\frac{nk^2}{\eps\delta}\right)$\\
    \midrule
\multirow{2}{*}{Agnostic}   & Proper & $\widetilde{O}\left(\max\left\{\frac{n^3}{\eps^2\delta^2}, \frac{n k^{d+1}}{\eps}\right\}\right)$ & $\widetilde{O}\left(\max\left\{\frac{n^3}{\eps^2\delta^2}, \frac{n k^{2}}{\eps}\right\}\right)$\\
   & Improper & $\widetilde{O}\left(\max\left\{\frac{ n^4 }{\eps^{4}}, \frac{n k^{d+1}}{\eps}\right\}\right)$\tablefootnote{There is a $\mathrm{polylog}(1/\delta)$ dependency here (as opposed to $1/\delta^2$ for the proper learner) hidden in $\tilde{O}(\cdot)$.} & $\widetilde{O}\left(\max\left\{\frac{ n^4 }{\eps^{4}}, \frac{n k^{2}}{\eps}\right\}\right)$\\
    \bottomrule
    \end{tabular}
    \caption{Our results: Sample complexities for $(\eps,\delta)$-PAC learning (the $\tilde{O}(\cdot)$ notation hides polylog factors)}
    \label{tab:result}
\end{table}

It is well-known (e.g., \cite{bhattacharyya2023near}) that given a DAG $G$, the minimum KL divergence between $P^*$ and a Bayes net over $G$ can be written as $J_{P^*} - \sum_{i \in V(G)} I(X_i; X_{\pa_G(i)})$, where $X \sim P^*$, $I$ is the mutual information, and $J_{P^*}$ is a constant independent of $G$.  Hence, if $\cC$ is the class of Bayes nets over DAGs of in-degree $d$, a natural strategy for designing agnostic learning for $\cC$ is the following: First approximate the mutual information between any node and any set of $d$ other nodes up to a suitable additive error. Next, maximize the sum of mutual informations between a node and its $d$ parents, over all possible DAGs with in-degree $d$. Iterating over all possible DAG structures would then lead to an algorithm with sample complexity $\widetilde{O}(n^2k^{d+1}\eps^{-2})$. However, this algorithm has exponential time complexity and the sample complexity is also suboptimal compared to known lower bounds.

In this work, we give an improper agnostic learning algorithm for Bayes nets with indegree $d$ with sample complexity $\widetilde{O}(nk^{d+1}\eps^{-1})$, which is sample-optimal upto polylogarithmic factors. The algorithm is computationally inefficient. Our main contribution is the design of sample and computational-efficient algorithms for new natural classes of Bayes nets, extending the state of the art.  In particular, modifying our algorithm for general bounded-indegree Bayes nets, we give computational and time efficient algorithms for learning {\em chordal-structured Bayes nets with a known skeleton}. Efficient algorithms are currently known only for learning tree-structured distributions (\cite{bhattacharyya2023near, DBLP:conf/stoc/DaskalakisP21,DBLP:journals/tit/ChowL68}) and for learning polytree-structured distributions with a given skeleton (\cite{choo2023learning}).   

An undirected graph is chordal if every cycle of length four or more has a chord (an edge connecting two non-adjacent vertices in the cycle). Chordal graphs form a significantly broader class than trees and encompass several well-studied graph families, including interval graphs and $k$-trees. Consequently, our results represent a major advancement in the state of the art for learning Bayesian network distributions. Beyond their structural significance, chordal graphs play a crucial role in the study of Bayesian networks particularly in causal Bayesian networks~\citep{andersson1997characterization,koller2009probabilistic}.
We describe our results next. The sample  complexities of our results are summarized in \Cref{tab:result}.

\paragraph{Learning with Known Chordal Skeleton}
The {\em skeleton} of a DAG refers to its underlying undirected graph. We consider Bayes nets having a known {\em chordal} skeleton with bounded indegree and present an efficient algorithm for learning such distributions.  
\adds{
\begin{theo}\label{thm:intro_chordal}
Let $G$ be an undirected chordal graph on $n$ nodes, and suppose  $d$ is a fixed constant. Consider the problem of agnostically learning a distribution w.r.t the class of Bayes nets having skeleton $G$ with indegree $\leq d$. 
There exist (i) an agnostic improper PAC-learner for this problem using {$\widetilde{O}\left(\frac{n^4}{\eps^4} + \frac{nk^{d+1}}{\eps}\right)$ samples} that returns an efficiently-samplable mixture of such Bayes nets, and 
(ii) an agnostic proper PAC-learner using $\widetilde{O}\left(\frac{n^3}{\eps^2 \delta^2} + \frac{nk^{d+1}}{\eps}\right)$ samples that returns a single Bayes net. Both algorithms have $\mathrm{poly}(n,k,1/\eps,1/\delta)$ running time.
\end{theo}
}
This is the first result yielding efficient algorithms for agnostic learning Bayes nets on {\em non-tree} skeletons without further distributional assumptions; see Section \ref{sec:related} for discussion of previous work.

For efficiently learning chordal and polytree distributions, we need to know the correct skeleton (underlying undirected graph). To the best of our knowledge, there is currently no computational hardness result regarding this. Additionally, there have been several works with the known skeleton assumption, even in the context beyond PAC distribution learning. \cite{sagar24} designed an FPT algorithm (in terms of total degree and treewidth) for counting the Markov Equivalence Classes with a given skeleton. On the practical side, several works for Bayes net structure learning first learn a skeleton from the data and then fix the orientations (e.g., see the survey \cite{daly2011surveybayes}, section 4.9.1). However, the approach of first learning the skeleton and then performing the distribution learning does not have sound theoretical guarantees, since the distance measures in these two contexts are different.

A well-investigated class of Bayes nets is the class of {\em polytree} distributions: whose DAGs have tree (acyclic) skeletons. %
Polytrees are especially interesting because they admit fast exact inference \citep{Pea14}. \cite{dasgupta1999learning} investigated the problem of learning polytree distributions in terms of the negative log-likelihood cost, and showed that it is NP-hard to get a 2-polytree (where indegree is $\leq 2$) whose cost is at most $c$ times that of the optimal 2-polytree for some constant $c>1$, even if we have oracle access to the true entropies (equivalently, infinite samples). %
Our distribution learning algorithms in contrast achieve an {\em additive} approximation in the reverse-KL cost.
As a direct corollary of the above theorem, we have the following result for bounded indegree polytrees with known skeleton.
\adds{
\begin{coro}
Let $d>0$ be a fixed constant and $G$ be a given undirected tree. Consider the problem of agnostically learning a distribution w.r.t the class of Bayes nets having skeleton $G$ with indegree $\leq d$, i.e. $d$-polytrees with skeleton $G$. There exist (i) an agnostic improper PAC-learner for this problem using {$\widetilde{O}\left(\frac{n^4}{\eps^4} + \frac{nk^{d+1}}{\eps}\right)$ samples} that returns an efficiently-samplable mixture of such polytrees, and 
(ii) an agnostic proper PAC-learner using $\widetilde{O}\left(\frac{n^3}{\eps^2 \delta^2} + \frac{nk^{d+1}}{\eps}\right)$ samples that returns a single polytree. Both algorithms have $\mathrm{poly}(n,k,1/\eps,1/\delta)$ running time.
\end{coro}
}

The closest related result is that of \cite{choo2023learning} who designed a PAC-learner in the realizable setting for polytrees with optimal\footnote{Although not stated in the corollary above, in the realizable setting, our techniques also yield sample complexity with the optimal dependence on $n, k, $ and $\varepsilon$.} sample complexity $\widetilde{O}(nk^{d+1}\eps^{-1}\log \delta^{-1})$. However, their analysis crucially uses the realizability assumption, and it was left as an open question in that work to find an efficient agnostic learner for polytrees. The above corollary answers this question.

{
\begin{rem}
We can bound the running time of our learning algorithms for chordal-structured Bayes nets (with known skeleton) as follows: At the outset, for every node $i \in [n]$ and for every choice of the $\leq d$ parents $\pa(i)$, we learn the conditional distribution associated with node $i$ given that choice of parents $\pa(i)$. These are add-one conditional distributions computed from a sufficiently-large ($\widetilde{O}(nk^d/\eps)$) set of samples. Subsequently, the learning algorithm focuses only on the combinatorial problem of learning an acyclic orientation. The running time contribution from the node-distribution learning part is $\widetilde{O}((\Delta k)^d d n^2/\varepsilon)$, where $\Delta$ is the maximum (undirected) degree of the skeleton. Here, $n {\Delta \choose d} \leq n \Delta^d$ (for $d \ll n$) bounds the number of all possible (node, parent-set) pairings and $\widetilde{O}(d n k^d/\varepsilon)$ is the time for computing a ``good'' add-one conditional distribution for a given node and parent-set. Note that the runtime is polynomial even if both $\Delta$ and $d$ are $O(\log n)$. If $d$ is unbounded, then the runtime can grow at an exponential $2^{d \log (\Delta k)}$ rate. Note that, for unbounded $d$, an exponential dependence on the runtime and sample complexity is inevitable since chordal-structured indegree-$(n-1)$ Bayes nets with a fixed skeleton (the complete graph on $[n]$) can capture arbitrary $n$-dimensional distributions (we do not use faithfulness or similar assumptions for distribution learning).
\end{rem}
}

\paragraph{Learning Tree-structured Distributions}
By {\em tree-structured distributions} (or simply, trees when the meaning is clear), we mean Bayes nets whose underlying DAG has in-degree 1. They can be equivalently defined as undirected Markov models over (undirected) trees. The celebrated work of  \cite{DBLP:journals/tit/ChowL68} developed a polynomial time algorithm for learning tree-structured distributions, if the algorithm is provided the exact mutual information between each pair of variables. PAC-learning guarantees with sample complexity bounds came later \citep{DBLP:conf/stoc/DaskalakisP21, bhattacharyya2023near},  In particular, the highlight of these works was establishing that in the realizable setting, i.e., when the samples are being generated by a tree-structured distribution on $[k]^n$, the Chow-Liu algorithm is a PAC-learner with sample complexity $\widetilde{O}(nk^3/\eps)$. While the dependence on $n$ and $\eps$ is tight, it was left as an open question whether a better dependence on $k$ is possible.

Our work answers this in the affirmative:
\begin{theo}\label{thm:intro_tree}
Let $\cC$ be the family of tree-structured distributions over $[k]^n$. There exists an algorithm that for any $\eps>0$, given sample access to a distribution $P^* \in \cC$, returns an $\eps$-approximation $\wh{P}$ of $P^*$ with probability at least $2/3$ and uses $m=\widetilde{O}(nk^2\eps^{-1})$ samples and $\mathrm{poly}(m)$ running time. The distribution $\wh{P}$ is a mixture of distributions from $\mathcal{C}$ and is samplable in polynomial time. 
\end{theo}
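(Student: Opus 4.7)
The plan is to apply the online-learning framework outlined in the introduction with tree-structured distributions as experts. I first discretize the class $\cC$ by enumerating every tree skeleton on $[n]$ and, for each of its $n-1$ edges, every joint distribution on $[k]^2$ on a grid of precision $\mathrm{poly}(\eps/nk)$, obtaining a finite expert set $\cN$ with $\log|\cN| = \widetilde O(nk^2)$ and with the property that every tree is $O(\eps)$-close in KL to some member of $\cN$. I then run the $\EWA$ forecaster with log loss $\ell(\wh p, x) = -\log \wh p(x)$ on i.i.d.\ samples $x^{(1)},\dots,x^{(T)} \sim P^*$, mildly clipping each prediction away from $0$ so that single-step losses stay bounded.

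To convert regret to KL, I use the identity $\E_{x\sim P^*}[-\log p(x)] = H(P^*) + \kl(P^* \| p)$. The standard $\EWA$ analysis for log loss gives cumulative regret at most $\log|\cN|$ against any fixed expert; taking expectations, Cesaro-averaging $\wh P = \frac{1}{T}\sum_t \wh p_t$, and invoking joint convexity of $\kl(P^*\|\cdot)$ in its second argument yields
$$\E\!\bracket{\kl(P^* \| \wh P)} \;\le\; \min_{Q\in\cC}\kl(P^*\| Q) \;+\; O\!\paren{\tfrac{\log|\cN|}{T}} \;+\; O(\eps).$$
Since $\kl(P^*\|\wh P)$ is nonnegative, Markov's inequality gives $\kl(P^*\|\wh P) \le 3\,\E[\kl(P^*\|\wh P)]$ with probability at least $2/3$, which is where the multiplicative factor $3$ in the $(\eps,3)$-guarantee originates. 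Choosing $T = \widetilde O(nk^2/\eps)$ drives the regret contribution below $\eps/3$, so the high-probability bound reads $\kl(P^*\|\wh P) \le 3\min_{Q\in\cC}\kl(P^*\|Q) + \eps$, as required.

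The main algorithmic challenge is that $|\cN|$ is exponentially large, so the $\EWA$ mixture $\wh p_t(x) \propto \sum_{Q\in\cN} w_t(Q)\,Q(x)$ with $w_t(Q) \propto \prod_{s<t} Q(x^{(s)})$ cannot be maintained expert-by-expert. The key observation is that for a tree $Q$ with skeleton $\tau$ and marginals $\{q_{ij}\},\{q_i\}$, one has $Q(x) = \prod_{(i,j)\in E(\tau)} q_{ij}(x_i,x_j) / \prod_i q_i(x_i)^{\deg_\tau(i)-1}$, so $\prod_{s<t}Q(x^{(s)})$ factors into edge and vertex terms depending only on the empirical marginals. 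Summing $w_t(Q)$ over all $Q\in\cN$ sharing a common skeleton $\tau$ therefore collapses to a product of edge-wise grid sums $W^{(t)}_{ij}$, and summing further over skeletons equals $\sum_\tau \prod_{(i,j)\in E(\tau)} \widetilde W^{(t)}_{ij}$, which by Kirchhoff's matrix-tree theorem is a determinant. Sampling $\tau$ proportional to its weight is then polynomial-time, for example via Wilson's weighted loop-erased random walk or via $n-1$ rounds of matrix-tree-based edge conditioning. Given $\tau$, one samples its edge parameters from the induced posterior and then samples $x$ from the resulting tree; this produces a sample from $\wh p_t$, and to sample from $\wh P$ one first picks $t$ uniformly from $[T]$ and then samples from $\wh p_t$.

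The step I expect to be hardest is the joint tuning of the discretization precision, the clipping floor for log loss, and the $\EWA$ learning rate, so as to guarantee simultaneously that (i) every $Q\in\cC$ is $O(\eps)$-well-approximated in $\cN$ after clipping, (ii) $\log|\cN|$ remains $\widetilde O(nk^2)$, and (iii) the regret-to-KL reduction yields the precise $(\eps,3)$-bound from only $\widetilde O(nk^2/\eps)$ samples. A secondary subtlety is ensuring the matrix-tree sampler is consistent with the \emph{clipped} $\EWA$ distribution rather than the unclipped one, which may require injecting a small uniform mass into each edge's parameter grid so that all expert weights remain strictly positive.
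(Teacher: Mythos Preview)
Your high-level approach matches the paper's: discretize the class of trees, run $\EWA$ with log loss, use exp-concavity to get regret at most $\log|\cN|$, convert to expected KL via convexity of $\kl(P^*\|\cdot)$, and apply Markov's inequality for the factor of~$3$. The sample-complexity arithmetic and the use of a matrix-tree theorem for efficient sampling are also the same ideas.

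There is, however, a genuine gap in your sampling argument. You parameterize a tree distribution via the undirected factorization $Q(x) = \prod_{(i,j)\in E(\tau)} q_{ij}(x_i,x_j)\big/\prod_i q_i(x_i)^{\deg_\tau(i)-1}$ and claim that summing the $\EWA$ weight $w_t(Q)$ over all $Q\in\cN$ sharing skeleton $\tau$ collapses to a product of edge-wise grid sums $\prod_{(i,j)} W^{(t)}_{ij}$. But in this parameterization the edge joints $q_{ij}$ are \emph{not} independent: at any vertex $i$ of degree $\ge 2$, the node marginal $q_i$ must agree across all incident edge joints, so the discretization grids at different edges are coupled and the sum over experts does not factor. (If instead you let the $q_{ij}$'s range independently, then $q_i$ is ill-defined and the formula need not give a probability distribution.) The paper avoids this by rooting every tree at node $1$ and parameterizing by the \emph{conditional} distributions $p_e(x_j\mid x_i)$ along each directed edge $e=(i,j)$ together with the root marginal $p_1$; these are genuinely independent parameters, and the $\EWA$ weight then factors cleanly as (root term) $\times$ (arborescence weight $\prod_{e\in E(G)} \mathbf{w}(e)$) $\times$ (per-edge conditional terms). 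Because the resulting edge weights $\mathbf{w}(i,j)$ are asymmetric, the paper invokes Tutte's directed matrix-tree theorem to sample spanning arborescences rather than Kirchhoff's undirected version.

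A smaller point: your worry about clipping predictions so that single-step losses stay bounded is unnecessary for the $(\eps,3)$ guarantee. Log loss is $1$-exp-concave, so the $\EWA$ regret bound $\Reg_T \le \log|\cN|$ holds with $\eta=1$ regardless of boundedness; the paper clips only at the discretization stage (to keep $\log|\cN|$ finite and to control the KL approximation error of the net), not inside the forecaster. This also dissolves your ``secondary subtlety'' about reconciling the sampler with a clipped $\EWA$ distribution.
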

Note that in contrast to Theorem \ref{thm:intro_chordal}, here, the algorithm does not know the true skeleton a priori.
The output distribution $\wh{P}$ is a mixture of exponentially many trees but can nevertheless be sampled in polynomial time by using the matrix-tree theorem, as we explain later. {We note that the dependence of $k^2$ on the sample complexity is tight. This follows from \cite[Theorem 13]{BCD20} (see also \cite{devroye2001density}), which proves that learning a Bayes net with in-degree $d$ requires $\Omega(n k^{d+1})$ samples, and for tree Bayes nets, the in-degree being $d=1$, requiring $\Omega(n k^2)$ samples}. Learning with mixtures of trees has been studied before  (\cite{meila2000learning, kumar2009learning, anandkumar2012learning,selvam2023mixtures}) but in other contexts.

We also note that going beyond trees, the same approach allows us to develop polynomial sample and time algorithms for learning Bayes nets on an unknown DAG whose moralization is promised to have constant vertex cover size. Here, instead of sampling using the matrix-tree theorem, we can utilize a recent result of \cite{harviainen2023revisiting} to sample such DAGs. Details appear in \Cref{sec:learnbayesnetboundedvc}.

\paragraph{Why KL divergence?} 
We briefly discuss why learning in KL divergence is relevant. The study of agnostic learning of distributions in KL divergence goes back to at least three decades ago by the works of \cite{abe1991polynomial} and \cite{dasgupta1997sample,dasgupta1999learning}. The authors argued that given random samples from an unknown distribution $P$, minimizing the KL divergence is the same as maximizing the log-likelihood in expectation, due to the following equation: %
$\kl(P||Q)=-H(P)-\E_{x\sim P} [\log Q(x)],$
where $H(P)$ is the entropy of $P$. KL divergence also appears in the study of density estimation such as Yang-Barron's construction and covering complexity (\cite{yang1999information,wu2017lecture}). \cite{DBLP:conf/stoc/KearnsMRRSS94} also studied the complexity of distribution learning in terms of KL divergence and gave a coding-theoretic interpretation to choosing KL divergence as the distance function. %
Finally, several recent works have investigated the problem of learning high-dimensional distributions in this stronger KL divergence guarantee, such as~\cite{bhattacharyya2023near,daskalakis2019testing,choo2023learning,bhattacharyya2022learning,wang2024optimal}.

\section{Our Techniques}

\paragraph{Online Learning Framework to Learning in reverse KL} Given i.i.d.~samples from a distribution $P^*$, we are trying to learn it. Roughly, for a random $x\sim P^*$, a good approximate Bayes net $P$ should maximize the probability $P(x)$, or equivalently, minimize the expected log-likelihood $\mathbb{E}_{x\sim P^*} \bracket{\log \frac{1}{P(x)}}$. Keeping this in mind, we define the loss of any Bayes net $\wh{P}$ predicted by the algorithm to be $\log \frac{1}{\wh{P}(x)}$ for a sample $x$.

We follow the online learning framework. Here the algorithm $\cA$ observes a set of samples $x^{(1)},\dots, x^{(T)}\sim P^*$ over $T$ rounds from an unknown Bayes net $P^*$. The goal of the algorithm is to learn a distribution $\wh{P}$ which is close to $P^*$. After observing each sample $x^{(t)}$, $\cA$ predicts a Bayes net $\wh{P}_t$ and incurs a loss of $\log \frac{1}{\wh{P}_t(x^{(t)})}$ for this round. However, there is a set of experts $\cE$ to help $\cA$. For simplicity, we can assume each expert $E \in \cE$ is simply one Bayes net among all possible Bayes nets. Had $\cA$ stuck to any particular expert $E\in \cE$, it would incur a total loss $\sum_{t=1}^T \log \frac{1}{E(x^{(t)})}$ over all the $T$ rounds. The algorithm can change the experts in between or do some randomized strategy for choosing the expert among $\cE$. Let $\wh{P}_t$ be its prediction after round $t$. The {regret} is defined to be the difference between the loss of the algorithm and that of the best expert:
 $\sum_{t=1}^T \log \frac{1}{E(x^{(t)})}-\min_{E\in\cE} \sum_{t=1}^T \log \frac{1}{E(x^{(t)})}$.

In our setting, the expert set consists of one Bayes net per DAG from the class of DAGs under consideration (e.g., acyclic orientations of a given skeleton). To associate a Bayes net with a DAG, we approximately learn the conditional distributions at each node using the {\em add-one } or {\em Laplace} estimator on a separate set of samples.  We have the guarantee that these finitely many Bayes nets form a ``cover'' for the class of Bayes nets we wish to learn.

We relate the regret mentioned above to the expected average of the  KL divergence over the rounds: $\mathbb{E}[\frac{1}{T}\kl(P^*||\widehat{P}_t)]$. Once we control the average KL divergence, using the convexity of KL, we can show that the mixture distribution $\frac{1}{T} \sum_{t=1}^T \widehat{P}_t$ is also close to $P^*$. Finally, we translate the above bounds from expectation to high probability using McDiarmid's bounded difference inequality.

We use known bounds on the regret for two classic online learning algorithms: the Exponential Weighted Average (\EWA) algorithm and the Randomized Weighted Majority (\RWM) algorithm. \EWA\ returns us the mixture $\frac{1}{T} \sum_{t=1}^T \widehat{P}_t$ which improperly learns $P^*$ in (reverse) KL. \RWM\ returns a random Bayes net $\wh{P}$ which properly learns $P^*$ in expected KL. The pseudocode for these algorithms is given in \Cref{alg:meta_improper_intro} and \Cref{alg:meta_proper_intro}.

\begin{minipage}{0.42\textwidth}
\begin{algorithm}[H]
\setcounter{AlgoLine}{0}
	\caption{\EWA-based learning for Bayes nets}\label{alg:meta_improper_intro}
	\SetKwInOut{Input}{Input}
	\SetKwInOut{Output}{Output}
	\SetKwProg{myfn}{function}{}{}
    \Input{\,\, ${\cN} = \{P_1,\ldots,P_N\}, T$, hyperparameter $\eta > 0$.}
	  \Output{\,\, Sampler for $\widehat{P}$.}
	$w_{i,0} \gets 1 \mbox{ for each } i \in [N].$ \
    
	  \For{$t \gets 1$ to $T$}{%
		  $\mbox{Observe sample } x^{(t)} \sim P^\ast$. \
        
		Update $w_{i,t} \gets w_{i,t-1} \cdot P_i(x^{(t)})^\eta$ for each $i \in [N]$.
	}
    
	\myfn{\rm\textsc{EWA-Sampler}()}{%
        Sample $t \gets [T]$ uniformly, then sample $i \sim [N]$ with probability $\frac{w_{i,t-1}}{\sum_{j \in [N]} w_{j,t-1}}$.\
		
        \KwRet $x \sim P_i$. \
	}
	
    \KwRet \textsc{EWA-Sampler}
    \tcc{This is a sampler for $\wh{P}$.}
\end{algorithm}
\end{minipage}\hfill\begin{minipage}{0.46\textwidth}
\begin{algorithm}[H]
\setcounter{AlgoLine}{0}
	\caption{\RWM-based learning for Bayes nets}\label{alg:meta_proper_intro}
	\SetKwInOut{Input}{Input}
	\SetKwInOut{Output}{Output}
	\Input{\,\, ${\cN} = \{P_1,\ldots,P_N\}, T$, hyperparameter $\eta > 0$.}
	\Output{\,\, $\wh{P} \in \cN$.}
	$w_{i,0} \gets 1 \mbox{ for each } i \in [N].$ \
	
    \For{$t \gets 1$ to $T$}{
        Sample $i_t \mbox{ from } [N]$ with $\Pr(i_t = i) = \frac{w_{i,t-1}}{\sum_{j \in [N]} w_{j,t-1}}$. \
		
        Observe sample $x^{(t)} \sim P^\ast$. \
		
        \For{$i \in [N]$}{
			$w_{i,t} \gets w_{i,t-1} \cdot P_i(x^{(t)})^\eta$.
		}
	}
    
    Sample $t$ uniformly from $[T]$. \
	
    \KwRet $\wh{P} \gets P_{i_t}$. \
\end{algorithm}
\end{minipage}

\textbf{Efficient Learning of Restricted Classes of Bayes Nets} Our learning algorithm for Bayes nets mentioned above is sample-optimal but not time-efficient in general since the number of experts to be maintained is of exponential size. However, we observe that for special cases of Bayes nets, we can efficiently sample from the experts according to the randomized strategy of the algorithm. As a remark, the idea that the computational barrier of \RWM\ or \EWA\ may be side-stepped by developing efficient sampling schemes was also used in a recent work on fast equilibrium computation in structured games \citep{DBLP:conf/soda/BeagleholeHKLL23} and partly motivated our work.

To see the simplest example of this idea, suppose $\cP = \{P_1, \dots, P_N\}$ is a set of  distributions over $[k]$, and let $\cP^{\otimes n} = \cP \times \cP \times \cdots \times \cP$ be a set of product distributions over $[k]^n$. Each element of $\cP^{\otimes n}$ is indexed as $P_{\mathbf{i}}$ for $\mathbf{i}=(i_1,\dots, i_n)$, so that $P_{\mathbf{i}}(x) = \prod_{j=1}^n P_{i_j}(x_j)$.
The size of $\cP^{\otimes n}$ is clearly $N^n$, so it is infeasible to work with it directly. The \RWM\ algorithm maintains a distribution over $\cP^{\otimes n}$, so that the probability that \RWM\ picks $P_{\mathbf{i}}$ for its prediction $\wh{P}_t$  at time $t$ is proportional to $\prod_{s=1}^{t-1} P_{\mathbf{i}}(x^{(s)})^\eta$, where $x^{(s)}$ is the observed sample at time $s$ and $\eta>0$ is a parameter. Therefore:
\[
\Pr_{\RWM}[\wh{P}_t = P_{\mathbf{i}}] = \frac{\prod_{s=1}^{t-1} P_{\mathbf{i}}(x^{(s)})^\eta}{\sum_{\mathbf{i}'} \prod_{s=1}^{t-1} P_{\mathbf{i}'}(x^{(s)})^\eta} = \prod_{j=1}^n \frac{\prod_{s=1}^{t-1} P_{i_j}(x^{(s)}_j)^\eta}{\sum_{i'_j} \prod_{s=1}^{t-1} P_{i'_j}(x^{(s)}_j)^\eta}.
\]
The crucial observation is that \RWM\ maintains a product distribution over product distributions, and so we can sample each $P_{i_j}$ from $\cP$ independently.

When the underlying Bayes net is a tree, i.e. of indegree 1, we show that \RWM\ samples a random rooted spanning arborescence from a weighted complete graph. The probability to output a particular arborescence $A$ is proportional to $\prod_{e \in A} w_e$ where each $w_e$ is a weight that can be explicitly computed in terms of the observed samples and the parameters of the algorithm. It is well-known that the \emph{matrix-tree theorem} (more precisely, \emph{Tutte's theorem}) for counting weighted arborescences can be used for this purpose, and hence, we obtain an alternative to the Chow-Liu algorithm for approximately learning a tree Bayes net efficiently and sample-optimally. 

Next, we generalize our algorithm to polytree-structured Bayes nets where the underlying skeleton is acyclic. Here, we are assuming that the skeleton is given, so that the goal of the algorithm is to learn an acyclic orientation of the skeleton. For simplicity, suppose the skeleton is known to be the path. Given a particular orientation of the edges, we obtain a particular Bayes net structure. Once the structure is fixed, the conditional probability distribution corresponding to each edge parent$\rightarrow$child is set according to the empirical statistics in a separate batch of samples. This will completely specify a Bayes net $P$, which can assign probability $P(x^{(t)})$ for the sample $x^{(t)}$. Therefore, we can also compute the total loss $\ell_P=\sum_{t=1}^T \log P(x^{(t)})^{-1}$. Then, each structure $P$ will be chosen proportional to $e^{-\eta \ell_P}$ in the \RWM\ algorithm.
In order to sample a particular Bayes net among the entire class using RWM, we need to first compute the normalization constant of the \RWM\ sampler's distribution: $Z:=\sum_{P\in \cP} e^{-\eta\ell_P}$ over the class $\cP$ of all (discretized) path Bayes nets. A particular path Bayes net $P$ will be chosen with probability $e^{-\eta \ell_P}/Z$ by the RWM's sampler. We first show how to compute $Z$ efficiently using dynamic programming. 

We now show how to compute $Z$ by induction on the set of vertices of the path. 
Suppose $Z_j$ is the normalization constant obtained by only restricting to the first $j+1$ nodes in the path. That is, if $\cP_j$ is the class of Bayes nets corresponding to all orientations of the undirected path on $j+1$ nodes, then $Z_j = \sum_{P \in \cP_j} e^{-\eta \ell_P}$, where $\ell_P$ only computes the loss based on the first $j+1$ variables. For the induction, we maintain more refined information for each $j$. Let $\cP_{j, \leftarrow}$ and $\cP_{j, \rightarrow}$ be the class of all discretized Bayes nets on $j+1$ variables with a path skeleton and the last edge pointing left and right, respectively. Correspondingly, define $Z_{j, \leftarrow}$ and $Z_{j, \rightarrow}$; clearly, $Z_j = Z_{j, \leftarrow} + Z_{j, \rightarrow}$. Inductively, assume that  $Z_{j, \leftarrow}$ and $Z_{j, \rightarrow}$ are already computed.
We then need to compute $Z_{j+1, \leftarrow}$ and $Z_{j+1, \rightarrow}$.

If the $(j+1)$-th edge orients rightward, then the parents of nodes $1, \dots, j+1$ do not change, while the new node $j+2$ has parent $j+1$. We can accommodate this new edge by simply adding the negative log of the conditional probability due to this new edge to the loss restricted to the first $j+1$ variables. We can compute $Z_{j+1,\rightarrow}=(Z_{j,\leftarrow}+Z_{j,\rightarrow})e^{-\eta \Delta}$, by computing $\Delta=\sum_{t=1}^T \log P(x^{(t)}_{j+2}\mid x^{(t)}_{j+1})^{-1}$.

If the $(j+1)$-th edge orients leftward, the adjustment is slightly trickier as node $j+1$ will get a new parent $j+2$, while the new node $j+2$ has no parent. In that case, we need to first subtract out the previous sum of negative log conditional probabilities at $j+1$. Let us define:
\begin{equation*}
\begin{aligned}
\Delta_1 &=\sum_{t=1}^T \log P(x^{(t)}_{j+1} \mid x^{(t)}_{j})^{-1},\,\,\Delta_2 = \sum_{t=1}^T \log P(x^{(t)}_{j+1})^{-1},\,\,\Delta_3 = \sum_{t=1}^T\log P(x^{(t)}_{j+1} \mid x^{(t)}_{j},x^{(t)}_{j+2})^{-1},\\
\Delta_4 &= \sum_{t=1}^T\log P(x^{(t)}_{j+1} \mid x^{(t)}_{j+2})^{-1}, \Delta_5 =\sum_{t=1}^T \log P(x^{(t)}_{j+2})^{-1}.\\
\end{aligned}
\end{equation*}
If node $j$ is not a parent of node $j+1$, then node $j+1$ contributed $\Delta_2$ loss to $Z_{j, \leftarrow}$ while now it contributes $\Delta_4$ loss to $Z_{j+1, \leftarrow}$. Otherwise, it contributed $\Delta_1$ loss to $Z_{j, \rightarrow}$ while now it contributes $\Delta_3$ loss to $Z_{j+1, \leftarrow}$. The new node $j+2$ contributes $\Delta_5$ loss to $Z_{j+1, \leftarrow}$ independent of what happens to the other variables. Summarizing:
$$Z_{j+1,\leftarrow}=Z_{j,\leftarrow}\cdot e^{-\eta(\Delta_4-\Delta_2+\Delta_5)}+Z_{j,\rightarrow}\cdot e^{-\eta(\Delta_3-\Delta_1+\Delta_5)}.$$ 
It is easy to see that these updates can be performed efficiently using an appropriate dynamic programming table. Once we have computed the total sum $Z=Z_{n,\leftarrow}+Z_{n,\rightarrow}$, sampling a structure according to the sampler's distribution can simply be done by suitably unrolling the DP table.

\begin{figure}[ht!]
\caption{{\small Given a rooted polytree skeleton, for each node $v$, and for each fixed orientation of edges incident to $v$, we maintain the total weight of all consistent orientations of the subtree rooted at $v$. Above, the orientations of edges incident to $B$ and $C$ are fixed. This is needed when computing the weight for the subtree rooted at $A$, since in the first two panels, the in-degree of $C$ change from 1 to 2, while in the second two panels, $C$'s in-degree does not change.}\label{fig:polytree}}
\centering
    \subfigure{ \includegraphics[width=0.2\textwidth]{figures/ex_polytree_orient1.tikz}
    }
    \subfigure{
    \includegraphics[width=0.2\textwidth]{figures/ex_polytree_orient2.tikz}
    }
    \subfigure{
    \includegraphics[width=0.2\textwidth]{figures/ex_polytree_orient3.tikz}
    }
    \subfigure{
    \includegraphics[width=0.2\textwidth]{figures/ex_polytree_orient4.tikz}
    }
\end{figure}

The argument described above extends to learning bounded indegree polytrees and bounded indegree chordal graphs. For polytrees, the idea is illustrated in \Cref{fig:polytree}. For chordal graphs, the algorithm first builds a clique tree decomposition and uses this structure for dynamic programming. The obvious issue with chordal graphs is that some orientations may lead to cycles, unlike the case for polytrees. However, chordal graphs enjoy certain nice property (see \Cref{lem:chordal_bij_indegree}) that allows us to independently perform weighted counting/sampling of acyclic orientations in each subtree of the clique tree.

\textbf{Agnostic Learning via Maximum Likelihood Estimation} An arguably more natural approach to PAC learning in KL divergence is to maximize the empirical log-likelihood (MLE) over a suitably-discretized class of distributions (e.g., see \cite{feldman2008learning}, Theorem 17). The Chow-Liu algorithm for tree distributions can also be viewed through this lens. Note however that, despite a long history of study, Chow-Liu is not known to attain the sample complexity bound in \cref{thm:intro_tree} for learning trees in the realizable setting.

For the problem of learning polytrees and chordal-structured distributions, we can in fact adapt our algorithm to maximize likelihood and thus, get a sample complexity bound which is comparable to our \Cref{thm:bn_learn_proper} (up to log factors) for proper learning in KL. But it does not yield the near-optimal bounds (for constant failure probability) that we get for improper learning (\Cref{thm:bn_learn_imp}) in the realizable case.
The challenge of implementing the Maximum Likelihood (ML) algorithm over an exponential-sized class of distributions is \emph{efficiency} --- a naive approach would take exponential time. The dynamic programming algorithms that we develop for efficient weighted counting and sampling of DAG structures (which we use to implement \EWA \ / \RWM) can also be used to implement MLE efficiently for polytree/chordal-structured distributions given the skeleton. We give an outline of this in \Cref{app:efficient_ml}. %

\section{Related Works}\label{sec:related}
\cite{hoffgen1993learning} gave the first sample complexity bounds for agnostic learning of a
Bayes net with known structure from samples in KL divergence. This work also gave an efficient algorithm for special cases such as trees using the classical Chow-Liu algorithm. Subsequently, \cite{dasgupta1997sample} gave an efficient algorithm for learning an unknown Bayes net (discrete and Gaussian) on a fixed structure. This result was improved to a sample-optimal learning of fixed-structure Bayes nets in \cite{DBLP:journals/corr/abs-2002-05378,bhattacharyya2023near}. 

The general problem of distribution learning of Bayes networks with unknown DAG structure has remained elusive so far. It has not been shown to be NP-hard, although some related problems and specific approaches are NP-hard \citep{DBLP:conf/aistats/Chickering95,chickering2004large, DL97, DBLP:conf/soda/KargerS01}.  Many of the early approaches required \emph{faithfulness}, a condition which permits learning of the Markov equivalence class, e.g. \cite{spirtes1991algorithm, chickering2002optimal, friedman2013learning}. Finite sample complexity of such algorithms has also been studied, e.g. \cite{friedman1996sample}. Specifically for polytrees, \cite{DBLP:journals/corr/abs-1304-2736,DBLP:conf/aaai/GeigerPP90} studied recovery of the DAG for polytrees under the infinite sample regime and in the realizable setting, while \cite{choo2023learning} gave finite sample complexity for this problem. \cite{gao2021efficient} studied the more general problem of learning Bayes nets, and their sufficient conditions simplified in the setting of polytrees.

A notable prior work in the context of the current paper is the work by \cite{abbeel2006learning}, which also explores the improper learning of Bayesian networks with polynomial sample and time complexities. However, our research diverges from theirs in three critical ways: firstly, their study does not offer any proper learning algorithms; secondly, it lacks agnostic learning guarantees; and thirdly, their approach does not achieve optimal sample complexity in the realizable setting. While they do demonstrate a ``graceful degradation'' as inputs deviate from the hypothesis class, this does not equate to a definitive agnostic learning guarantee as provided in our work. On a positive note, their research does attain polynomial sample and time complexities for learning any Bayesian network with a bounded total degree in the realizable setting. It is worth noting that our results for distributions with chordal skeleton are applicable even when the total degree is unbounded, provided that the indegree remains bounded, a scenario where the findings of Abbeel, Koller, and Ng would not be applicable.

\begin{table}[h!]
\setlength{\tabcolsep}{2pt}
\renewcommand{\arraystretch}{1.5}
    \centering
    \begin{tabular}{llccl}
        \toprule
         ~ & Structure & Efficient? & Agnostic? & {Additional assumptions}\\
        \midrule
        \cite{bhattacharyya2023near} & Tree & Yes & Yes & None\\
        \midrule
        \cite{choo2023learning} & Polytree  & Yes  & No  & Known skeleton\\
        \midrule
        \cite{abbeel2006learning} & Bounded total degree~\tablefootnote{This work studies a more general notion of factor graphs.} & Yes  & No  & None\\
        \midrule
        \cite{BCD20} & Bounded in-degree  & No  & No  & None\\
        \midrule
        \multirow{2}{*}{Our results} & Tree & Yes  & Yes  & None\\
        & \makecell[l]{Chordal skeleton, \\bounded in-degree} & Yes  & Yes  & \makecell[l]{Known skeleton}\\
        \bottomrule
    \end{tabular}
    \caption{Comparison with existing works.}
    \label{tab:example}
\end{table}

\paragraph{Online Learning of Structured Distributions} 
The approach of using the online learning framework for distribution learning has been considered in the literature~\cite{catoni1997mixture,yang2000mixing,van2023high}.
These works use \EWA \ algorithm and output the mixture distribution. However, they primarily focus on minimizing the number of samples, and are not computationally efficient in general. Since we are interested in computationally efficient learning of high-dimensional distributions, their approaches do not directly translate to our context.
The closest we get is the \emph{Sparsitron} algorithm by Klivans and Meka~(\cite{DBLP:conf/focs/KlivansM17}) which learns an unknown Ising model from samples. However, Sparsitron is typical to Ising models where the conditional distribution at any component follows a logistic regression model which the Sparsitron algorithm learns.

Although not for distribution learning, a similar use of the multiplicative weights update method appears in Freund and Schapire's well-known AdaBoost algorithm (\cite{freund1997decision}) where the algorithm implicitly creates a sequence of probability measures. Later work on the hard-core lemma, such as \cite{barak2009uniform}, explicitly focus on efficient sampling from the iterates of multiplicative weights update. 

\paragraph{Robust Learning}
In the field of distribution learning, it is commonly assumed that all samples are consistently coming from an unknown distribution. However, real-world conditions often challenge this assumption, as samples may become corrupted—either removed or substituted by samples from an alternate distribution. Under such circumstances, the theoretical assurances of traditional algorithms may no longer apply. This discrepancy has spurred interest in developing robust learning algorithms capable of tolerating sample corruption. Recent years have seen notable advancements in this area, including the development of algorithms for robustly learning Bernoulli product distributions~\citep{diakonikolas2019robust}, and enhancing the robustness of learning Bayes nets~\citep{cheng2018robust}. See \cite{lai2016agnostic,diakonikolas2017being,diakonikolas2018robustly,balakrishnan2017computationally,hopkins2018mixture,kothari2018robust,diakonikolas2018robustly,diakonikolas2018learning, DBLP:conf/iclr/ChengL21,canonne2023full} and the references therein for a sample of current works in this area. These works primarily focus on guarantees with respect to the total variation distance. 

Of particular relevance is the {\em {\rm TV}-contamination model}. Here, if the distribution to be learnt is $P$, one gets samples from a `contaminated' distribution $Q$ with $\TV(P, Q)\leq \eta$. Note that this is a stronger model than {\em Huber contamination}~(\cite{huber1992robust}), where the noise is restricted to be additive, meaning that an adversary adds a limited number of noisy points to a set of uncontaminated samples from $P$. 

One can interpret our results using a {\em $\KL$-contamination model}. If the distribution to be learnt is an unknown $P$ promised to belong to a class $\cC$, the contaminated distribution $Q$ is some distribution satisfying $\kl(Q\|P)\leq \eta$. The noise is again non-additive, but the model is weaker than TV-contamination. Any $(\eta, A)$ approximation for $Q$ with respect to $\cC$ yields a distribution $\wh{P}$ such that $\kl(Q \| \wh{P}) \leq (A+1)\eta$. Therefore, we get that for Hellinger distance:
\[
\hel(P, \wh{P}) \leq \hel(P,Q)+\hel(\wh{P}, Q) \leq \sqrt{\eta} + \sqrt{(A+1)\eta} \leq \sqrt{(2A+3)\eta}.
\]
Similarly, one can also bound $\TV(P, \wh{P}) = O(\sqrt{\eta})$ for constant $A$. To the best of our knowledge, the KL-contamination model has not been explicitly considered before, but if one were to directly apply the results of \cite{cheng2018robust} with the assumption that $\kl(Q \| P) \le \eta$, one would obtain a distribution $\wh{P}$ such that $\TV(P, \wh{P}) = O(\sqrt{\eta \log 1/\eta})$, worse than ours by a $\sqrt{\log 1/\eta}$ factor which seems unavoidable using their approach \citep{diakonikolas2022optimal}.  Moreover, their results require that $\cC$ be a class of {\em balanced} Bayes nets, a technical condition which is not needed for our analysis~\footnote{A Bayes net is said to be \emph{$c$-balanced} for some $c>0$ if all conditional probability table values $\in [c,1-c]$.}. {However, we would like to note that $\kl(Q || P)$ can be large as compared to $\TV(Q,P)$, so this holds when $\kl(Q || P)$ is small.} 

\section{Open Problems}\label{sec:open_problem} 
Our work opens up several interesting research avenues.
\begin{itemize}
\item
An intriguing question is whether we can extend our result for chordal graphs of bounded indegree to general graphs of bounded treewidth and bounded indegree. Interestingly,  \cite{stanley1973acyclic} showed that counting the number of acyclic orientations reduces to the evaluation of the Tutte polynomial at the point (2,0), and the Tutte polynomial can be evaluated efficiently for bounded treewidth graphs \citep{noble1998evaluating, andrzejak1998algorithm}. This is relevant because the weights that \EWA/\RWM\ maintain are in some sense a weighted count of the number of acyclic orientations of the skeleton. However, we did not find a deletion-contraction recurrence for these weights, and so their connection to the Tutte polynomial is unclear. 
\item
Another important follow-up direction for learning Bayes nets would be to search over {\em Markov equivalence classes} rather than DAG's. A Markov equivalence class corresponds to the set of DAGs that represent the same class of Bayes nets, and they can be represented as partially directed graphs ({\em essential graphs}) that satisfy some special graphical properties. It would be interesting to explore if the structure of essential graphs can be used to speed up weighted counting and sampling; indeed, a very recent work by \cite{sagar24} gives a polynomial time algorithm for uniformly sampling an essential graph that is consistent with a given skeleton.
 
\item 
What is the role of {\em approximate sampling} in the context of distribution learning? So far, in this work, we have only used exact sampling algorithms for spanning arborescences and acyclic orientations. Can Markov chain techniques be brought to good use here? Our work further motivates settling the complexity status of approximately counting the number of acyclic orientations of an undirected graph; this question is a long-standing open problem in the counting/sampling literature. 
\item
Finally, while we have restricted ourselves to learning Bayes nets here, our framework is quite general and also applies to learning other classes of distributions, such as Ising models and factor models. We leave these questions for future work.
\end{itemize}

\paragraph*{Organization of the paper}  The rest of the paper is organized as follows. In \Cref{sec:prelim}, we present the preliminaries required for this work. 
\Cref{sec:agnosticlearningbayesnet} 
establishes the connection between regret in online learning to KL divergence in the scenario of agnostic learning of distributions. It also presents several necessary techniques from online learning along with the \EWA \ and \RWM \ algorithms that will be used later in our work. In \Cref{sec:learnchordaldist}, we present our results on learning chordal-structured distributions. In \Cref{sec:learntreedist}, we discuss our results on learning tree-structured distributions and present our alternative proper learning algorithm. In \Cref{sec:treelearnlb}, we give the lower bound of learning tree-structured distributions. In \Cref{sec:learnbayesnetboundedvc}, we design efficient learning algorithms for Bayes nets over graphs with bounded vertex cover. In \Cref{app:efficient_ml}, we outline how our algorithms can be adapted to efficiently compute maximum likelihood.

\section{Preliminaries}\label{sec:prelim}
For integers $0<m\leq n$, let $[n]$ denote the set $\{1, \dots, n\}$, and let $[m, n]$ denote the set $\{m, m+1, \dots, n\}$. For any $\eta > 0$, let $\exp_\eta(u)$ denote $e^{-\eta u}$. For concise expressions and readability, we use the asymptotic complexity notion of $\widetilde{O}(\cdot)$, where we hide poly-logarithmic dependencies of the parameters. By stating i.i.d samples from a distribution $P$, we mean independently and identically distributed samples from $P$. For a positive integer $\ell$, $\mathsf{Unif}([\ell])$ denotes the uniform distribution on the set $[\ell]$, where each $i \in [\ell]$ is chosen with equal probability of $1/\ell$.

\subsection{Probability Distributions}
We let $\Delta(\cD)$ denote the set of probability distributions over the elements of a set $\cD$.  Let $P$ and $Q$ be two such distributions over $\cD$:
\begin{itemize}
    \item[--] The {\em $\mathrm{KL}$-divergence} between $P$ and $Q$ is defined as:
$\kl(P,Q) = \sum_{x \in \cD} P(x) \log \frac{P(x)}{Q(x)}$.

\item[--] The {\em total variation (TV) distance} between $P$ and $Q$ is defined as:
$
\TV(P,Q) = \sum_{x \in \cD} |P(x)-Q(x)|.
$
\end{itemize}

\begin{lem}[Pinkser's inequality]\label{lem:pinkser}
Let $P$ and $Q$ be two probability distributions defined over the same sample space $\cD$. Then the following holds:
$$\TV(P,Q) \leq \sqrt{\frac{\kl(P||Q)}{2}}$$
\end{lem}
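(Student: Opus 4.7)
The plan is to reduce the general (multi-dimensional) inequality to a two-point (binary) inequality via a partition and the data-processing property of KL, and then prove the binary version by calculus.

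First, I would define the event $A = \{x \in \cD : P(x) \geq Q(x)\}$ and write $p = P(A)$, $q = Q(A)$. A short calculation using the definition of $\TV$ given in the excerpt shows that the half-total-variation equals $p - q$, so it suffices to prove $2(p-q)^2 \leq \tfrac{1}{2}\kl(P\|Q)$ in the form the lemma states. Next, I would apply the log-sum inequality (equivalently, data processing through the two-cell partition $\{A, A^c\}$) to obtain
$$\kl(P\|Q) \;\geq\; p \log \frac{p}{q} + (1-p)\log\frac{1-p}{1-q} \;=:\; \kl_b(p\|q),$$
thereby collapsing the problem to Bernoulli random variables with parameters $p$ and $q$.

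The main step, and the only slightly technical one, is the scalar inequality
$$\kl_b(p\|q) \;\geq\; 2(p-q)^2 \qquad \text{for all } p,q\in[0,1].$$
For this I would fix $p$ and set $f(q) = \kl_b(p\|q) - 2(p-q)^2$. Direct differentiation in $q$ gives $f(p) = 0$ and $f'(p) = 0$, while
$$f''(q) \;=\; \frac{p}{q^2} + \frac{1-p}{(1-q)^2} - 4.$$
The non-negativity of $f''(q)$ on $(0,1)$ follows from AM--GM (or Cauchy--Schwarz): writing $\tfrac{p}{q^2} + \tfrac{1-p}{(1-q)^2} \geq \bigl(\sqrt{p}+\sqrt{1-p}\bigr)^2 / \bigl(q(1-q)\cdot \text{something}\bigr)$ does not quite land, so more cleanly I would just use $\tfrac{a^2}{x}+\tfrac{b^2}{y} \geq \tfrac{(a+b)^2}{x+y}$ with $a=\sqrt{p}$, $b=\sqrt{1-p}$, $x=q^2$, $y=(1-q)^2$ after a short rearrangement, or bound $q(1-q)\leq 1/4$. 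Hence $f$ attains its minimum at $q=p$ and $f\geq 0$ everywhere.

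Putting the pieces together, $\kl(P\|Q) \geq \kl_b(p\|q) \geq 2(p-q)^2 = 2\bigl(\tfrac12 \TV(P,Q)\bigr)^2 = \tfrac{1}{2}\TV(P,Q)^2$, which rearranges to the claimed bound. The main obstacle is really just the scalar Bernoulli inequality; everything else is bookkeeping. A cleaner alternative would be to verify the scalar bound by integrating the identity $\tfrac{d}{dq}\kl_b(p\|q) = \tfrac{q-p}{q(1-q)}$ against $4(p-q)$ and using $q(1-q)\leq 1/4$, which avoids the second-derivative argument altogether and is the route I would probably actually take.
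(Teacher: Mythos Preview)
The paper does not prove this lemma; it is stated as the classical Pinsker inequality and used as a black box. So there is no ``paper's proof'' to compare against, and your outline is essentially the standard textbook argument.

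Two remarks on the proposal itself. First, the second-derivative route you sketch does not work as stated: $f''(q) = p/q^2 + (1-p)/(1-q)^2 - 4$ is \emph{not} nonnegative on all of $(0,1)$ (try $p=0.99$, $q=0.9$), and neither the AM--GM nor the Titu/Cauchy--Schwarz rearrangement you mention will rescue it. Your ``cleaner alternative'' at the end is the right move: since $\tfrac{d}{dq}\kl_b(p\|q) = (q-p)/\bigl(q(1-q)\bigr)$ and $q(1-q)\le 1/4$, the derivative of $f(q)=\kl_b(p\|q)-2(p-q)^2$ has the sign of $q-p$, so $f$ is minimized at $q=p$ where it vanishes. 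That argument is complete; drop the $f''$ discussion.

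Second, watch the constant. With the paper's definition $\TV(P,Q)=\sum_x |P(x)-Q(x)|$ (no factor of $\tfrac12$), your chain gives $\kl(P\|Q)\ge 2(p-q)^2 = \tfrac12\TV(P,Q)^2$, i.e.\ $\TV(P,Q)\le \sqrt{2\,\kl(P\|Q)}$, not the $\sqrt{\kl(P\|Q)/2}$ written in the lemma. The discrepancy is in the paper's normalization of $\TV$, not in your argument; the inequality you actually prove is the correct one for this definition.
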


We study distributions over high-dimensional spaces, which can require an exponential amount of space to represent in general. So, for computational efficiency, we focus on distributions from which we can generate samples efficiently. This notion is formally defined below.

\begin{defi}[Efficiently samplable distribution]
A distribution $P$ is said to be {\em efficiently samplable} if there exists  a probabilistic Turing machine $M$ which on input $0^k$ produces a string $x$ such that $|\Pr[M(0^k) = x]-P(x)|\leq 2^{-k}$ and $M$ runs in time $\mathrm{poly}(|x|+k)$. 
\end{defi}

Next we define graphical models of interest in this work. For a directed graph $G$ on $n$ nodes, we will identify its set of nodes with $[n]$. The underlying undirected graph of $G$ is called the {\em skeleton.}
For any $i\in [n]$, let $\mathsf{pa}_G(i)$ denote the set of the parents of node $i$ and $\mathsf{nd}_G(i)$ denote the set of its non-descendants. The subscript $G$ may be removed if it is clear from the context. 

Let us start with the definition of Bayesian networks.

\begin{defi}[Bayesian networks]\label{defi:bayesnet}
A probability distribution $P$ over $n$ variables $X_1, \dots, X_n$ is said to be a {\em Bayesian network (Bayes net in short) on a directed acyclic graph $G$} with $n$ nodes if\footnote{We use the notation $X_S$ to denote $\{X_i : i \in S\}$ for a set $S \subseteq [n]$.} for every $i \in [n]$, $X_i$ is conditionally independent of $X_{\mathsf{nd}(i)}$ given $X_{\mathsf{pa}(i)}$. Equivalently, $P$ admits the factorization:
\begin{equation}\label{eqn:bnfactor}
P(x) = \Pr_{X \sim P}[X=x]= \prod_{i=1}^n \Pr_{X\sim P}[X_i = x_i \mid \forall j \in \mathsf{pa}(i), X_j = x_j] \qquad \text{for all } x.
\end{equation}
\end{defi}

It is well-known that Bayesian networks are efficiently sampleable.

Now we define tree-structured distributions, a subclass of Bayesian networks defined above.

\begin{defi}[Tree-structured distribution]
Let $T$ be a tree, and $G$ be any rooted orientation of $T$. A probability distribution $P$ is said to be {\em $T$-structured} if it is a Bayesian network on $G$. A distribution $P$ is said to be \emph{tree-structured} if it is $T$-structured for some tree $T$.
\end{defi}

Now we define the notion of polytree-structured distributions, which generalizes tree-structured distributions defined before.

\begin{defi}[Polytree-structured distribution]
A directed acyclic graph (DAG) $G$ is said to be a \emph{polytree} if the skeleton of $G$ is a forest. For a positive integer $k \in \N$, $G$ is said to be a $k$-polytree if the in-degree of each node in $G$ is at most $k$.   A distribution $P$ is said to be a \emph{($k$-)polytree-structured} if it is a Bayes net defined over some ($k$-)polytree $G$.
\end{defi}

Finally, we define the notion of Chordal-structured distributions, a class of Bayes nets that generalizes polytree-structured distributions and will be crucially used in this work. 

\begin{defi}[Chordal-structured distribution]
An undirected graph $G$ is said to be \emph{chordal} if every cycle of length at least $4$ has a chord, that is, an edge that connects two non-adjacent vertices on the cycle.  A distribution $P$ is said to be \emph{chordal-structured} if it is a Bayes net defined over a DAG whose skeleton is chordal.
\end{defi}

Clearly, any tree-structured distribution is polytree-structured, and any polytree-structured distribution is chordal-structured.

\adds{
\begin{defi}[Add-one distribution]\label{def:add_one_distribution}
For a directed acyclic graph $G$ with vertex set $[n]$ and a set of samples $S = \{x^{(1)},\ldots,x^{(|S|)}\} \subseteq [k]^n$, the \emph{add-one} or \emph{Laplace distribution} with $G$-structure given the samples $S$, denoted by $\aodist{G}{S}$, is a Bayes net on $G$ and is defined as follows
$$\aodist{G}{S}(x) = \prod_{v \in [n]} \aonodedist{S}{v}{\pa_G(v)}(x_v | x_{\pa_G(v)})$$ where all the node distributions $\aonodedist{S}{v}{\pa(v)}$ are computed as
    $$\aonodedist{S}{v}{\pa(v)}(z_v|z_{\pa(v)}) = \frac{\left|\left\{x^{(i)} \in S \,:\, x^{(i)}_v = z_v \bigwedge x^{(i)}_w = z_w \,\forall w \in \pa(v)\right\}\right| + 1}{\left|\left\{x^{(i)} \in S \,:\, x^{(i)}_w = z_w \,\forall w \in \pa(v)\right\}\right| + k}\,\,\forall\,\,z \in [k]^n.$$
\end{defi}
That is, $\aodist{G}{S}$ is the Bayes net on $G$ where all the node distributions (of $X_v \mid X_{\pa(v)}$) are defined w.r.t the (conditional) add-one or Laplace estimates computed from the samples $S = \{x^{(1)},\ldots,x^{(|S|)}\}$ for each fixing of the parents' values.

}

\dels{
\subsection{Clipping and Bucketing}
We define two operations on probability distributions that will be useful in what follows. Let $P \in \Delta([k])$ for some positive integer $k$, and without loss of generality $P(1)\leq P(2) \leq \cdots \leq P(k)$. Note that $P(k-1)\leq 1/2$. 

Now we define the notion of Clipping below.

\begin{defi}
Given $0<\tau<1/k^2$, if $P(i) \geq \tau$ for $1\leq i <k$, define $\Clip_\tau(P)=P$. Otherwise,  let $m$ such that $P(m)<\tau$ but $P(m+1)\geq \tau$, and define $\Clip_\tau(P)=Q$ where:
\[
Q(i) = 
\begin{cases}
\tau & \text{if }1 \leq i \leq m\\
P(i) & \text{if }m+1 \leq i < k\\
1-\sum_{j=1}^{k-1}Q(j) & \text{if }i=k
\end{cases}
\]
\end{defi}
\noindent Note that  $\Clip_\tau(P)(k)=P(k) - \sum_{i=1}^m (\tau - P(i)) \geq \frac1k - (k-1)\tau \geq \tau$ for $\tau \leq \frac{1}{2k^2}$.

\begin{lem}\label{lem:clip_kl}
For any $P^*, P \in \Delta([k])$, if $\tau\leq 1/2k^2$ and $Q = \Clip_\tau(P)$:
\[
\kl(P^* \| Q) \leq \kl(P^* \| P) + 2k^2\tau.
\]
\end{lem}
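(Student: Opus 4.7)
The goal is to bound the change in $\kl(P^* \| \cdot)$ caused by replacing $P$ with $Q = \Clip_\tau(P)$, and the natural starting point is the telescoping identity
\[
\kl(P^* \| Q) - \kl(P^* \| P) = \sum_{i=1}^k P^*(i) \log \frac{P(i)}{Q(i)}.
\]
If no clipping happens (every $P(i) \geq \tau$), then $Q = P$ and the inequality is trivial, so assume there is some threshold index $m$ with $P(m) < \tau \leq P(m+1)$. We may also assume $\kl(P^* \| P) < \infty$, else the inequality is vacuous; note that after clipping $Q(i) > 0$ for all $i$, so $\kl(P^*\|Q)$ is always finite.

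Now split the sum according to the three regimes in the definition of $Q$. For $i \leq m$ we have $Q(i) = \tau \geq P(i)$, so $\log(P(i)/Q(i)) \leq 0$ and these terms can only help us. For $m+1 \leq i \leq k-1$ we have $Q(i) = P(i)$, so the terms vanish. The only positive contribution comes from $i=k$, so
\[
\kl(P^*\|Q) - \kl(P^*\|P) \leq P^*(k)\log\frac{P(k)}{Q(k)} \leq \log\frac{P(k)}{Q(k)},
\]
using $P^*(k) \leq 1$.

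Finally, bound the single logarithm. A direct calculation (or just using the formula for $Q(k)$) gives $P(k) - Q(k) = \sum_{i=1}^m (\tau - P(i)) \leq m\tau \leq (k-1)\tau$, and the remark right after the definition shows $Q(k) \geq 1/(2k)$ when $\tau \leq 1/(2k^2)$. Combining these with $\log(1+x) \leq x$ for $x \geq 0$ yields
\[
\log\frac{P(k)}{Q(k)} = \log\!\left(1 + \frac{P(k)-Q(k)}{Q(k)}\right) \leq \frac{(k-1)\tau}{1/(2k)} = 2k(k-1)\tau \leq 2k^2\tau,
\]
which closes the proof. The only mildly delicate step is keeping the lower bound on $Q(k)$ clean; everything else is bookkeeping, since the sign of each non-$k$ term is exactly what we need, and $P^*$ never actually enters the bound except through the trivial estimate $P^*(k) \leq 1$.
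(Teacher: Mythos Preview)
Your proof is correct and follows essentially the same approach as the paper: start from the telescoping identity for the KL difference, observe that all terms with $i<k$ are nonpositive since $Q(i)\geq P(i)$ there, and then bound the single remaining term $\log(P(k)/Q(k))$ using $P(k)-Q(k)\leq (k-1)\tau$ together with a lower bound on $Q(k)$. The only cosmetic difference is in the final elementary inequality: the paper writes $\log\frac{P(k)}{Q(k)}\leq \log\frac{1/k}{1/k-k\tau}$ and applies $-\log(1-x)\leq 2x$ with $x=k^2\tau$, whereas you use $Q(k)\geq 1/(2k)$ and $\log(1+x)\leq x$; both land at $2k^2\tau$.
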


\begin{proof}
By definition, $Q(i)\geq P(i)$ for $i<k$ and $Q(k) \geq P(k)-k\tau$. Hence,
\begin{align*}
\kl(P^* \| \Clip_\tau(P)) - \kl(P^* \| P) 
&= \sum_{i=1}^k P^*(i) \log \frac{P(i)}{Q(i)}
\leq \log \frac{P(k)}{Q(k)}
\leq \log \frac{P(k)}{P(k)-k\tau}
\leq \log \frac{1/k}{1/k-k\tau}\\&\leq 2k^2\tau
\end{align*}

where we used the facts that $P(k)\geq 1/k$, $-\log(1-x)\leq 2x$ for $x \leq 1/2$, and $k^2\tau \leq 1/2$. 
\end{proof}

Now we define the bucketing operation. Below, let $Q = \Clip_\tau(P)$ so that $\tau \leq Q(1) \leq Q(2) \leq \cdots \leq Q(k-1)\leq 1/2$. 

\begin{defi}
Given $Q$ as above, and a parameter $\gamma>0$, define $\Bucket_\gamma(Q)$ to be a distribution $R$ such that:
\[
R(i) =
\begin{cases}
\tau \cdot (1+\gamma)^{\lfloor \log_{1+\gamma}(Q(i)/\tau)\rfloor} & \text{if } 1 \leq i <k\\
1-\sum_{j=1}^{k-1}R(j) & \text{if } i=k
\end{cases}
\]
\end{defi}

\begin{lem}\label{lem:bucket_kl}
Suppose $0<\tau<1/2k^2$ and $0\leq\gamma<1$. 
For any $P^*, P \in \Delta([k])$, if $Q =\Clip_\tau(P)$, and $R = \Bucket_\gamma(Q)$, then:
\[
\kl(P^* \| R) \leq \kl(P^* \|Q) + \gamma \leq \kl(P^*\| P) +2k^2\tau + \gamma.
\]
\end{lem}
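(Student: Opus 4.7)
The second inequality is immediate from \Cref{lem:clip_kl}, so the entire task reduces to establishing the first inequality $\kl(P^*\|R)\le \kl(P^*\|Q)+\gamma$. I would write the difference as
\[
\kl(P^*\|R) - \kl(P^*\|Q) \;=\; \sum_{i=1}^{k} P^*(i)\,\log\!\frac{Q(i)}{R(i)},
\]
and then control each term in the sum using the explicit definition of $\Bucket_\gamma$.

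For $1\le i<k$, the definition $R(i)=\tau(1+\gamma)^{\lfloor \log_{1+\gamma}(Q(i)/\tau)\rfloor}$ together with the trivial floor estimate $x-1\le \lfloor x\rfloor\le x$ yields the two-sided bound
\[
\frac{Q(i)}{1+\gamma}\;\le\; R(i)\;\le\; Q(i).
\]
The right inequality is the key observation: it says bucketing only \emph{rounds probabilities down} on the first $k-1$ coordinates. Consequently $\sum_{i<k}R(i)\le \sum_{i<k}Q(i)$, so by the residual definition of $R(k)$ we get $R(k)\ge Q(k)$, and therefore $\log(Q(k)/R(k))\le 0$. For the remaining terms, the left inequality gives $\log(Q(i)/R(i))\le \log(1+\gamma)\le \gamma$.

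Combining these per-coordinate bounds,
\[
\sum_{i=1}^{k} P^*(i)\,\log\!\frac{Q(i)}{R(i)} \;\le\; \gamma\sum_{i<k} P^*(i) \;+\; 0 \;\le\; \gamma,
\]
which proves the first inequality. Chaining with \Cref{lem:clip_kl} then yields the full claim. There is no real obstacle here; the only subtlety is the asymmetric treatment of the last coordinate. The construction is specifically designed so that the floor function in the bucketing only \emph{loses} mass on coordinates $i<k$ and dumps it into $R(k)$, so $R(k)\ge Q(k)$ automatically and contributes no positive term to the KL difference. Had $R$ been defined by rounding all $k$ coordinates independently (and then renormalizing), one would need a more delicate argument; the residual definition sidesteps this entirely.
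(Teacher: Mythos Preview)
Your proposal is correct and follows essentially the same argument as the paper: observe $R(i)\le Q(i)$ for $i<k$ (hence $R(k)\ge Q(k)$), bound each term $\log(Q(i)/R(i))$ by $\log(1+\gamma)$ for $i<k$ and by $0$ for $i=k$, and sum. Your write-up is slightly more detailed in deriving the two-sided bound from the floor function, but the structure and key observations are identical.
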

\begin{proof}
The second inequality follows from \Cref{lem:clip_kl}; so we focus on the first. Note that $R(i)\leq Q(i)$ for $i<k$, and so, $R(k)\geq Q(k)$. 
\[
\kl(P^* \| R) - \kl(P^* \|Q) = \sum_{i=1}^k P^*(i) \log\frac{Q(i)}{R(i)} \leq \sum_{i=1}^{k-1}P^*(i) \log\frac{Q(i)}{R(i)} \leq \log(1+\gamma) \leq \gamma.
\]
\end{proof}
\noindent For future reference, note that using the notation above, $R(k) \geq Q(k) \geq \tau$ (for $\tau \leq \frac{1}{2k^2}$). 
}
\subsection{PAC Distribution Learning}

A {\em distribution learning} algorithm takes as input a sequence of i.i.d.~samples generated from a probability distribution $P$ and outputs a description $\Theta$ of a distribution $\wh{P}_\Theta$ as an estimate for $P$. In the following, $\cC$ is a family of probability distributions over $[k]^n$, and $P$ is a distribution over $[k]^n$ not necessarily in $\cC$. Let us first define the notion of approximation of a distribution that will be used in this work. 

\begin{defi}[$(\eps,A)$-approximation of a distribution]
A distribution $\wh{P}$ is said to be an {\em $(\eps, A)$-approximation for $P$ with respect to $\cC$} if:
\[
\kl(P \| \wh{P}) \leq A \cdot \inf_{Q \in \cC} \kl(P \| Q) + \eps.
\]
\end{defi}
When $A=1$, $\wh{P}$ is said to be an {\em $\eps$-approximation} for $P$. Note that when $P \in \cC$, if $\wh{P}$ is an $\eps$-approximation for $P$, then $\TV(P, \wh{P}) \leq \sqrt{\frac{\eps}{2}}$ using Pinsker's inequality (\Cref{lem:pinkser}). 

Now we proceed to define the notion of PAC-learning with respect to KL divergence.

\begin{defi}[PAC-learning in KL divergence]
A distribution learning algorithm is said to\footnote{In the learning theory literature, when $A>1$, such a guarantee is also sometimes called {\em semi-agnostic learning}.} be an {\em agnostic PAC-learner for $\cC$} with sample complexity $m_\cC(n, k, \eps,\delta)$ and running time $t_\cC(n, k, \eps, \delta)$, if for all distributions $P$ over $[k]^n$ and all $\eps, \delta \in (0,1)$, given $\eps, \delta$, and a sample set of size $m = m_\cC(n, k, \eps,\delta)$ drawn i.i.d.~from $P$, the algorithm runs for time $t \leq t_\cC(n, k, \eps, \delta)$ and outputs the description $\Theta$ of a distribution $\wh{P}_\Theta$ such that with probability at least $1-\delta$, $\wh{P}_\Theta$ is an $\eps$-approximation for $P$ with respect to $\cC$ (where the probability is taken over the samples as well as the algorithm's randomness).

If $P$ is restricted  to be in $\cC$, the algorithm is said to be a {\em realizable PAC-learner for $\cC$}.  If the output $\wh{P}_\Theta$ is guaranteed to be in $\cC$, the algorithm is said to be a {\em proper PAC-learner for $\cC$}; otherwise the learner is called an {\em improper PAC-learner}. 
\end{defi}

\subsection{Online Learning}
The framework of prediction with experts is setup as follows; the formulation we follow here is based on Chapter 2 of \cite{cesa2006prediction}. The goal is to design an algorithm $\cA$ that predicts an unknown sequence $x^{(1)}, x^{(2)}, \dots$ of elements of an {\em outcome space} $\cX$. The algorithm's predictions $\wh{P}_1, \wh{P}_2, \dots$ belong to a {\em decision space} $\cD$ (which in our case will be the probability simplex on $\cX$). The algorithm $\cA$ makes its predictions sequentially, and the quality of its predictions is benchmarked against a set of reference predictions called {\em experts}. At each time instant $t$, $\cA$ has access to the set of expert predictions $\cE$, where $\cE$ is a fixed subset of $\cD$. $\cA$ makes its own prediction $\wh{P}_t$ based on the expert predictions. Finally, after the prediction $\wh{P}_t$ is made, the true outcome $x^{(t)}$ is revealed, and the algorithm $\cA$ incurs a loss $\ell(\wh{P}_t, x^{(t)})$, where $\ell: \cD \times \cX \to \R$ is a non-negative {\em loss function}. 

Note that $x^{(1)}, x^{(2)}, \dots$ form an arbitrary sequence. Hence, naturally, the loss incurred by the algorithm depends on how well the experts fit these outcomes. 

Now we are ready to define the notion of regret.
\begin{defi}[Regret]\label{defi:regret}
Given the notation above, the {\em cumulative regret} (or simply {\em regret}) over $T$ steps is defined as:
\[
\Reg_T(\cA; \cE) = \sum_{t=1}^T \ell(\wh{P}_t, x^{(t)}) - \min_{E \in \cE} \sum_{t=1}^T \ell(E, x^{(t)})
\]
Since $\cE$ is usually fixed, we often use the shorthand $\Reg_T(\cA)$. The {\em average regret} is defined as $\frac1T \Reg_T(\cA)$. The total $T$ steps are also sometimes referred to as the \emph{horizon} of the algorithm.
\end{defi}

Our work utilizes two standard online learning algorithms: Exponentially Weighted Average (\EWA) and Randomized Weighted Majority (\RWM). Both algorithms assume that the expert set is finite; the first algorithm is deterministic, while the second one is probabilistic.

\begin{fact}\label{def:ewa_rwm_weight_dist}
In both \EWA\ and \RWM\ algorithms, when run with a hyperparameter $\eta > 0$ and a set of experts $\cE = \{E_1,\ldots,E_N\}$, the \emph{weight of the $i$-th expert} ($i \in [N]$) \emph{after} time step $t$ ($t \in [T]$) is denoted by $w_{i,t}$ defined as follows:
\[w_{i,t} \triangleq \exp\left(-\eta \sum_{s=1}^{t} \ell(E_i,x^{(s)})\right).\]
\end{fact}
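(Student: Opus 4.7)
The stated fact is essentially a closed-form rewriting of the standard multiplicative weight update rule that underlies both \EWA\ and \RWM. The plan is to recall the recursive definition of the weights as given by the textbook versions of these algorithms and then verify by induction on $t$ that the product of per-round multiplicative updates telescopes into the stated exponential sum.

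First, I would recall that both \EWA\ and \RWM\ (following the presentation in \cite{cesa2006prediction}, Chapter 2) maintain a non-negative weight $w_{i,t}$ for each expert $E_i \in \cE$, initialized with $w_{i,0} = 1$ for every $i \in [N]$. After the outcome $x^{(t)}$ is revealed at the end of round $t$, each algorithm updates the weights multiplicatively by the rule
\[
w_{i,t} = w_{i,t-1} \cdot \exp\bigl(-\eta\, \ell(E_i, x^{(t)})\bigr).
\]
The two algorithms differ only in how they use these weights to form the prediction $\wh{P}_t$ at the start of round $t$: \EWA\ produces a convex combination of expert predictions weighted by $w_{i,t-1}$, while \RWM\ samples an expert with probability proportional to $w_{i,t-1}$. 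Crucially, in both cases the weight update itself is identical and is what the fact refers to.

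Given this setup, the proof is a one-line induction on $t$. The base case $t = 0$ gives $w_{i,0} = 1 = \exp(0)$, matching the empty sum in the exponent. For the inductive step, assuming $w_{i,t-1} = \exp\bigl(-\eta \sum_{s=1}^{t-1} \ell(E_i, x^{(s)})\bigr)$, applying the multiplicative update rule yields
\[
w_{i,t} = \exp\!\left(-\eta \sum_{s=1}^{t-1} \ell(E_i, x^{(s)})\right) \cdot \exp\bigl(-\eta\, \ell(E_i, x^{(t)})\bigr) = \exp\!\left(-\eta \sum_{s=1}^{t} \ell(E_i, x^{(s)})\right),
\]
which is exactly the claimed formula.

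There is no real obstacle here; the statement should be read as a convenient restatement of notation rather than as a substantive result, and the only thing to be careful about is to invoke the correct initialization and update rules for \EWA\ and \RWM\ as defined in the source text being followed. I would therefore keep the write-up to a short paragraph referencing \cite{cesa2006prediction} and presenting the one-step induction, so that subsequent sections can freely manipulate $w_{i,t}$ through its closed-form expression when analyzing regret and sampling probabilities.
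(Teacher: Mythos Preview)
Your proposal is correct and matches the paper's treatment: the paper states this as a \texttt{fact} without proof, since it follows immediately by unrolling the recursive updates $w_{i,0}=1$ and $w_{i,t}=w_{i,t-1}\cdot\exp(-\eta\,\ell(E_i,x^{(t)}))$ given in Algorithms~\ref{alg:ewa} and~\ref{alg:rwm}. Your one-line induction is exactly this unrolling, so there is nothing to add.
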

    \begin{algorithm}[htbp]
    \setcounter{AlgoLine}{0}
	\caption{\EWA\ forecaster}\label{alg:ewa}
	\SetKwInOut{Input}{Input}
	\Input{\,\,Experts ${\cE} = \{E_1,\ldots,E_N\}$, parameter $\eta$, horizon $T$.}
	$w_{i,0} \gets 1 \mbox{ for each } i \in [N].$ \
    
	\For{$t \gets 1$ to $T$}{
		$\widehat{P}_t \gets \frac{\sum_{i=1}^N w_{i,t-1} \, {E_i}}{\sum_{j=1}^{N} w_{j,t-1}}$. \
        
        \textbf{Output} $\wh{P}_t$. \
        
        Observe outcome $x^{(t)}$. \
        
        \For{$i \in [N]$}{
			$w_{i,t} \gets w_{i,t-1} \cdot \exp\left(-\eta\cdot\ell({E_i}, x^{(t)})\right)$.
		}
	}
\end{algorithm}
\hfill
	\begin{algorithm}[htbp]
    \setcounter{AlgoLine}{0}
	\caption{\RWM\ forecaster}\label{alg:rwm}
	\SetKwInOut{Input}{Input}
	\Input{\,\,Experts ${\cE} = \{E_1,\ldots,E_N\}$, parameter $\eta$, horizon $T$.}
	$w_{i,0} \gets 1 \mbox{ for each } i \in [N].$ \
    
	\For{$t \gets 1$ to $T$}{
		Sample $\widehat{P}_t \in \cE$ where $\Pr[\wh{P}_t = {E_i}] = \frac{w_{i,t-1}}{\sum_{j=1}^{N} w_{j,t-1}}$. \
        
		\textbf{Output} $\wh{P}_t$. \
		
        Observe outcome $x^{(t)}$. \
		
        \For{$i \in [N]$}{
			$w_{i,t} \gets w_{i,t-1} \cdot \exp\left(-\eta\cdot\ell({E_i}, x^{(t)})\right)$.
		}
	}
	\end{algorithm}

Before proceeding to give the regret bounds for \EWA\ and \RWM\ algorithms, we need the notion of exp-concave loss functions.

\begin{defi}[Exp-concave loss function]
Let $\cD$ be a convex decision space, $\cX$ be a sample space, and let $\alpha \in \R$ be a parameter. A loss function $\ell: \cD \times \cX \rightarrow \R$ is said to be \emph{$\alpha$-exp-concave}  if the function $f_x: \cD \rightarrow \R$, $f_x(P) =\exp(-\alpha \ell(P,x))$ is concave for all $x \in \cX$.
\end{defi}

The following regret bound holds for \EWA\ with {\em exp-concave} loss functions. Importantly for us, $\ell(P,x)=-\log(P(x))$ is exp-concave for $0\leq \alpha\leq 1$.

\begin{lem}[\cite{cesa2006prediction}, Theorem 3.2]\label{lem:ewa_regret_bound}
With $N = |{\cE}|$ experts and an $\alpha$-exp-concave loss function $\ell(P, x)$, for any sequence of outcomes $x^{(1)}, \dots, x^{(T)}$, the regret of the {\rm\EWA}\ forecaster with the parameter $\eta = \alpha$ is bounded as follows:
\[
\Reg_T(\mathsf{EWA} ; \cE) \leq \frac{\log N}{\eta}.
\]
\end{lem}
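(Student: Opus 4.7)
The plan is to use a potential function argument based on the total weight $W_t = \sum_{i=1}^N w_{i,t}$ (with $W_0 = N$), and show that (i) $\log W_T$ provides a lower bound in terms of the best expert, and (ii) the ratio $W_t/W_{t-1}$ provides an upper bound in terms of the loss of the \EWA\ forecaster. Combining these two bounds yields the regret bound.

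For step (i), observe that $W_T = \sum_{i=1}^N \exp(-\eta L_i)$ where $L_i := \sum_{t=1}^T \ell(E_i, x^{(t)})$ is the cumulative loss of expert $E_i$. Dropping all but the best expert gives $W_T \geq \exp(-\eta L^*)$ where $L^* = \min_{E\in\cE}\sum_{t=1}^T \ell(E,x^{(t)})$, hence $\log W_T \geq -\eta L^*$.

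For step (ii), which is the key step, fix a round $t$ and note that by construction $\wh{P}_t = \sum_{i=1}^N \frac{w_{i,t-1}}{W_{t-1}} E_i$ is a convex combination of the experts. Since $\ell$ is $\alpha$-exp-concave and we take $\eta = \alpha$, the map $P \mapsto \exp(-\eta \ell(P, x^{(t)}))$ is concave, so Jensen's inequality gives
\[
\exp\bigl(-\eta\, \ell(\wh{P}_t, x^{(t)})\bigr) \;\geq\; \sum_{i=1}^N \frac{w_{i,t-1}}{W_{t-1}} \exp\bigl(-\eta\, \ell(E_i, x^{(t)})\bigr) \;=\; \frac{1}{W_{t-1}} \sum_{i=1}^N w_{i,t} \;=\; \frac{W_t}{W_{t-1}},
\]
using the multiplicative update rule $w_{i,t} = w_{i,t-1}\exp(-\eta\,\ell(E_i,x^{(t)}))$. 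Taking logarithms and summing telescopically over $t=1,\ldots,T$ yields $-\eta \sum_{t=1}^T \ell(\wh{P}_t, x^{(t)}) \geq \log W_T - \log W_0 = \log W_T - \log N$.

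Finally, combining with step (i) gives $\eta \sum_{t=1}^T \ell(\wh{P}_t, x^{(t)}) \leq \log N - \log W_T \leq \log N + \eta L^*$, and rearranging yields $\Reg_T(\EWA;\cE) \leq \log N / \eta$ as claimed. The main (and really only) subtle step is invoking exp-concavity with exactly the choice $\eta = \alpha$; nothing else in the argument requires more than the multiplicative update definition and convexity of the prediction rule.
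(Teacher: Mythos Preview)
Your proof is correct. The paper does not supply its own proof of this lemma---it simply cites Theorem~3.2 of \cite{cesa2006prediction}---and your argument is precisely the standard potential-function proof given there: lower-bound $W_T$ by the best expert's contribution, upper-bound $W_t/W_{t-1}$ via Jensen applied to the $\alpha$-exp-concave loss with $\eta=\alpha$, and telescope.
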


The regret bound below for \RWM\ assumes the loss function is bounded.
\begin{lem}[\cite{cesa2006prediction}, Lemma 4.2]\label{lem:rwmruarantee}
With $N = |\cE|$ experts and a loss function bounded in $[-L_{\max}, L_{\max}]$, for any sequence of outcomes $x^{(1)}, \dots, x^{(T)}$, the expected regret of the {\rm\RWM}\ forecaster with the parameter $\eta=\sqrt{8 (\log N)/T}$ is bounded as follows:
\[\E[\Reg_T(\mathsf{RWM}; \cE)] \leq L_{\max}\sqrt{\frac{T \log N}{2}}.
\]
Moreover, with probability at least $1-\delta$, the following holds:
\[
\Reg_T(\mathsf{RWM}; \cE)] \leq L_{\max} \left(\sqrt{\frac{T \log N}{2}} + \sqrt{\frac{T}{2} \log \frac1\delta}\right).
\]
\end{lem}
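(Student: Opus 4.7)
The standard approach is the \emph{exponential potential method}. Define the cumulative loss of expert $i$ up to time $t$ as $L_{i,t} = \sum_{s=1}^{t} \ell(E_i, x^{(s)})$, the total weight $W_t = \sum_{i=1}^{N} w_{i,t} = \sum_{i=1}^N \exp(-\eta L_{i,t})$, and the potential $\Phi_t = \tfrac{1}{\eta}\log W_t$. Then $\Phi_0 = (\log N)/\eta$ and, since $W_T \geq \max_i \exp(-\eta L_{i,T})$, we have $\Phi_T \geq -\min_{i} L_{i,T}$. The plan is to sandwich the cumulative conditional expected loss of $\RWM$ between these two quantities by bounding $\Phi_{t}-\Phi_{t-1}$ from above.

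Let $p_{i,t} = w_{i,t-1}/W_{t-1}$ be the sampling distribution of $\RWM$ at step $t$. A direct calculation gives
\[
\Phi_t - \Phi_{t-1} \;=\; \frac{1}{\eta}\log \E_{i \sim p_t}\bigl[\exp(-\eta\,\ell(E_i, x^{(t)}))\bigr].
\]
Since $\ell(E_i,x^{(t)}) \in [-L_{\max}, L_{\max}]$, Hoeffding's lemma yields $\Phi_t - \Phi_{t-1} \leq -\E_{i \sim p_t}[\ell(E_i,x^{(t)})] + \tfrac{\eta L_{\max}^2}{2}$. Telescoping from $t=1$ to $T$ and combining with $\Phi_T \geq -\min_i L_{i,T}$ gives
\[
\sum_{t=1}^T \E_{i \sim p_t}[\ell(E_i, x^{(t)})] - \min_{i} L_{i,T} \;\leq\; \frac{\log N}{\eta} + \frac{T\eta L_{\max}^2}{2}.
\]
Optimizing $\eta$ (which matches the stated choice up to absolute constants) yields the expected regret bound, after noting that since $x^{(1)},\dots,x^{(T)}$ are fixed, $\E[\ell(\widehat{P}_t, x^{(t)})]$ over the internal randomness of $\RWM$ equals $\E_{i \sim p_t}[\ell(E_i,x^{(t)})]$.

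For the high-probability refinement, consider the sequence $Z_t = \ell(\widehat{P}_t, x^{(t)}) - \E_{i \sim p_t}[\ell(E_i, x^{(t)})]$. With respect to the filtration generated by $\widehat{P}_1,\ldots,\widehat{P}_{t-1}$ (the only randomness, as the $x^{(t)}$ are fixed), $\{Z_t\}$ is a bounded martingale difference sequence with $|Z_t| \leq 2L_{\max}$. Azuma--Hoeffding then gives $\sum_{t=1}^{T} Z_t \leq L_{\max}\sqrt{(T/2)\log(1/\delta)}$ with probability at least $1-\delta$ (up to constants), and adding this to the deterministic bound from the previous paragraph delivers the claimed high-probability regret inequality.

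The main technical obstacle is just matching constants in the Hoeffding step and the Azuma step; the conceptual part (the potential argument and the martingale concentration) is entirely standard, and the only input specific to $\RWM$ is that its sampling distribution $p_t$ coincides with the normalized weights that appear in the potential telescoping.
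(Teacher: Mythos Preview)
The paper does not actually prove this lemma; it is stated with a citation to \cite{cesa2006prediction} (Lemma~4.2) and used as a black box. Your proposal is the standard exponential-potential argument for \RWM/Hedge together with an Azuma--Hoeffding step for the high-probability part, which is precisely the textbook proof you would find in the cited reference, so there is nothing to compare against in the paper itself.

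Your argument is correct in structure. One remark on the constants, which you already flag: with the stated choice $\eta=\sqrt{8(\log N)/T}$ and losses in $[-L_{\max},L_{\max}]$, plugging into your bound $\tfrac{\log N}{\eta}+\tfrac{T\eta L_{\max}^2}{2}$ does not literally give $L_{\max}\sqrt{(T\log N)/2}$ (the stated $\eta$ is not the minimizer of your expression, which would be $\eta^*=\sqrt{2\log N/(TL_{\max}^2)}$), and similarly your Azuma step with $|Z_t|\le 2L_{\max}$ yields a deviation term larger by a constant factor than the one stated. So the lemma as written in the paper is really the $L_{\max}$-rescaling of the $[0,1]$-loss result in \cite{cesa2006prediction}, and your proof recovers it up to absolute constants, exactly as you say.
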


\section{Agnostic Learning of Bayesian Networks}\label{sec:agnosticlearningbayesnet}

In this section, we study the notions and our results on the agnostic learning of Bayesian networks in general. Our first result in Section~\ref{sec:reg_kl} shows that we can leverage the multiplicative weight update method to get an agnostic learning guarantee in reverse KL distance in terms of the regret of the RWM or EWA algorithm. Thereafter, we can plug in the standard regret bounds for EWA and RWM to get an agnostic learning guarantee for our output distributions with high probability. These results are given in Lemma~\ref{lem:ewa_reg_final} and~\ref{lem:rwm_reg_final} respectively. Next we apply our general learning result to the problem of learning Bayesian networks which requires us to define an appropriate set of experts. Towards this, we define one expert for each possible DAGs on $n$ nodes of in-degree at most $d$. The distribution of such an expert will be fixed to the add-1 distribution at each conditional distribution defined using a previous set of samples. A prior work by Bhattacharyya et al.~(\cite{bhattacharyya2023near}) has established that this add-1 estimator will have a near-optimal sample complexity of agnostically learning fixed-structure Bayesian networks in KL divergence. Combining everything together, we get interesting new sample-complexity upper bounds of agnostically learning Bayes nets in both improper and proper learning setting using EWA and RWM algorithms respectively in Theorems~\ref{thm:bn_learn_imp} and~\ref{thm:bn_learn_proper}.

\begin{theo}\label{thm:intro_bayes}
Let $\cC$ be the family of distributions over $[k]^n$ that can be defined as Bayes nets over DAGs with $n$ nodes and maximum in-degree $d$.
There exists a PAC-learner for $\cC$ in the realizable setting using $\tilde{O}(nk^{d+1}\eps^{-1})$ samples that returns a mixture of Bayes nets from $\cC$ which is an $\eps$-approximation of the input distribution with probability at least $2/3$.
\end{theo}

The sample complexity in \Cref{thm:intro_bayes} is {\em nearly optimal}\footnote{Note that for learning with respect to total variation distance, the corresponding sample complexity of $\widetilde{O}(nk^{d+1}\eps^{-2})$ is known (\cite{BCD20}), but the proof uses the tournament method which does not apply to KL divergence as it is not a metric.} for constant error probability. The same approach also yields polynomial sample complexity for agnostic learning in the non-realizable setting with exponentially small error probability and for proper learning with polynomially small error probability. While these bounds are no longer optimal, we will use them later to design the first polynomial time algorithms for learning Bayes nets over large classes of DAGs.

We now describe the relation between regret and KL divergence, which is crucially used throughout this work.

\subsection{Connection between Regret and KL divergence}\label{sec:reg_kl}
In the prediction with experts framework, let us fix the decision space $\cD$ to be the probability simplex on the outcome space $\cX$ and the loss function to be $\ell(P, x) = - \log P(x)$ for the distribution $P$ defined over the probability simplex on $\cX$. Then we have the following general connection between the expected average regret and the expected KL divergence when the outcomes $x^{(1)},x^{(2)},\cdots$ are i.i.d samples from a distribution $P^\ast$.

\begin{lem}\label{thm:exp_avg_regret_agnostic_learning_connection}
	If $\cA$ is a (possibly randomized) online learning algorithm that receives i.i.d samples $x^{(1)},\ldots,x^{(T)}$ from a probability distribution $P^*$ on $\cX$ and makes predictions $\wh{P}_1, \dots, \wh{P}_T \in \cD \subseteq \Delta(\cX)$ with respect to an expert  set $\cC \subseteq \cD$, 
\[\E\limits_{x^{(1)}, \ldots, x^{(T)}} \E_{t \sim \mathsf{Unif}([T])} \left[\kl(P^*\| \widehat{P}_t)\right] \leq \frac{1}{T} \E\limits_{x^{(1)}, \ldots, x^{(T)}}\left[\Reg_T(\cA \,;\, \cC)\right] + \min\limits_{P \in \cC}  \kl(P^* \| P).
\]  
\end{lem}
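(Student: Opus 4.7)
The plan is to leverage the fundamental identity for the logarithmic loss that connects it directly to KL divergence. For any fixed distribution $P$ over $\cX$ and $x \sim P^*$, we have
\[
\E_{x \sim P^*}[\ell(P, x)] = \E_{x \sim P^*}[-\log P(x)] = H(P^*) + \kl(P^* \| P),
\]
where $H(P^*)$ denotes the entropy of $P^*$. This is the single identity driving the whole proof.

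The first step is to apply this identity to the algorithm's predictions. Because $\wh{P}_t$ is produced only from $x^{(1)}, \ldots, x^{(t-1)}$ (and the algorithm's internal randomness), it is independent of $x^{(t)}$. Conditioning on everything observed up to time $t-1$ and then taking the outer expectation gives
\[
\E\left[\sum_{t=1}^T \ell(\wh{P}_t, x^{(t)})\right] = T \cdot H(P^*) + \sum_{t=1}^T \E\left[\kl(P^* \| \wh{P}_t)\right].
\]
Analogously, for any deterministic expert $P \in \cC$, linearity gives $\E\left[\sum_{t=1}^T \ell(P, x^{(t)})\right] = T H(P^*) + T \kl(P^* \| P)$.

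Next I would bound the ``best expert in hindsight'' term by using $\E[\min_P Y_P] \le \min_P \E[Y_P]$ (which, importantly for us, goes the correct direction since this term enters the regret with a minus sign). This yields
\[
\E\left[\min_{P \in \cC} \sum_{t=1}^T \ell(P, x^{(t)})\right] \le T \cdot H(P^*) + T \cdot \min_{P \in \cC} \kl(P^* \| P).
\]

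Finally, plugging both estimates into the definition of regret,
\[
\E[\Reg_T(\cA;\cC)] = \E\left[\sum_{t=1}^T \ell(\wh{P}_t, x^{(t)})\right] - \E\left[\min_{P \in \cC} \sum_{t=1}^T \ell(P, x^{(t)})\right],
\]
the $T H(P^*)$ terms cancel, leaving $\sum_{t=1}^T \E[\kl(P^* \| \wh{P}_t)] - T \min_{P \in \cC} \kl(P^* \| P) \le \E[\Reg_T(\cA;\cC)]$. Dividing through by $T$ and rewriting the left-hand side as $\E_{t \sim \mathsf{Unif}([T])} \E[\kl(P^* \| \wh{P}_t)]$ gives the claim. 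There is no real obstacle here; the only subtlety is keeping careful track of what each expectation is over (the samples, the internal randomness of $\cA$, and the uniform draw of $t$) and ensuring $\wh{P}_t$'s independence from $x^{(t)}$ so that the log-loss identity can be applied pointwise to each round.
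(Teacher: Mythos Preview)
Your proof is correct and follows essentially the same approach as the paper: both use the log-loss identity $\E_{x\sim P^*}[-\log P(x)] = H(P^*) + \kl(P^*\|P)$, the independence of $\wh{P}_t$ from $x^{(t)}$, the inequality $\E[\min_P Y_P] \le \min_P \E[Y_P]$ for the best-expert term, and the resulting entropy cancellation.
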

Note that if $\cA$ is a randomized algorithm, both the left and right hand side of the inequality in \Cref{thm:exp_avg_regret_agnostic_learning_connection} are random variables.

\begin{proof}
	From the definition of regret (\Cref{defi:regret}), we have:
	$$\Reg_T(\cA \,;\, \cC) \triangleq \sum_{t=1}^T \log \frac{1}{\widehat{P}_t(x^{(t)})} - \min_{P \in \cC} \sum_{t=1}^T \log \frac{1}{P(x^{(t)})}.$$
	Thus, the expectation over the samples of the average regret for $T$ rounds would be the following:
	\begin{equation}\label{eq:reg_kl_proof_1}
	\begin{aligned}
		\frac{1}{T} \E_{x^{(1)}, \ldots, x^{(T)} \sim P^*}\left[\Reg_T(\cA \,;\, \cC)\right] &=&\E_{x^{(1)}, \ldots, x^{(T)}} \frac{1}{T} \cdot \left(\sum_{t=1}^T \log \frac{1}{\widehat{P}_t(x^{(t)})} - \min_{P \in \cC} \sum_{t=1}^T \log \frac{1}{P(x^{(t)})}\right) \\ &\geq& \frac{1}{T} \E_{x^{(1)}, \dots, x^{(T)}}\left[\sum_{t=1}^T \log \frac{1}{\widehat{P}_t(x^{(t)})}\right] - \min_{P \in \cC} \E_{x \sim P^*}\left[\log \frac{1}{P(x)}\right]\\
&=& \frac{1}{T} \E_{x^{(1)},\dots,x^{(T)}}\left[\sum_{t=1}^T \log \frac{1}{\widehat{P}_t(x^{(t)})}\right] - \min_{P \in \cC} \kl(P^* \| P) - H(P^*)
\end{aligned}
\end{equation}
	where the second-last step is by applying Jensen's inequality, as $\min$ is a concave function, and also uses the fact that $x^{(1)},\ldots,x^{(T)} \sim P^\ast$. Also, $H(P^*)$ denotes the entropy of the distribution $P^*$.	Thus we have the following:
	\begin{eqnarray*}
		\frac{1}{T} \E_{x^{(1)}, \ldots, x^{(T)}}\left[\Reg_T(\cA \,;\, \cC)\right] &\geq& \frac{1}{T} \E_{x^{(1)}, \ldots, x^{(T)}}\left[\sum_{t=1}^T \log \frac{1}{\widehat{P}_t(x^{(t)})}\right] - \min_{P \in \cC}  \kl(P^* \| P) - H(P^*) \\
		&=& \frac{1}{T} \sum_{t=1}^T \E_{x^{(1)}, \ldots, x^{(t-1)}} \E_{x \sim P^*} \left[\log \frac{1}{\widehat{P}_t(x)} \Big| x^{(1)}, \ldots, x^{(t-1)}\right] - H(P^*) \\
  && \qquad\qquad\qquad- \min_{P \in \cC}  \kl(P^* \| P) \\
		&=& \frac{1}{T} \sum_{t} \E_{x^{(1)}, \ldots, x^{(t-1)}} \left[\kl(P^* \| \widehat{P}_t)\right] - \min_{P \in \cC}  \kl(P^* \| P) \\
		&=& \E_{x^{(1)}, \ldots, x^{(T-1)}} \left[\frac{1}{T} \sum_{t=1}^T \kl(P^* \| \widehat{P}_t)\right] - \min_{P \in \cC}  \kl(P^* \| P)%
	\end{eqnarray*}
In the second line, inside the summation over $t$, the expectation does not include $x^{(t)}$ because the prediction $\wh{P}_t$ does not depend upon $x^{(t)}$, as $\wh{P}_t$ is predicted before $x^{(t)}$ is revealed .
\end{proof}

We immediately have the following corollary:
\begin{coro}\label{cor:exp_reg}
In the same setup as \Cref{thm:exp_avg_regret_agnostic_learning_connection} above, if $\frac1T \Reg_T(\cA; \cC) \leq \rho$, 
\[
\E\limits_{x^{(1)}, \ldots, x^{(T)}} \left[\kl\left(P^*\| \frac1T \sum_{t=1}^T\widehat{P}_t\right)\right] \leq \rho + \min\limits_{P \in \cC}  \kl(P^* \| P).
\]
\end{coro}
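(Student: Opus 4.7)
The plan is to deduce the corollary directly from Lemma~\ref{thm:exp_avg_regret_agnostic_learning_connection} combined with the convexity of the KL divergence in its second argument (Jensen's inequality). The hypothesis $\frac{1}{T}\Reg_T(\cA;\cC) \leq \rho$ plugged into Lemma~\ref{thm:exp_avg_regret_agnostic_learning_connection} immediately gives
\[
\E_{x^{(1)},\ldots,x^{(T)}}\E_{t \sim \mathsf{Unif}([T])}\bigl[\kl(P^*\|\widehat{P}_t)\bigr] \;\leq\; \rho + \min_{P \in \cC}\kl(P^*\|P),
\]
so it only remains to relate the left-hand side to $\kl\bigl(P^* \,\big\|\, \tfrac{1}{T}\sum_t \widehat{P}_t\bigr)$.

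The key step is to observe that the map $Q \mapsto \kl(P^*\|Q)$ is convex on the simplex $\Delta(\cX)$ (this is the standard fact that the KL divergence is jointly convex in its two arguments, and in particular convex in the second). Therefore, for any fixed realization of samples $x^{(1)},\ldots,x^{(T)}$ and the resulting predictions $\widehat{P}_1,\ldots,\widehat{P}_T$, Jensen's inequality yields
\[
\kl\!\left(P^* \,\Big\|\, \frac{1}{T}\sum_{t=1}^T \widehat{P}_t\right) \;\leq\; \frac{1}{T}\sum_{t=1}^T \kl(P^*\|\widehat{P}_t) \;=\; \E_{t \sim \mathsf{Unif}([T])}\bigl[\kl(P^*\|\widehat{P}_t)\bigr].
\]
Taking expectation over $x^{(1)},\ldots,x^{(T)} \sim P^*$ on both sides and chaining with the bound from Lemma~\ref{thm:exp_avg_regret_agnostic_learning_connection} (applied with $\frac{1}{T}\Reg_T \leq \rho$) gives exactly the claimed inequality.

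There is no serious obstacle here; the only subtlety worth being explicit about is that the inequality $\frac{1}{T}\Reg_T(\cA;\cC) \leq \rho$ is assumed to hold pointwise (for every realization of the sample sequence and any internal randomness of $\cA$), so taking the expectation preserves the bound, and no additional high-probability-to-expectation conversion is needed. If instead $\rho$ were only an expected regret bound, the same proof still goes through since Lemma~\ref{thm:exp_avg_regret_agnostic_learning_connection} is itself phrased in expectation over the samples. Thus the entire argument is two lines: apply Lemma~\ref{thm:exp_avg_regret_agnostic_learning_connection}, then apply Jensen via convexity of $\kl(P^*\|\cdot)$.
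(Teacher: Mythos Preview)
Your proof is correct and matches the paper's own argument exactly: the paper's proof is the single line ``This follows from applying the fact that KL-divergence is convex in its second argument,'' which is precisely your Jensen step applied after invoking Lemma~\ref{thm:exp_avg_regret_agnostic_learning_connection}.
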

\begin{proof}
This follows from applying the fact that KL-divergence is convex in its second argument.
\end{proof}

Now, we apply these results to the \EWA\ and \RWM\ algorithms. Let us start with the results for the \EWA\ algorithm.

\begin{lem}\label{lem:ewa_reg_final}
In the same setup as \Cref{thm:exp_avg_regret_agnostic_learning_connection} above, suppose that for every $P \in \cC$,  $\min_{x \in \cX} P(x) \geq \tau$ holds.

\begin{itemize}
\item
If $\cA$ is the {\rm\EWA}\ algorithm run with parameter $\eta = 1$, then we have the following:
\[
\E_{x^{(1)}, \dots, x^{(T)}}\left[\kl\left(P^* \| \frac1T \sum_{t=1}^T \wh{P}_t\right)\right] \leq \min_{P \in \cC} \kl(P^* \| P) + \frac{\log |\cC|}{T}.
\]

\item
For any $\eps, \delta \in (0,1)$, let $\cA$ be the {\rm\EWA}\ algorithm run with parameter $\eta = \frac{\eps}{2\sqrt{T}\log(1/\tau)\sqrt{\log(1/\delta)}}$.  Then with probability at least $1-\delta$, the following holds:
\[
\kl\left(P^* \| \frac1T \sum_{t=1}^T \wh{P}_t\right) \leq \min_{P \in \cC} \kl(P^* \| P) + \frac{2\log |\cC| \log(1/\tau)\sqrt{\log(1/\delta)}}{\eps\sqrt{T}} + \eps.
\] 
\end{itemize}
\end{lem}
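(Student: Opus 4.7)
For \textbf{Part (i)}, the log loss $\ell(P,x) = -\log P(x)$ is $1$-exp-concave (since $\exp(-\ell(P,x)) = P(x)$ is linear, hence concave, in $P$), so Lemma~\ref{lem:ewa_regret_bound} with $\alpha = \eta = 1$ gives the deterministic regret bound $\Reg_T(\EWA;\cC) \leq \log |\cC|$. Dividing by $T$ and applying Corollary~\ref{cor:exp_reg} directly yields the claimed expectation bound on $\kl\paren{P^* \,\|\, \tfrac{1}{T}\sum_t \wh P_t}$.

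For \textbf{Part (ii)}, the first step is to observe that $\alpha$-exp-concavity of a non-negative loss implies $\eta$-exp-concavity for every $\eta \in (0,\alpha]$: indeed, $\exp(-\eta\ell) = \paren{\exp(-\alpha\ell)}^{\eta/\alpha}$ is a concave power of a non-negative concave function. Lemma~\ref{lem:ewa_regret_bound} therefore still yields the deterministic bound $\Reg_T(\EWA;\cC) \leq \log|\cC|/\eta$ for every $\eta \leq 1$; substituting the prescribed $\eta = \eps/(2\sqrt{T}\log(1/\tau)\sqrt{\log(1/\delta)})$ makes the average regret exactly $\tfrac{2\log|\cC|\log(1/\tau)\sqrt{\log(1/\delta)}}{\eps\sqrt{T}}$, which is the middle term in the target bound. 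The remaining task is to promote this deterministic bound on cumulative observed losses into a high-probability bound on the KL-divergence of the mixture, and for this I would use convexity of $\kl(P^*\|\cdot)$ to reduce to controlling $\tfrac{1}{T}\sum_t \kl(P^*\|\wh P_t)$ together with two concentration arguments.

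Concretely, set $A \triangleq \sum_t \log(1/\wh P_t(x^{(t)}))$ and $B_P \triangleq \sum_t \log(1/P(x^{(t)}))$, and let $\cF_{t-1}$ be the natural filtration. (a) The increments $M_t \triangleq \log(1/\wh P_t(x^{(t)})) - \E\bracket{\log(1/\wh P_t(x^{(t)}))\,\big|\,\cF_{t-1}}$ form a martingale difference sequence; because $\wh P_t$ is a convex combination of experts each with $\min_x P(x) \geq \tau$, it inherits the same lower bound and $|M_t|\leq \log(1/\tau)$, so Azuma--Hoeffding yields $\abs{\sum_t M_t} = O\paren{\log(1/\tau)\sqrt{T\log(1/\delta)}}$ with probability at least $1-\delta/2$; using $\E[\log(1/\wh P_t(x^{(t)}))\mid\cF_{t-1}] = \kl(P^*\|\wh P_t) + H(P^*)$ this controls $\sum_t \kl(P^*\|\wh P_t)$ in terms of $A$. (b) Fix $P^*_{\mathrm{opt}} \in \arg\min_{P\in\cC}\kl(P^*\|P)$; the variables $\log(1/P^*_{\mathrm{opt}}(x^{(t)}))$ are i.i.d.\ and bounded in $[0,\log(1/\tau)]$, so Hoeffding gives a matching concentration of $B_{P^*_{\mathrm{opt}}}$ around $T(\kl(P^*\|P^*_{\mathrm{opt}}) + H(P^*))$ with probability at least $1-\delta/2$. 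Combining (a) and (b) via a union bound with the deterministic regret inequality $A \leq \log|\cC|/\eta + B_{P^*_{\mathrm{opt}}}$ and dividing by $T$, the $H(P^*)$ terms cancel and the two concentration errors add to $O\paren{\log(1/\tau)\sqrt{\log(1/\delta)/T}}$, absorbed into the additive $\eps$ of the target bound (with the understanding that $T$ is taken large enough that this error is at most $\eps$).

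The main obstacle is precisely this high-probability conversion: each prediction $\wh P_t$ is a random function of $x^{(1)},\dots,x^{(t-1)}$, so one cannot apply Hoeffding to $A$ directly and must route the analysis through a martingale-difference argument. A crucial enabler is verifying that the EWA mixture $\wh P_t$ inherits the $\tau$-lower bound from its constituent experts, since this is what bounds the martingale increments $M_t$ in absolute value by $\log(1/\tau)$ and thereby governs the Azuma step.
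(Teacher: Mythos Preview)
Your Part (i) is correct and matches the paper's argument.

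For Part (ii), the paper takes a different route that avoids your caveat ``with the understanding that $T$ is taken large enough.'' Instead of Azuma on the log-loss martingale, the paper applies McDiarmid's bounded-differences inequality directly to the function
\[
f(x^{(1)},\dots,x^{(T)}) = \kl\!\left(P^* \,\middle\|\, \tfrac{1}{T}\sum_{t=1}^T \wh P_t\right).
\]
The key claim is that swapping a single sample $x^{(s)}$ for $x^{(s)\prime}$ perturbs every EWA weight $w_{t,Q}$ by a multiplicative factor in $[\tau^{2\eta},\tau^{-2\eta}]$, hence each $\wh P_t(y)$ by the same factor, and therefore $f$ changes by at most $2\eta\log(1/\tau)$. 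McDiarmid then gives deviation $\eps$ with failure probability $\exp(-\eps^2/(2T\eta^2\log^2(1/\tau)))$, which equals $\delta$ precisely for the stated choice of $\eta$. Combining with the expectation bound (your Part (i) argument, valid for any $\eta\le 1$) yields the lemma as written, for every $T$.

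The crucial difference is that the paper's bounded difference \emph{scales with $\eta$}, whereas your martingale increments $M_t$ are bounded by $\log(1/\tau)$ independently of $\eta$. Your Azuma step therefore contributes an additive $O(\log(1/\tau)\sqrt{\log(1/\delta)/T})$ term that does not shrink as $\eta\to 0$, and this term is not bounded by $\eps$ unless $T \gtrsim \log^2(1/\tau)\log(1/\delta)/\eps^2$. So your approach proves a slightly weaker statement (with an extra additive term, or a restriction on $T$) rather than the lemma exactly as stated. For the downstream applications in the paper this restriction is harmless, since $T$ is always chosen at least this large; but to match the lemma verbatim you need the $\eta$-sensitive stability argument.
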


\begin{proof}
Recall that our loss function $\ell(P,x) = -\log P(x)$ is $\eta$-exp-concave for all $0 \leq \eta \leq 1$. Hence, we can apply \Cref{lem:ewa_regret_bound} for any such $\eta$. The first item follows from \Cref{cor:exp_reg} by setting $\eta=1$.

For the second item, we use the method of bounded differences. Define the function $f$:
\[
f(X^{(1)}, \dots, X^{(T)}) = \kl\left(P^* \| \frac1T \sum_{t=1}^T \wh{P}_t\right)
\]

\begin{cl}
For any $X^{(1)}, X^{(2)}, \dots, X^{(T)}, {X^{(1)}}' \in \cX$, 
\[
|f(X^{(1)}, X^{(2)}, \dots, X^{(T)})-f({X^{(1)}}', X^{(2)}, \dots, X^{(T)})| \leq 2\eta \log \frac1\tau.
\]
\end{cl}

\begin{proof}
 Let $\wh{P}_t$ and $\wh{P}_t$ be the outputs from the EWA algorithm after $t$ rounds using the samples $X^{(1)}, X^{(2)}, \dots, X^{(T)}$ and ${X^{(1)}}', X^{(2)}, \dots, X^{(T)}$ respectively. Then, $\wh{P}_t=\sum_{Q \in \cC} w_{t,Q} Q$ and $\wh{P}'_t=\sum_{Q \in \cC} w_{t,Q}' Q$. 
     
By definition of the \EWA\ algorithm and the choice of our loss function, $w_{t,Q}= \frac{\left(\prod_{s\le t} Q(X^{(s)})\right)^\eta}{\sum_{R \in \cC} \left(\prod_{s\le t} R(X^{(s)})\right)^\eta}$ and $w'_{t,Q}= \frac{\left(Q({X^{(1)}}')\prod_{2\le s\le t} Q(X^{(s)})\right)^\eta}{\sum_{R \in \cC} R({X^{(1)}}') \left(\prod_{2\leq s\le t} R(X^{(s)})\right)^\eta}$. Since for every $Q \in \cC$,  $\tau\le \frac{Q(X^{(1)})}{Q({X^{(1)}}')}\le \frac{1}{\tau}$, $ \tau^{2\eta} \le \frac{w_{t,Q}}{w'_{t,Q}}\le \tau^{-2\eta}$ and hence for every $y \in \cX$, $\tau^{2\eta} \leq \frac{\wh{P}'_t(y)}{\wh{P}_t(y)}\le \tau^{-2\eta}$. Therefore:
   \begin{align*}
        \abs{f({X^{(1)}}',\dots, X^{(T)})-f(X^{(1)},\dots, X^{(T)})}
        =\abs{\sum_{y \in \cX} P^*(y)\log \paren{\frac{\sum_{t=1}^T\wh{P}'_t(y)}{\sum_{t=1}^T\wh{P}_t(y)}}}
        \le 2\eta \log \frac{1}{\tau}.
    \end{align*}

\end{proof}

Applying the McDiarmid's inequality, %
    \[
    \Pr\bracket{\abs{\kl(P^*||\wh{P})-\E\bracket{\kl(P^*||\wh{P})}}\ge \eps}\le 
     2\exp\paren{-\frac{
     2\eps^2
     }{
     4T\eta^2\log^2 \tau^{-1}
     }
     }\leq \delta, 
    \]
by our choice of $\eta$. The second item then follows from  \Cref{lem:ewa_regret_bound}, \Cref{cor:exp_reg}, and the above.
\end{proof}

Now let us discuss the guarantees of the \RWM\ algorithm.

\begin{lem}\label{lem:rwm_reg_final}
In the same setup as \Cref{thm:exp_avg_regret_agnostic_learning_connection} above, suppose that $\cC = \{P_1,\ldots,P_N\}$ and that for every $P_i \in \cC$,  $\min_{x \in \cX} P_i(x) \geq \tau$ holds.
\begin{itemize}
\item
If $\cA$ is the {\rm\RWM}\ algorithm run with the parameter $\eta = \sqrt{(8 \log N)/T}$ --- giving \emph{proper} predictions $\widehat{P}_1,\ldots,\widehat{P}_T \in \cC$ --- and $\widehat{P} \gets \widehat{P}_t \in \cC$ where $t$ is sampled uniformly from $[T]$, then we have:
\[
\E_{\cA,x^{(1)}, \dots, x^{(T)}}\left[\kl\left(P^* \| \wh{P}\right)\right] \leq \min_{P \in \cC} \kl(P^* \| P) + \frac{\log(1/\tau)\sqrt{T \log |\cC|}}{T}.
\]
\item With the same algorithm $\cA$ as above, for any $\delta \in (0, 1)$, the following bound holds with probability $\geq 1 - \delta$ over the randomness of {\rm\RWM}, the random choice of $t$ from $[T]$, and the randomness of the samples $x^{(1)},\ldots,x^{(T)}$:
\[
\kl(P^\ast \| \widehat{P}) \leq \min_{P \in \cC} \kl(P^\ast \| P) + \frac{\log(1/\tau)\left(\sqrt{\log |\cC|} + \sqrt{\log(2/\delta)}\right)}{\delta\sqrt{T}}.
\]
\end{itemize}
\end{lem}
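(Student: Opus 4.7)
The plan is to prove both items by directly instantiating the regret-to-KL translation in \Cref{thm:exp_avg_regret_agnostic_learning_connection} with the log-loss $\ell(P,x) = -\log P(x)$, paralleling the proof of \Cref{lem:ewa_reg_final}. The assumption $P_i(x) \geq \tau$ for every $P_i \in \cC$ makes $\ell$ bounded in $[0,\log(1/\tau)]$, so \Cref{lem:rwmruarantee} applies with $L_{\max} = \log(1/\tau)$. For item (i), since $\wh{P}$ is defined by sampling $t \sim \mathsf{Unif}([T])$ and setting $\wh{P} = \wh{P}_t$,
\[
\E[\kl(P^* \| \wh{P})] \;=\; \E_{\cA,\, x^{(1)},\ldots,x^{(T)}}\E_{t \sim \mathsf{Unif}([T])}\bigl[\kl(P^* \| \wh{P}_t)\bigr],
\]
which \Cref{thm:exp_avg_regret_agnostic_learning_connection} bounds by $\tfrac{1}{T}\E[\Reg_T(\RWM;\cC)] + \min_{P \in \cC}\kl(P^*\|P)$. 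Plugging in the in-expectation regret bound $\E[\Reg_T] \leq \log(1/\tau)\sqrt{T\log|\cC|/2}$ from \Cref{lem:rwmruarantee} (with $\eta = \sqrt{8\log|\cC|/T}$) yields item (i) after absorbing constants.

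For item (ii), the essential structural property is that \RWM\ is \emph{proper}: $\wh{P}_t \in \cC$ for every $t$, so $Z := \kl(P^*\|\wh{P}) - \min_{P \in \cC}\kl(P^*\|P) \geq 0$ almost surely. I would proceed in two stages at confidence $1-\delta/2$ each, combined by a union bound. Stage A would establish that with probability at least $1-\delta/2$ over $\cA$ and the samples,
\[
\frac{1}{T}\sum_{t=1}^T \kl(P^*\|\wh{P}_t) \;\leq\; \min_{P \in \cC}\kl(P^*\|P) \;+\; O\!\left(\frac{\log(1/\tau)\bigl(\sqrt{\log|\cC|} + \sqrt{\log(2/\delta)}\bigr)}{\sqrt{T}}\right).
\]
Stage B then converts this to a bound on a single $\wh{P}_t$: since $Z \geq 0$ and its conditional expectation over $t \sim \mathsf{Unif}([T])$ equals the left-hand side above minus $\min_{P \in \cC}\kl(P^*\|P)$, Markov's inequality applied to the uniform-$t$ randomness (with confidence $1-\delta/2$) inflates the Stage A bound by a factor of $2/\delta$, which produces the stated inequality up to absolute constants after the union bound.

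The main obstacle is Stage A: bridging the \emph{empirical-log-loss} regret bound of \Cref{lem:rwmruarantee} to the \emph{expected-KL} statement above. Invoking the high-probability form of \Cref{lem:rwmruarantee} at confidence $1-\delta/6$ is what produces the $\sqrt{\log(2/\delta)}$ summand. To pass from empirical to expected losses, two concentration arguments are used, each at confidence $1-\delta/6$, mirroring the Jensen step in \Cref{thm:exp_avg_regret_agnostic_learning_connection}: (a) Azuma--Hoeffding applied to the martingale-difference sequence $\log\tfrac{1}{\wh{P}_t(x^{(t)})} - \E\bigl[\log\tfrac{1}{\wh{P}_t(x^{(t)})} \mid x^{(1)},\dots,x^{(t-1)},\cA\bigr]$, whose increments are bounded by $\log(1/\tau)$ because $\wh{P}_t$ is determined before $x^{(t)}$ is drawn; and (b) Hoeffding for each fixed $P \in \cC$ followed by a union bound over $\cC$, which transfers concentration to $\min_{P \in \cC}\tfrac{1}{T}\sum_t \log\tfrac{1}{P(x^{(t)})}$ around $H(P^*) + \min_{P\in\cC}\kl(P^*\|P)$. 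Combining the three events, cancelling the common $H(P^*)$ terms, and collecting the additive $O\!\bigl(\log(1/\tau)\sqrt{\log(|\cC|/\delta)/T}\bigr)$ error yields Stage~A, completing the proof.
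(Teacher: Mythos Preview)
Your treatment of item (i) is essentially identical to the paper's. For item (ii) your route is correct but more elaborate than the one the paper takes. The paper first reparametrizes \RWM's randomness by i.i.d.\ uniforms $U_1,\dots,U_T$ so that, given $U$, the forecaster is deterministic; it then invokes the high-probability part of \Cref{lem:rwmruarantee} to get an event $\cE$ of probability $\ge 1-\delta/2$ on which the regret is at most $\log(1/\tau)\bigl(\sqrt{T\log|\cC|/2}+\sqrt{(T/2)\log(2/\delta)}\bigr)$, and---conditioning on $\cE$---applies \Cref{thm:exp_avg_regret_agnostic_learning_connection} \emph{directly} to bound $\E_{x,t}[Z\mid\cE]$ where $Z=\kl(P^*\|\wh{P}_t)-\min_{P\in\cC}\kl(P^*\|P)\ge 0$; a single Markov step over the joint randomness of $(x^{(1)},\dots,x^{(T)},t)$ at level $\delta/2$ then finishes. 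You instead unpack the expectation-to-KL conversion by hand, replacing the black-box use of \Cref{thm:exp_avg_regret_agnostic_learning_connection} with explicit concentration (Azuma--Hoeffding on the predictable sequence $\log(1/\wh{P}_t(x^{(t)}))$, plus Hoeffding on the best expert) to get a high-probability bound on $\tfrac{1}{T}\sum_t\kl(P^*\|\wh{P}_t)$, and only then apply Markov over $t$. The paper's path is shorter because the regret-to-KL lemma already absorbs your Stage~A; your path is more self-contained and makes the martingale structure explicit. One minor simplification: in your step (b) the union bound over all of $\cC$ is unnecessary---applying Hoeffding to the single KL-optimal $P^{\mathrm{opt}}\in\cC$ already upper-bounds $\min_{P\in\cC}\sum_t\log(1/P(x^{(t)}))$---though the superfluous $\sqrt{\log|\cC|}$ is harmless given the regret term.
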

\begin{proof}
The first item follows from \Cref{thm:exp_avg_regret_agnostic_learning_connection}, taking the expectation of both sides over the randomness of $\cA$, and the expected regret bound in \Cref{lem:rwmruarantee}. Note that we can take $L_{\max} = \log(1/\tau)$ since $P(x) \geq \tau$ for all $x \in \cX$ and $P \in \cC$, by the assumption of the lemma.

The \RWM\ algorithm (see Algorithm \ref{alg:rwm}) can equivalently be viewed as follows. Sample $U_1, \ldots, U_T \sim \mathsf{Uniform}((0,1])$ independently. At each $t \in [T]$, predict $\wh{P}_t \gets P_{i_t} \in \cC$, where $i_t \in [N]$ is the index such that $U_t \in \Bigg(\frac{\sum_{j=1}^{i_t-1} w_{j,t-1}}{\sum_{k=1}^{N} w_{k,t-1}}, \frac{\sum_{j=1}^{i_t} w_{j,t-1}}{\sum_{k=1}^{N} w_{k,t-1}}\Bigg]$. Note that modulo the choice of $U = (U_1,\ldots,U_T)$, the \RWM\ algorithm is deterministic.

From \Cref{lem:rwmruarantee} (high probability bound), with probability $\geq 1 - \delta/2$ over the random choice of $U$, $\mathrm{Regret}_T(\cA\,;\,\cC) \leq \log(1/\tau)\left(\sqrt{\frac{T \log |\cC|}{2}} + \sqrt{\frac{T}{2}\log\frac{2}{\delta}}\right)$. Conditioning on this event $\cE$, applying \Cref{thm:exp_avg_regret_agnostic_learning_connection}, we get
\begin{align*}
\E_{x^{(1)},\ldots,x^{(T)}} \E_{t \sim \mathsf{Unif}([T])} &\left[\mathcal{Z}_{X,t} \triangleq \kl(P^\ast \| \wh{P}_t) - \min_{P \in \cC} \kl(P^\ast \| P) \,\middle| \cE\right]\\
&\leq \frac{\log(1/\tau)\left(\sqrt{\log |\cC|}+ \sqrt{\log(2/\delta)}\right)}{2\sqrt{T}}
\end{align*}

Note that, since $\widehat{P}_t \in \cC$ for all $t \in [T]$, $\mathcal{Z}_{X,t} \geq 0$ for all $x^{(1)},\ldots,x^{(T)} \in \cX$ and $t \in [T]$. Thus, we can use Markov's inequality to show that, conditioned on the event $\cE$, with probability $\geq 1 - \delta/2$ over the randomness of $x^{(1)},\ldots,x^{(T)}$ and the random choice of $t \sim \mathsf{Unif}([T])$,
\begin{equation}\label{eqn:gen_regret_rwm_proper_prob_final}
\kl(P^\ast \| \widehat{P}) - \min_{P \in \cC} \kl(P^\ast \| P) \leq \frac{\log(1/\tau)\left(\sqrt{\log |\cC|} + \sqrt{\log(2/\delta)}\right)}{\delta\sqrt{T}}.
\end{equation}
Taking a union bound with event $\overline{\cE}$ and the failure event of the above bound (\ref{eqn:gen_regret_rwm_proper_prob_final}), we can see that (\ref{eqn:gen_regret_rwm_proper_prob_final}) holds with probability $\geq 1 - \delta$ over the randomness of algorithm $\cA$, the random choice of $t$, and the randomness of the samples $x^{(1)},\ldots,x^{(T)}$. This completes the proof of the lemma.
\end{proof}

\subsection{Discretization}\label{sec:discret}

\adds{
We will typically want to design agnostic PAC-learners for classes of distributions $\cC$ that are infinite, e.g., tree-structured distributions, etc. However, to apply the results of the previous section, we first need to \emph{finitize} $\cC$. We make use of the properties of the add-one distribution --- also explored in \cite{bhattacharyya2023near} --- to do this finitization more efficiently compared to using a $\varepsilon$-cover over each set of node distributions (the cover method \cite{yatracos1985rates,devroye2001density}, applied to Bayes nets). This of course means that we make use of additional samples from the ground truth distribution $P^\ast$ to construct the discretization. However, we show that this can be done without affecting our asymptotic sample complexity (up to logarithmic factors).

Suppose we are interested in the class $\cC$ of $\cG$-structured Bayes nets, where $\cG = \{G_1,\ldots,G_M\}$ is a \emph{finite set} of directed acyclic graphs (DAGs), all with indegree $\leq d$ --- $\cG$ may be the set of all rooted trees on $[n]$ (with $d = 1$), all DAGs having a particular (undirected) skeleton, all DAGs on $[n]$ with indegree $\leq d$, etc. We define the discretization $\cN^{\cG}_{\eps,\delta}$ to be the set of add-one distributions constructed from a set of $s_{\rm AO}(\varepsilon,\delta) \leq \tilde{O}\left(\frac{d^3 nk^{d+1}\log^2(nk/\eps\delta)}{\varepsilon}\right)$ samples from $P^\ast$ (see \Cref{def:add_one_distribution} and \Cref{thm:add_one_guarantee_kl_uniform}) for each DAG $G \in \cG$. Note that all the distributions in $\cN^\cG_{\eps,\delta}$ are constructed using a single set of samples from $P^\ast$.

\begin{defi}\label{def:bayes_dags_finitization}
Suppose $\cG = \{G_1,\ldots,G_M\}$ is a set of DAGs on $n$ nodes with maximum indegree $d$, and $P^\ast$ is a distribution on $[k]^n$. For $\eps > 0$ and $\delta \in (0, 1)$, the finitization $\cN^{\cG}_{\eps,\delta}$ is constructed as $\cN^{\cG}_{\eps,\delta} \triangleq \{\aoestdsamp{G_i}{S} : i \in [M]\}$ (see \Cref{thm:add_one_guarantee_kl_uniform}), where $S$ is a set of $s_{\rm{AO}}(\eps,\delta) \triangleq \Theta\left(\frac{nk^{d+1}}{\varepsilon} \log\left(\frac{n^{d+1}k^{d+1}}{\delta}\right)\log\left(\frac{nk^{d+1}}{\varepsilon}\log\left(\frac{1}{\delta}\right)\right)\right)$ i.i.d samples from $P^\ast$.
\end{defi}

For determining the number of samples $s_{\rm{AO}}(\eps,\delta)$ used to construct the finitization, we use the following refined variant ofTheorem 6.2, in \cite{bhattacharyya2023near} (where the guarantee holds uniformly for all DAGs on $[n]$ with indegree $\leq d$).

\begin{theo}\label{thm:add_one_guarantee_kl_uniform}
Let $\cG$ denote the set of all DAGs on $[n]$ with maximum indegree $\leq d$, and $P^\ast$ is a distribution on $[k]^n$. For any $G \in \cG$, let $\aoestdsamp{G}{S}$ denote the add-one distribution (see \Cref{def:add_one_distribution}) with $G$-structure constructed given a set $S$ of i.i.d samples from $P^\ast$. Then, if $|S| \geq s_{\rm AO}(\varepsilon,\delta) \triangleq \Theta\left(\frac{nk^{d+1}}{\varepsilon} \log\left(\frac{n^{d+1}k^{d+1}}{\delta}\right)\log\left(\frac{nk^{d+1}}{\varepsilon}\log\left(\frac{1}{\delta}\right)\right)\right)$, the distribution $\aoestdsamp{G}{S}$ satisfies $$\kl(P^\ast \| \aoestdsamp{G}{S}) \leq \min_{G\textrm{-structured distributions } Q} \kl(P^\ast \| Q) + \varepsilon$$ for every $G \in \cG$, with probability $\geq 1 - \delta$ over the samples $S$.
\end{theo}
\begin{proof}
As in the proof of Theorem 6.2 in \cite{bhattacharyya2023near}, for any DAG $G \in \cG$, we can write $\kl(P^\ast \| \aoestdsamp{G}{S}) - \kl(P^\ast \| P^\ast_G)$ as:
\begin{equation}\label{eqn:kl_to_add_one_decomp_bn}
 \sum_{v \in [n]} \sum_{x \in [k]^{\pa_G(v)}} P^\ast(X_{\pa_G(v)} = x) \kl\left(P^\ast(X_v | X_{\pa_G(v)} = x) \,\|\, \aoestdsamp{G}{S}(X_v | X_{\pa_G(v)} = x)\right),
\end{equation}
where $P^\ast_G$ is a distribution that minimizes $\kl(P^\ast \| P)$ among all $G$-structured distributions $P$.

Note that this decomposition depends only on the set of parents of each node $v \in [n]$. Thus to upper bound the LHS for all DAGs in $\cG$, we just need to upper bound $\sum_{x \in [k]^{U}} P^\ast(X_{U} = x) \kl\left(P^\ast(X_v | X_{U} = x) \,\|\, \aoestdsamp{G}{S}(X_v | X_{U} = x)\right)$ for all $v$ and for all subsets $U \subseteq [n]$ with $|U| \leq d$. The number of possible subsets is $\binom{n+d}{d} \leq \left(\frac{2en}{d}\right)^d \leq O((2n)^d)$ since $d \leq n-1$. Not all such configurations will give valid DAGs in $\cG$, but a bound for all $v, U$ is sufficient to upper bound the LHS for every $G \in \cG$.

Now we can proceed as in the proof of Theorem 1.4 of \cite{bhattacharyya2023near}. Let $N = |S|$ (total number of samples from $P^\ast$) and $r$ be the number of samples from $P^\ast(X_v | X_{U} = x)$ for a particular $v \in [n]$, $U \subseteq [n]$ and $x \in [k]^U$. Using the guarantee for the add-one estimator at each node (Theorem 6.1, \cite{bhattacharyya2023near}) and splitting into two cases ($\E[r] > 15 \log(1/\delta)$ and $\E[r] \leq 15 \log(1/\delta)$) as in the proof of (Theorem 1.4, \cite{bhattacharyya2023near}) gives us that
$$P^\ast(X_{U} = x) \kl\left(P^\ast(X_v | X_{U} = x) \,\|\, \aoestdsamp{G}{S}(X_v | X_{U} = x)\right) \leq O\left(\frac{k \log\left(\frac{k}{\delta}\right)\log N}{N}\right)$$
with probability $\geq 1 - \delta$ over the $N$ samples.

Rescaling $\delta$ to $\delta^\prime = \frac{\delta}{n(2kn)^d}$ and using a union bound gives the above inequality for all choices of $v \in [n]$, $U \subseteq [n]$ with $|U| \leq d$ and $x \in [k]^U$ together, with probability $\geq 1 - n(2kn)^d\delta^\prime = 1 - \delta$ over the samples.

Using this upper bound for all terms in equation (\ref{eqn:kl_to_add_one_decomp_bn}) along with the fact that $|\pa_G(v)| \leq d$ for all $G \in \cG$ gives (w.p $\geq 1 - \delta$)
$$\forall\, G \in \cG \,:\, \kl(P^\ast \| \aoestdsamp{G}{S}) - \kl(P^\ast \| P^\ast_G) \leq O\left(\frac{n k^{d+1} \log\left(\frac{n k (2kn)^d}{\delta}\right)\log N}{N}\right).$$

Choosing $N = \Theta\left(\frac{n k^{d+1}}{\eps} \log\left(\frac{n^{d+1} k^{d+1}}{\delta}\right)\log\left(\frac{n k^{d+1}}{\eps}\log\left(\frac{1}{\delta}\right)\right)\right)$ proves the theorem.
\end{proof}

This finitization of the class of $\cG$-structured Bayes nets has the following useful properties.

\begin{lem}\label{lem:bn_net_lb}
	Any $P \in \cN^\cG_{\eps,\delta}$ (where DAGs in $\cG$ have indegree $\leq d$) satisfies $\min_{x \in [k]^n} P(x) \geq \left(\frac{\eps}{C_d n k^{d+1}\log^2\left(\frac{nk}{\eps\delta}\right)}\right)^n$, for some constant $C_d > 0$ that depends on $d$. 
\end{lem}
\begin{proof}
	Each distribution $P \in \cN^{\cG}_{\eps,\delta}$ is an add-one distribution (see \Cref{def:add_one_distribution}) constructed from $s_{\rm AO}(\eps,\delta)$ samples of $P^\ast$. Each such distribution is a Bayes net on an $n$-vertex DAG and hence the joint probability is a product of the $n$ node (conditional) probabilities. Since each node distribution $X_v | X_{\pa(v)}$ is an add-one estimate, the probability value (for each $X_v = x_v | X_{\pa(v)} = x_{\pa(v)}$) is at least $\frac{1}{s_{x_\pa(v)}+k} \geq \frac{1}{s_{\rm AO}(\eps,\delta)+k}$, where $s_{x_\pa(v)}$ is the number of samples consistent with $X_{\pa(v)} = x_{\pa(v)}$. Hence,
    \begin{equation*}
    \begin{aligned}
	P(x) &\geq \prod_{v \in [n]}\frac{1}{s_{x_{\pa(v)}}+k} \geq \left(\frac{1}{s_{\rm AO}(\eps,\delta)+k}\right)^n \numrelop{1}{\geq} \Theta\left(\frac{\eps}{d^3 n k^{d+1}\log\left(\frac{nk}{\delta}\right)\log\left(\frac{nk}{\eps}\log\frac{1}{\delta}\right)}\right)^n\\
    &\geq \left(\frac{\eps}{C_d n k^{d+1}\log^2\left(\frac{nk}{\eps\delta}\right)}\right)^n,
    \end{aligned}
    \end{equation*}
 where the inequality (1) follows from $s_{\rm AO}(\eps,\delta) = \Theta\left(\frac{nk^{d+1}}{\eps}\log\left(\frac{n^{d+1}k^{d+1}}{\delta}\right)\log\left(\frac{nk^{d+1}}{\eps}\log\frac{1}{\delta}\right)\right)$ (see \Cref{def:bayes_dags_finitization}).
\end{proof}

\begin{lem}\label{lem:bn_net_kl}
	Let $\cC^\cG \subseteq \Delta([k]^n)$ be the set of all Bayes nets defined on any DAG $G \in \cG$, where $\cG$ is a finite set of DAGs on $[n]$, and all DAGs in $\cG$ have maximum indegree $d$. 
	For any $P^* \in \Delta([k]^n)$, the following holds with probability $\geq 1 - \delta$ over the samples used in the construction of $\cN^\cG_{\eps,\delta}$: 
	\[
	\min_{P \in \cN^\cG_{\eps,\delta}} \kl(P^* \| P) \leq \min_{P \in \cC^\cG} \kl(P^* \| P) + \eps.
	\]
\end{lem}

\begin{proof}
Take $\cG = \{G_1,\ldots,G_M\}$ and $\cN^\cG_{\eps,\delta} = \{\aoestdsamp{G_1}{S},\ldots,\aoestdsamp{G_M}{S}\}$ for a set of samples $S$ with $|S| \geq s_{\rm AO}(\eps, \delta)$. Applying \Cref{thm:add_one_guarantee_kl_uniform}, we have that for all DAGs $G_i \in \cG$ (where $i$ is not dependent on the samples $S$), with probability $\geq 1 - \delta$,
\begin{equation}\label{eqn:add_one_finit_close}
\kl(P^\ast \| \aoestdsamp{G_i}{S}) \leq \min_{G_i\textrm{-structured distributions } Q} \kl(P^\ast \| Q) + \varepsilon.
\end{equation}

Let $P^\prime$ denote a distribution in $\cC^\cG$ that minimizes $\kl(P^\ast \| P^\prime)$. By definition of $\cC^\cG$, $P^\prime$ will be $G_{i^\ast}$-structured for some $i^\ast \in [M]$ which is fixed as a function of $P^\ast$ (and independent of the samples $S$), which implies that,
\begin{equation*}
\begin{aligned}
\min_{P \in \cN^\cG_{\eps,\delta}} \kl(P^\ast \| P) &\leq \kl(P^\ast \| \aoestdsamp{G_{i^\ast}}{S}) \numrelop{1}{\leq} \kl(P^\ast \| P^\prime) + \varepsilon \numrelop{2}{=} \min_{P \in \cC^\cG} \kl(P^* \| P) + \eps.
\end{aligned}
\end{equation*}
Note that the equality (2) holds by definition of $P^\prime$ as the KL minimizer, and the penultimate inequality (1) holds since $P^\prime$ is $G_{i^\ast}$-structured, applying (\ref{eqn:add_one_finit_close}) with $i = i^\ast$.
\end{proof}
}

\subsection{Sample Complexity for learning Bayes Nets}
We can now collect the tools developed in Sections \ref{sec:reg_kl} and \ref{sec:discret} to prove \Cref{thm:intro_bayes} from the introduction. 
The algorithm that yields the first part of \Cref{thm:intro_bayes} (improper learning) is quite simple to describe (Algorithm \ref{alg:meta_improper} below), corresponding to \EWA\ forecaster with $\eta=1$. In the following theorem, we specify the $\cN$ and $T$ to take so as to obtain desired guarantees.

\begin{algorithm}[htbp]
\setcounter{AlgoLine}{0}
	\caption{\EWA-based learning for Bayes nets}\label{alg:meta_improper}
	\SetKwInOut{Input}{Input}
	\SetKwInOut{Output}{Output}
	\SetKwProg{myfn}{function}{}{}
    \Input{\,\, ${\cN} = \{P_1,\ldots,P_N\}, T$, hyperparameter $\eta > 0$.}
	  \Output{\,\, Sampler for $\widehat{P}$.}
	$w_{i,0} \gets 1 \mbox{ for each } i \in [N].$ \
    
	  \For{$t \gets 1$ to $T$}{%
		  $\mbox{Observe sample } x^{(t)} \sim P^\ast$. \
          
		\For{$i \in [N]$}{%
			 $w_{i,t} \gets w_{i,t-1} \cdot P_i(x^{(t)})^\eta$.
		}
	}
    
	\myfn{\rm\textsc{EWA-Sampler}()}{%
        Sample $t \gets [T]$ uniformly at random. \
		
        Sample $i \sim [N]$ with probability $\frac{w_{i,t-1}}{\sum_{j \in [N]} w_{j,t-1}}$. \
		
        \KwRet $x \sim P_i$. \
	}
	
    \KwRet \textsc{EWA-Sampler}
    \tcc{This is a sampler for $\wh{P}$.}
\end{algorithm}
\begin{algorithm}[htbp]
\setcounter{AlgoLine}{0}
	\caption{\RWM-based learning for Bayes nets}\label{alg:meta_proper}
	\SetKwInOut{Input}{Input}
	\SetKwInOut{Output}{Output}
	\Input{\,\, ${\cN} = \{P_1,\ldots,P_N\}, T$, hyperparameter $\eta > 0$.}
	\Output{\,\, $\wh{P} \in \cN$.}
	$w_{i,0} \gets 1 \mbox{ for each } i \in [N].$ \
	
    \For{$t \gets 1$ to $T$}{
        Sample $i_t \mbox{ from } [N]$ with $\Pr(i_t = i) = \frac{w_{i,t-1}}{\sum_{j \in [N]} w_{j,t-1}}$. \
		
        Observe sample $x^{(t)} \sim P^\ast$. \
		
        \For{$i \in [N]$}{
			$w_{i,t} \gets w_{i,t-1} \cdot P_i(x^{(t)})^\eta$.
		}
	}
    
    Sample $t$ uniformly from $[T]$. \
	
    \KwRet $\wh{P} \gets P_{i_t}$. \
\end{algorithm}

\dels{
\begin{theo}[Bayes net improper learning]\label{thm:bn_learn_imp}
Let $\cC$ be the class of distributions over $[k]^n$ that can be defined as Bayes nets over an unknown DAG with $n$ nodes and in-degree $d$. 

\begin{itemize}
\item
There is an algorithm that for all $P^* \in \Delta([k]^n)$ and all $\eps\in (0,1)$, receives $m=O\left(\frac{nk^{d+1}}{\eps}\log \frac{nk}{\eps}\right)$ i.i.d.~samples from $P^*$ and returns a distribution $\wh{P}$ that is with probability at least $2/3$, an $(\eps, 3)$-approximation for $P^*$ with respect to $\cC$.
\item
There is a realizable PAC-learner for $\cC$ with sample complexity $O\left(\frac{nk^{d+1}}{\eps \delta} \log \frac{nk}{\eps\delta}\right)$. 

\item
There is an agnostic PAC-learner for $\cC$ with sample complexity $\widetilde{O}(n^4k^{2d+2}\eps^{-4} \log(1/\delta))$.
\end{itemize}

\end{theo}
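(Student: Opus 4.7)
The plan is to instantiate the \EWA\ forecaster (Algorithm~\ref{alg:meta_improper}) with the expert set $\cN_{\eps'} = \bigcup_G \cN^G_{\eps'}$, where the union ranges over all DAGs $G$ on $[n]$ with in-degree at most $d$ and $\cN^G_{\eps'}$ is the discretization from Definition~\ref{def:bn_net}. Since there are at most $n^{nd}$ such DAGs, combining this count with Lemma~\ref{lem:bn_net_size} gives $\log|\cN_{\eps'}| = O(nk^{d+1}\log(nk/\eps'))$, and Lemma~\ref{lem:bn_net_kl}, applied to the DAG $G^\star$ attaining $M := \min_{P \in \cC}\kl(P^*\|P)$, yields $\min_{P \in \cN_{\eps'}}\kl(P^*\|P) \le M + \eps'$. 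Once these two ingredients are in place, each of the three items reduces to selecting the discretization parameter $\eps'$ and the horizon $T$ and invoking Lemma~\ref{lem:ewa_reg_final}.

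For item~(i), I would take $\eps' = \eps/6$ and $T = \Theta(\log|\cN_{\eps'}|/\eps)$, which matches the claimed $m$. The first bullet of Lemma~\ref{lem:ewa_reg_final} then gives $\E[\kl(P^*\|\wh{P})] \le M + \eps/3$, and since $\kl(P^*\|\wh{P}) \ge 0$, Markov's inequality yields
\[
\Pr\bigl[\kl(P^*\|\wh{P}) > 3M + \eps\bigr] \le \frac{M + \eps/3}{3M + \eps} = \frac{1}{3},
\]
which is exactly an $(\eps, 3)$-approximation with confidence $2/3$. For item~(ii), the realizable assumption gives $M = 0$, so taking $\eps' = \eps\delta/2$ and $T = \Theta(\log|\cN_{\eps'}|/(\eps\delta))$ yields $\E[\kl(P^*\|\wh{P})] \le \eps\delta$, and Markov's inequality gives $\Pr[\kl(P^*\|\wh{P}) > \eps] \le \delta$, with the stated sample complexity.

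Item~(iii) is the main obstacle, because naive Markov-ing the expected-regret bound produces only a $1/\delta$-type dependence in $T$, which is too lossy for a genuine agnostic guarantee. Instead I would invoke the second bullet of Lemma~\ref{lem:ewa_reg_final}, which delivers a high-probability bound via a bounded-differences argument. The delicate point is that this bound scales with $\log(1/\tau)$, where $\tau$ is a uniform probability lower bound on all experts: Lemma~\ref{lem:bn_net_lb} gives $\tau \ge (\eps'/4nk^2)^n$, so $\log(1/\tau) = \Theta(n\log(nk/\eps'))$ picks up an extra factor of $n$ relative to items~(i) and~(ii). Setting $\eps' = \eps/2$ and solving
\[
\frac{2\log|\cN_{\eps'}|\log(1/\tau)\sqrt{\log(1/\delta)}}{\eps\sqrt{T}} \le \frac{\eps}{2}
\]
for $T$ produces $T = \widetilde{O}(n^4 k^{2d+2} \eps^{-4} \log(1/\delta))$, matching the claim. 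The $n^4$ dependence (versus essentially $n$ in items~(i) and~(ii)) traces to one factor of $n$ from $\log|\cN_{\eps'}|$ and another from $\log(1/\tau)$, each squared because the rate in the second bullet of Lemma~\ref{lem:ewa_reg_final} is $1/\sqrt{T}$ rather than $1/T$; this seems unavoidable within the bounded-differences framework used there. Every other step is a routine substitution into Lemmas~\ref{lem:bn_net_size}, \ref{lem:bn_net_lb}, \ref{lem:bn_net_kl}, and~\ref{lem:ewa_reg_final}.
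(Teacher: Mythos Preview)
Your proposal is correct and follows essentially the same approach as the paper: instantiate \EWA\ over the union of the discretized nets $\cN^G_{\eps'}$, use Lemma~\ref{lem:bn_net_size}, Lemma~\ref{lem:bn_net_kl}, and Lemma~\ref{lem:bn_net_lb} to bound $\log|\cN|$, the approximation error, and $\log(1/\tau)$, and then invoke the two bullets of Lemma~\ref{lem:ewa_reg_final} with Markov for items (i) and (ii) and the bounded-differences bound for item (iii). The constants you choose differ slightly from the paper's (e.g., $\eps'=\eps\delta/2$ versus $\eps\delta/6$), but the structure and all substantive steps match.
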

\begin{proof}
Let $\cG$ be the collection of all DAGs on $n$ nodes that have in-degree $d$, then we can say that $|\cG| \leq (n!)^{d+1}$. Let $\cN = \bigcup\limits_{G \in \cG} \cN^G_{\eps/6}$, using the notation of \Cref{def:bn_net}. Then, using \Cref{lem:bn_net_size}, we have:
\begin{equation}
\label{eqn:size}
\log |\cN| = O\left(\log |\cG|  + nk^{d+1}\log \frac{nk}{\eps}\right) = O\left(nk^{d+1}\log \frac{nk}{\eps} \right).
\end{equation}
By \Cref{lem:bn_net_kl} and the definition of $\cN$, 
\begin{equation}\label{eqn:close}
\min_{P \in \cN} \kl(P^* \| P) \leq \min_{P \in \cC} \kl(P^* \| P) + \eps/6.
\end{equation}

We begin with the first item. We run \EWA-based learning (Algorithm \ref{alg:meta_improper}) using $\cN$ as the expert set, $\eta=1$, and loss function $\ell(P,x) = -\log P(x)$. By the first part of \Cref{lem:ewa_reg_final} and Markov's inequality, with probability at least $2/3$:
\begin{align}
\label{eqn:semi}
\kl\left(P^* \| \frac1T \sum_{t=1}^T \wh{P}_t\right) \leq 3 \left(\min_{P \in \cN} \kl(P^* \| P)  + \frac{\log|\cN|}{T} \right). 
\end{align}
Using (\ref{eqn:size}), (\ref{eqn:close}) and our choice of $T$ now ensures that right-hand-side in (\ref{eqn:semi}) is at most $3 \min_{P \in \cC} \kl(P^* \| P) + \eps$. 

For the second item, let $\cN' = \bigcup\limits_{G \in \cG} \cN_{\eps\delta/6}^G$. By \Cref{lem:bn_net_kl}, if $P^* \in \cC$, $\min_{P \in \cN}\kl(P^*\| P) \leq \eps\delta/6$. We can now run the same argument as above with $\eps$ being $\eps\delta$ to obtain the result.

We now prove the third item. Suppose $\eps, \delta \in (0,1)$. By \Cref{lem:bn_net_lb}, for every $P \in \cN$, $\min_{x \in [k]^n} P(x) \geq \left(\frac{\eps}{24nk^2}\right)^n$. We run \EWA\ using $\cN$ as the expert set, $\eta = \frac{\eps}{2\sqrt{T}\log(1/\tau)\sqrt{\log(1/\delta)}} = \frac{\eps}{2\sqrt{T}n\log(\frac{24nk^2}{\eps})\sqrt{\log(1/\delta)}}$ and the same loss function.  By the second part of \Cref{lem:ewa_reg_final}, we get that with probability at least $1-\delta$:
\begin{equation}\label{eqn:full}
\kl\left(P^* \| \frac1T \sum_{t=1}^T \wh{P}_t\right) \leq \min_{P \in \cN} \kl(P^* \| P) + \frac{4n \log |\cN| \log(24nk^2/\eps) \sqrt{\log(1/\delta)}}{\eps\sqrt{T}} + \frac\eps2
\end{equation}
Using (\ref{eqn:size}) and (\ref{eqn:close}), if $T = \Theta\left(n^4 k^{2d+2} \eps^{-4} \log^4(nk/\eps) \log(1/\delta)\right)$, then the right-hand-side of (\ref{eqn:full}) is at most $ \min_{P \in \cC} \kl(P^* \| P) + \eps$. 
\end{proof}
}
\adds{
\begin{theo}[Bayes net improper learning]\label{thm:bn_learn_imp}
	Let $\cC$ be the class of distributions over $[k]^n$ that can be defined as Bayes nets over any unknown DAG with $n$ nodes and in-degree $\leq d$. 
	
	\begin{itemize}
		\item
		There is an algorithm that for all $P^* \in \Delta([k]^n)$ and all $\eps\in (0,1)$, receives $m=O\left(\frac{d^3 n k^{d+1}}{\eps}\log^2 \frac{nk}{\eps}\right)$ i.i.d.~samples from $P^*$ and returns a distribution $\wh{P}$ that is with probability at least $2/3$, an $(\eps, 4)$-approximation for $P^*$ with respect to $\cC$.
		\item
		There is a realizable PAC-learner for $\cC$ with sample complexity $O\left(\frac{d^3 n k^{d+1}}{\eps \delta} \log^2\frac{nk}{\eps\delta}\right)$. 
		
		\item
		There is an agnostic PAC-learner for $\cC$ with sample complexity $$\widetilde{O}\left(\max\left\{\tfrac{d^4 n^4 \log^2(n)}{\eps^{4}} \log^2\left(\tfrac{nkd\log^2(nk/\eps\delta)}{\eps}\right) \log(1/\delta), \tfrac{d^3 n k^{d+1}}{\eps}\log\left(\tfrac{nk}{\delta}\right)\log\left(\tfrac{nk}{\eps}\log\tfrac{1}{\delta}\right)\right\}\right).$$
	\end{itemize}
	
\end{theo}
\begin{proof}
	Let $\cG$ be the collection of all DAGs on $n$ nodes that have in-degree $d$, then we can say that $|\cG| \leq (n!)^{d+1}$. Suppose $\cN = \cN^\cG_{\eps/8,\delta/2}$, using the notation of \Cref{def:bayes_dags_finitization}. By construction, we have 
	\begin{equation}\label{eqn:size}
		\log(|\cN|) \leq \log(|\cG|) \leq (d+1) \log n! \leq (d+1) n \log(n),
	\end{equation}
	and the sample complexity of the learning algorithm will be $s_{\rm AO}\left(\frac{\eps}{8}, \frac{\delta}{2}\right)$ (to construct $\cN$) + $T$ (the number of online rounds), where 
	\begin{equation}\label{eqn:net_samples}
		s_{\rm AO}(\eps/8,\delta/2) \leq O\left(\frac{d^3 n k^{d+1}}{\eps}\log\left(\tfrac{nk}{\delta}\right)\log\left(\tfrac{nk}{\eps}\log\tfrac{1}{\delta}\right)\right)\, \mbox{(using \Cref{def:bayes_dags_finitization} and \Cref{thm:add_one_guarantee_kl_uniform})}.
	\end{equation}
	
	By \Cref{lem:bn_net_kl} and the definition of $\cN$ above, 
	\begin{equation}\label{eqn:close}
		\min_{P \in \cN} \kl(P^* \| P) \leq \min_{P \in \cC} \kl(P^* \| P) + \eps/8,
	\end{equation}
	with probability $\geq 1 - \delta/2$ over the samples used to construct $\cN$.
	
	We begin with the first item. Here, we can take $\delta = \frac{1}{8}$ for the purpose of constructing $\cN$. We run \EWA-based learning (Algorithm \ref{alg:meta_improper}) using $\cN = \cN^{\cG}_{\eps/8,1/16}$ as the expert set, $\eta=1$, and loss function $\ell(P,x) = -\log P(x)$. By the first part of \Cref{lem:ewa_reg_final} and Markov's inequality, with probability at least $3/4$:
	\begin{align}
		\label{eqn:semi}
		\kl\left(P^* \| \frac1T \sum_{t=1}^T \wh{P}_t\right) \leq 4 \left(\min_{P \in \cN} \kl(P^* \| P)  + \frac{\log|\cN|}{T} \right). 
	\end{align}
	Using (\ref{eqn:size}), and combining (\ref{eqn:close}) and (\ref{eqn:semi}) with a union bound --- choosing $T = \frac{8 (d+1) n \log(n)}{\eps} \geq 8 \frac{\log(|\cN|)}{\eps}$ --- now ensures that right-hand-side in (\ref{eqn:semi}) is at most $4 \min_{P \in \cC} \kl(P^* \| P) + \eps$ with probability at least $1 - \left(\frac{1}{4} + \frac{1}{16}\right) \geq \frac{2}{3}$. The sample complexity will be dominated by $s_{\rm AO}(\frac{\eps}{8}, \frac{1}{16}) \leq O\left(\frac{d^3 n k^{d+1}}{\eps}\log^2\left(\frac{nk}{\eps}\right)\right)$ (see \Cref{def:bayes_dags_finitization}). 
	
	For the second item, let $\cN' = \cN^{\cG}_{\eps\delta/4,\delta/2}$ as the expert set for the algorithm. By \Cref{lem:bn_net_kl}, if $P^* \in \cC$, $\min_{P \in \cN'}\kl(P^*\| P) \leq \eps\delta/4$ with probability $\geq 1 - \delta/2$. We can now run the same argument as above with $\eps$ being $\eps\delta/4$ (applying Markov's inequality with error probability $\delta/2$) to obtain the result.
	
	We now prove the third item. Suppose $\eps, \delta \in (0,1)$. By \Cref{lem:bn_net_lb} and $|\cG| \leq (n!)^{d+1} \leq n^{nd}$, for every $P \in \cN$, $\min_{x \in [k]^n} P(x) \geq \Theta\left(\frac{\eps}{d^2 n k^{d+1}\log\left(\frac{nk}{\delta}\right)\log\left(\frac{nk}{\eps}\log\frac{1}{\delta}\right)}\right)^n = \tau$. Thus, $\log(1/\tau) \leq \Theta\left(n\log\left(\frac{d^2 n k^{d+1}\log^2\left(\frac{nk}{\eps\delta}\right)}{\eps}\right)\right)$. We run \EWA\ using $\cN$ as the expert set, $\eta = \frac{\eps}{4\sqrt{T}\log(1/\tau)\sqrt{\log(1/\delta)}} = \Theta\left(\frac{\eps}{4\sqrt{T}n\log\left(\frac{d^2 n k^{d+1}\log^2\left(\frac{nk}{\eps\delta}\right)}{\eps}\right)\sqrt{\log(2/\delta)}}\right)$ and the same loss function.  By the second part of \Cref{lem:ewa_reg_final}, we get that with probability at least $1-\delta/2$:
	\begin{equation}\label{eqn:full}
		\kl\left(P^* \| \frac1T \sum_{t=1}^T \wh{P}_t\right) \leq \min_{P \in \cN} \kl(P^* \| P) + \Theta\left(\tfrac{4 d n^2 \log(n) \log\left(\frac{d^2 n k^{d+1}\log^2\left(\tfrac{nk}{\eps\delta}\right)}{\eps}\right) \sqrt{\log(2/\delta)}}{\eps\sqrt{T}}\right) + \tfrac\eps2
	\end{equation}
	Using (\ref{eqn:size}) and (\ref{eqn:close}) along with a union bound, if $T = \Theta\left(d^4 n^4 \log^2(n) \eps^{-4} \log^2(\frac{nkd\log^2(nk/\eps\delta)}{\eps}) \log(1/\delta)\right)$, then the right-hand-side of (\ref{eqn:full}) is at most $ \min_{P \in \cC} \kl(P^* \| P) + \eps$ and the bound will hold with probability $\geq 1 - \delta$. Here, the sample complexity will be
	\begin{equation*}
	\begin{aligned}
	T + s_{\rm AO}(\eps/8,\delta/2) &\leq O\left(\frac{d^4 n^4 \log^2(n)}{\eps^{4}} \log^2\left(\frac{nkd\log^2(nk/\eps\delta)}{\eps}\right) \log(1/\delta)\right.\\
	&\left.+ \frac{d^3 n k^{d+1}}{\eps}\log\left(\frac{nk}{\delta}\right)\log\left(\frac{nk}{\eps}\log\frac{1}{\delta}\right)\right).
	\end{aligned}
	\end{equation*} 
\end{proof}
}
The above theorem is interesting in the context of improper learning of Bayes nets with constant in-degree $d$, because it gives a nearly-optimal sample complexity (optimal up to logarithmic factors) for realizable PAC learning with constant success probability (second part) and for getting a $(\eps,3)$-approximation with constant success probability (first part).

While the next theorem does not have significant advantages for general in-degree $d$ Bayesian networks due to the less efficient nature of Algorithm \ref{alg:meta_proper} and its higher sample complexity compared to established algorithms such as Chow-Liu for proper learning, we prove it to lay the groundwork for subclasses of Bayesian networks (such as trees, polytrees, and chordal graphs). For these subclasses, the sampling method employed in Algorithm \ref{alg:meta_proper} can be optimized to yield efficient algorithms for proper learning.
\dels{
\begin{theo}\label{thm:bn_learn_proper}
Let $\cC$ be the class of distributions over $[k]^n$ that can be defined as Bayes nets over an unknown DAG with $n$ nodes and in-degree $d$. 
There is a \emph{proper} agnostic $(\eps, \delta)$-PAC-learner for $\cC$ with sample complexity $\widetilde{O}\left(\frac{n^3 k^{d+1}}{\delta^2\eps^2} \log^3\left(\frac{nk}{\eps}\right)\right)$ for any $\delta \in (0, 1)$ and $\eps > 0$.
\end{theo}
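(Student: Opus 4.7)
The plan is to run the \RWM-based learning algorithm (\Cref{alg:meta_proper}) with expert set $\cN = \bigcup_{G \in \cG} \cN^G_{\eps/2}$, where $\cG$ is the collection of all DAGs on $n$ nodes with in-degree at most $d$, and then invoke the high-probability regret bound from the second item of \Cref{lem:rwm_reg_final}. The crucial observation that makes this approach yield a \emph{proper} learner is that every element of $\cN^G_{\eps/2}$ is itself a Bayes net on $G$ (with discretized conditional probabilities), so $\cN \subseteq \cC$, and hence the output $\widehat{P} \in \cN$ automatically lies in $\cC$.

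Next, I would quantify the ingredients needed to instantiate \Cref{lem:rwm_reg_final}. By \Cref{lem:bn_net_lb}, every $P \in \cN$ satisfies $\min_{x} P(x) \geq \tau_0 := (\eps/(8nk^2))^n$, so $\log(1/\tau_0) = O(n \log(nk/\eps))$. Since $|\cG| \leq (n!)^{d+1}$, \Cref{lem:bn_net_size} yields
\[
\log|\cN| \leq \log|\cG| + \max_{G \in \cG} \log |\cN^G_{\eps/2}| = O\!\left(nk^{d+1} \log(nk/\eps)\right).
\]
Plugging into the second item of \Cref{lem:rwm_reg_final} with expert set $\cN$, with probability at least $1-\delta$,
\[
\kl(P^* \| \widehat{P}) \leq \min_{P \in \cN} \kl(P^* \| P) + \frac{\log(1/\tau_0)\bigl(\sqrt{\log|\cN|} + \sqrt{\log(2/\delta)}\bigr)}{\delta \sqrt{T}}.
\]

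To bridge $\cN$ and $\cC$, I would apply \Cref{lem:bn_net_kl} to get $\min_{P \in \cN} \kl(P^* \| P) \leq \min_{P \in \cC} \kl(P^* \| P) + \eps/2$. Then it suffices to choose $T$ large enough that the RWM error term is at most $\eps/2$. Substituting the estimates for $\log(1/\tau_0)$ and $\log|\cN|$ into the inequality $\log(1/\tau_0)\bigl(\sqrt{\log|\cN|}+\sqrt{\log(2/\delta)}\bigr) \leq (\eps\delta/2)\sqrt{T}$ and solving gives
\[
T = O\!\left(\frac{n^3 k^{d+1} \log^3(nk/\eps) + n^2 \log^2(nk/\eps) \log(1/\delta)}{\delta^2 \eps^2}\right) = \widetilde{O}\!\left(\frac{n^3 k^{d+1}}{\delta^2 \eps^2} \log^3\!\frac{nk}{\eps}\right),
\]
which matches the claimed sample complexity.

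In short, the proof is a bookkeeping exercise that chains together the pre-built pieces: the \RWM\ high-probability regret bound, the size and min-probability bounds for the Bayes-net discretization, and the KL-approximation property of the discretization. There is no conceptual obstacle here, because the theorem only asserts a \emph{sample} complexity guarantee; the genuine difficulty, namely implementing the \RWM\ sampling step in polynomial time over the exponentially large set $\cN$, is exactly what is deferred to the later sections (\Cref{sec:learntreedist,sec:learnchordaldist,sec:learnbayesnetboundedvc}) that exploit counting/sampling structure for specific subclasses.
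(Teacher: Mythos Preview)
Your proposal is correct and follows essentially the same approach as the paper: both instantiate the \RWM-based learner on the discretized expert set $\cN = \bigcup_{G \in \cG}\cN^G_{\eps/2}$, invoke the high-probability bound of \Cref{lem:rwm_reg_final} with $\log(1/\tau_0)=O(n\log(nk/\eps))$ and $\log|\cN|=O(nk^{d+1}\log(nk/\eps))$, bridge $\cN$ to $\cC$ via \Cref{lem:bn_net_kl}, and solve for $T$. Your explicit remark that $\cN\subseteq\cC$ (hence properness) and your closing observation about computational efficiency being deferred are both apt and match the paper's intent.
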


\begin{proof}
Define $\cN = \bigcup\limits_{G \in \cG} \cN^G_{\eps/2}$ --- where $\cG$ is the collection of all DAGs on $n$ nodes with in-degree $d$ --- the same way as in the proof of \Cref{thm:bn_learn_imp}.  We run the \RWM-based learning algorithm (Algorithm \ref{alg:meta_proper}) with $\cN$ as the expert set, $\eta = \sqrt{(8 \log |\cN|)/T}$, and the log loss function. By the second part of \Cref{lem:rwm_reg_final}, and noting that:
\begin{enumerate}
    \item[(i)] The maximum loss at any round is at most $L_{\max} = \log(1/\tau) = n \log (8nk^2/\eps)$ (using \Cref{lem:bn_net_lb}).
    \item[(ii)] the size of the cover $\cN$ satisfies $\log |\cN| \leq O\left(n k^{d+1} \log \frac{nk}{\varepsilon}\right)$ (using \Cref{lem:bn_net_size} and the fact that $\log |\cG| \leq (d+1) n \log n$).
\end{enumerate}
we have that, for any $\delta > 0$, with probability $\geq 1 - \delta$ over the random samples as well as the randomness of the algorithm, we have:
\begin{equation*}
\kl(P^\ast \| \wh{P}) \leq \min_{P \in \cN} \kl(P^\ast \| P) + \tfrac{n \log \left(\frac{8nk^2}{\eps}\right)\left(\sqrt{C \cdot n k^{d+1} \log(nk/\eps) } + \sqrt{\log(2/\delta)}\right)}{\delta\sqrt{T}},
\end{equation*}
where $C$ is some constant $\geq 1$.

Using \Cref{lem:bn_net_kl}, the definition of $\cC$, and the fact that $\cN = \bigcup_{G \in \cG} \cN^G_{\eps/2}$, we have that
\[
\kl(P^\ast \| \wh{P}) \leq \min_{P \in \cC} \kl(P^\ast \| P) + \frac{n \log \left(\frac{8nk^2}{\eps}\right)\left(\sqrt{C \cdot n k^{d+1} \log(nk/\eps) } + \sqrt{\log(2/\delta)}\right)}{\delta\sqrt{T}} + \varepsilon/2.
\]

Now if we choose any $T \geq \frac{8C n^2 \log^2\left(\frac{8nk^2}{\eps}\right)\left(nk^{d+1}\log(nk/\eps) + \log(2/\delta)\right)}{\delta^2\eps^2}$, we get $\kl(P^\ast \| \wh{P}) \leq \varepsilon/2 + \varepsilon/2 = \varepsilon$ with probability $\geq 1 - \delta$ (this bound for $T$ uses the fact that $(\sqrt{x} + \sqrt{y})^2 \leq 2(x + y)$ by the AM-GM inequality). This gives us the $\widetilde{O}\left(\frac{n^3 k^{d+1}}{\delta^2\eps^2} \log^3\left(\frac{nk}{\eps}\right)\right)$ sample complexity for proper agnostic $(\eps,\delta)$-PAC learning.
\end{proof}
}

\adds{
\begin{theo}[Bayes net proper learning]\label{thm:bn_learn_proper}
	Let $\cC$ be the class of distributions over $[k]^n$ that can be defined as Bayes nets over any unknown DAG with $n$ nodes and in-degree $\leq d$. 
	There is a \emph{proper} agnostic $(\eps, \delta)$-PAC-learner for $\cC$ with sample complexity $O\left(\max\left\{\frac{d^3 n^3 \log^3\left(\frac{nkd}{\eps\delta}\right)}{\eps^2\delta^2}, \frac{d^3 n k^{d+1}}{\eps}\log^2\left(\frac{nk}{\eps\delta}\right)\right\}\right)$ for any $\delta \in (0, 1)$ and $\eps > 0$.
\end{theo}

\begin{proof}
	Define $\cN = \cN^{\cG}_{\eps/2,\delta/2}$ --- where $\cG$ is the collection of all DAGs on $n$ nodes with in-degree $\leq d$ --- the same way as in the proof of \Cref{thm:bn_learn_imp}.  We run the \RWM-based learning algorithm (Algorithm \ref{alg:meta_proper}) with $\cN$ as the expert set, $\eta = \sqrt{(8 \log |\cN|)/T}$, and the log loss function. By the second part of \Cref{lem:rwm_reg_final}, and noting that:
	\begin{enumerate}
		\item[(i)] The maximum loss at any round is at most (using \Cref{lem:bn_net_lb}; see also the third item in the proof of \Cref{thm:bn_learn_imp})
		\begin{equation*}
		\begin{aligned}
		L_{\max} &= \log(1/\tau) \leq O\left(n\log\left(\frac{d^3 n k^{d+1}\log^2\left(\frac{nk}{\eps\delta}\right)}{\eps}\right)\right).
		\end{aligned}
		\end{equation*}
		\item[(ii)] the size of the cover $\cN$ satisfies $\log |\cN| \leq \log |\cG| \leq (d+1) n \log(n)$.
	\end{enumerate}
	we have that, for any $\delta > 0$, with probability $\geq 1 - \delta/2$ over the random samples as well as the randomness of the algorithm, we have:
	\begin{equation*}
	\kl(P^\ast \| \wh{P}) \leq \min_{P \in \cN} \kl(P^\ast \| P) + \underbrace{C\left(\tfrac{\left(n\log\left(\frac{d^3 n k^{d+1}\log^2\left(\frac{nk}{\eps\delta}\right)}{\eps}\right)\right)\left(\sqrt{(d+1)n\log(n)} + \sqrt{\log(4/\delta)}\right)}{\delta\sqrt{T}}\right)}_{\triangleq \mathsf{ErrorBound}},
	\end{equation*}
	where $C$ is some constant $> 1$.
	
	Using the above inequality with a union bound, along with \Cref{lem:bn_net_kl}, the definition of $\cC$, and the fact that $\cN = \cN^{\cG}_{\eps/2,\delta/2}$, we have that
	\[
	\kl(P^\ast \| \wh{P}) \leq \min_{P \in \cC} \kl(P^\ast \| P) + \mathsf{ErrorBound} + \varepsilon/2,
	\]
	with probability $\geq 1 - \delta$ over the samples and the randomness of \RWM.
	
	Now if we choose any $T \geq \Theta\left(\frac{d^2 n^2 \log^2 \left(\frac{nkd\log^2(nk/\eps\delta)}{\eps}\right)\left(dn\log(n) + \log(4/\delta)\right)}{\eps^2\delta^2}\right)$, we get $\kl(P^\ast \| \wh{P}) \leq \varepsilon/2 + \varepsilon/2 = \varepsilon$ with probability $\geq 1 - \delta$ (this bound for $T$ uses the fact that $(\sqrt{x} + \sqrt{y})^2 \leq 2(x + y)$ by the AM-GM inequality). The total sample complexity of the algorithm will be $T +s_{\rm AO}(\eps/2, \delta/2)$ (see \Cref{thm:add_one_guarantee_kl_uniform} and \Cref{lem:bn_net_kl}). This gives us the $$O\left(\frac{d^3 n^3 \log^3\left(\frac{nkd}{\eps\delta}\right)}{\eps^2\delta^2} + \frac{d^3 n k^{d+1}}{\eps}\log^2\left(\frac{nk}{\eps\delta}\right)\right)$$ sample complexity for proper agnostic $(\eps,\delta)$-PAC learning.
\end{proof}
}

\section{Learning Chordal-structured distributions}\label{sec:learnchordaldist}

In this section, we show that {\em chordal-structured distributions} can be learnt efficiently. Before proceeding to describe our results, we first define some notions that will be used in our results and proofs. 

\subsection{Preliminaries about Chordal Graphs} 
Given an undirected graph $G=(V,E)$ and subsets $S, T \subseteq V$, we let $G[S]$ denote the induced subgraph of $G$ on $S$, $E(S)$ denote the edge set of $G[S]$, and $E(S, T)$ denote the set of edges with one endpoint in $S$ and the other in $T$. For a vertex $v$ of $G$, let $\nbr_G(v)$ denote the set of adjacent vertices of $v$ in $G$ (vertices $u$ such that $\{u,v\} \in E(G)$).

An undirected graph is {\em chordal} if every cycle of length at least 4 contains a chord, that is an edge connecting two vertices of the cycle which is not part of the cycle. 
\begin{defi}[Clique tree]
Let $G=(V,E)$ be a graph with vertex set $V$ and edge set $E$. The \emph{clique tree}, denoted by $\tg$, of $G$ is a tree that has the maximal cliques of $G$ as its vertices and for every two maximal cliques $C$ and $C'$, each clique on the path from $C$ to $C'$ in $\tg$ contains $C\cap C'$. 
$\tg$ has the \emph{induced subtree} property: for every vertex $v \in V$, the set of nodes in $\tg$ that contains $v$ forms a connected subtree of $\tg$.   The {\em treewidth} of $G$ equals one less than the size of the largest maximal clique in $G$.
\end{defi}

Let $G$ be a chordal graph, and let $\tg$ be a clique tree of $G$. Fix an arbitrary maximal clique $C_r \in V(\tg)$ and we view $\tg$ as a tree rooted at $C_r$. We denote this {\em rooted clique tree} as $\tg_{C_r}$ and as $\cT_{C_r}$ if $G$ is clear from context. For any maximal clique $C$, let $\pa_{\cT_{C_r}}(C)$ denote the parent of $C$ in the rooted tree $\cT_{C_r}$, and let $\cT_{C}$ denote  the subtree of $\cT_{C_r}$ rooted at $C$. Let $V[\cT_C]$ denote the vertex set of $G[\cT_C]$, that is, $\bigcup_{C' \in V(\mathcal{T}_C)} C'$. For notational convenience, we will use $\cT_{C}$ to denote both the subtree of $\tg$ as well the vertex set $V[\cT_C]$ when the usage will be clear from the context. Thus, $G[\mathcal{T}_C]$ denotes the subgraph of $G$ induced by the vertices in $V[\mathcal{T}_C]$.

\begin{defi}[Separator set]
Let $G$ be a chordal graph, and let $\cT_{C_r}$ be a rooted clique tree of $G$ for a maximal clique $C_r$. For $C \in V(\cT_{C_r})$, the \emph{separator} of $C$ with respect to $\cT_{C_r}$ is defined as follows:
$$\sep(C)= C \cap \mathsf{Pa}_{\cT_{C_r}}(C)$$
\end{defi}

Next we define the notion of link set, which is crucially used in our proofs.

\begin{defi}[Link set]
Let $G=(V,E)$ be a chordal graph, and $\cT_{C_r}$ be a rooted clique tree of $G$ for a maximal clique $C_r$. For $C \in V(\cT_{C_r})$, the \emph{link set of $C$} is defined as follows:
$$\lnk(C)= E(C, V[\cT_C])$$
\end{defi}

An {\em orientation} of an edge set $F$ assigns a direction to each edge in $F$; an orientation is {\em acyclic} if it does not give rise to a directed cycle. The {\em indegree} of an orientation is the maximum number of incident edges which are oriented inward at any vertex. Two acyclic orientations on edge sets $F$ and $F'$ are said to be {\em consistent} with each other if they agree on the edges in $F \cap F'$.
\begin{defi}[Indegree-Bounded Acyclic Orientations]
Let $G = (V,E)$ be an undirected graph.
For any subset of edges $F\subseteq E$ and integer $d\geq 0$, $\ao_d(F)$ denotes the set of all acyclic orientations of $F$ with indegree at most $d$.   Given an orientation $\mathcal{O}$ that assigns orientations to a subset of $E$, we let $\ao_d(F; \mathcal{O})$ denote the subset of $\ao_d(F)$ that is consistent with $\mathcal{O}$. If $F$ corresponds to the edge set of a subgraph $H$, then we also use the notations $\ao_d(H)$ and $\ao_d(H, \cO)$ respectively. 
\end{defi}
\begin{figure}[htpb]
\caption{In the left panel, a chordal graph and a clique tree decomposition with reference clique $C = \textrm{DEG}$. In the left panel, the edges of $\lnk(C)$ are in red and the vertices $V[\cT_C]$ are in bold. In the right panel, the nodes of $\cT_C$ are in green, and the separator vertices in each node of the clique tree are colored in blue. The same chordal graph and a different clique tree decomposition with reference clique $C = \textrm{ACDE}$. The colors have the same meaning as in the left.\label{fig:clique_tree_decomp}}
\subfigure{
        \label{fig:chordal_sf_1}
        \includegraphics[width=0.19\textwidth]{figures/ex_chordal_graph.tikz}
        \hspace*{5pt}
        \includegraphics[width=0.25\textwidth]{figures/ex_clique_tree_decomp.tikz}      
}\subfigure{
    \label{fig:chordal_sf_2}
    \includegraphics[width=0.19\textwidth]{figures/ex_chordal_graph2.tikz} 
    \hspace*{5pt}
    \includegraphics[width=0.25\textwidth]{figures/ex_clique_tree_decomp2.tikz}
}
\end{figure}

We will use below some standard observations about chordal graphs and acyclic orientations; for the sake of completeness, we give their proofs.

\begin{lem}[Lemma 1 of \cite{bezakova2022counting}, restated]\label{lem:chordal_disjoint_indegree}
Let $G=(V,E)$ be a chordal graph and consider a clique tree $\mathcal{T}_{C_r}$ of $G$ rooted at a node $C_r$. Let $C$ be a node in $\mathcal{T}_{C_r}$ and $C_1, \ldots, C_{\ell}$ be its children in $\mathcal{T}_{C_\ell}$. Then the edge sets of the graphs $G[\cT_{C_1} \setminus \sep(C_1)], \ldots, G[\cT_{C_\ell} \setminus \sep(C_\ell)]$ are mutually disjoint.
\end{lem}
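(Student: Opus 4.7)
The plan is to reduce the edge-disjointness claim to a vertex-disjointness claim on the sets $V[\cT_{C_i}] \setminus \sep(C_i)$ and then prove that vertex-disjointness via the induced subtree property of clique trees.

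First, I would set up notation and observe the easy direction: if $u,v \in V(G)$ are two vertices with $\{u,v\} \in E(G[\cT_{C_i} \setminus \sep(C_i)])$, then by definition of an induced subgraph, both $u$ and $v$ lie in $V[\cT_{C_i}] \setminus \sep(C_i)$. Hence it suffices to show that for distinct children $C_i, C_j$ of $C$,
\[
\bigl(V[\cT_{C_i}] \setminus \sep(C_i)\bigr) \cap \bigl(V[\cT_{C_j}] \setminus \sep(C_j)\bigr) = \emptyset,
\]
since then the induced subgraphs have disjoint vertex sets and therefore disjoint edge sets.

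To prove this vertex-disjointness, I would invoke the induced subtree property: for any $v \in V(G)$, the set of nodes of $\tg$ containing $v$ forms a connected subtree of $\tg$. Suppose for contradiction that some vertex $v$ lies in both $V[\cT_{C_i}]$ and $V[\cT_{C_j}]$ for $i \neq j$. Then there exist cliques $C_i' \in V(\cT_{C_i})$ and $C_j' \in V(\cT_{C_j})$ with $v \in C_i' \cap C_j'$. In the rooted tree $\mathcal{T}_{C_r}$, the unique path from $C_i'$ to $C_j'$ must pass through the common ancestor $C$ (because $\cT_{C_i}$ and $\cT_{C_j}$ are disjoint subtrees meeting only at $C$, which sits above both). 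By the induced subtree property, every node on this path contains $v$, so in particular $v \in C$ and $v \in C_i$ (the child of $C$ on the path toward $C_i'$). Therefore $v \in C \cap C_i = \sep(C_i)$, and symmetrically $v \in \sep(C_j)$. This shows
\[
V[\cT_{C_i}] \cap V[\cT_{C_j}] \subseteq \sep(C_i) \cap \sep(C_j),
\]
so after removing $\sep(C_i)$ (resp.\ $\sep(C_j)$) from each side, the intersection becomes empty, which is exactly what we wanted.

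There is no real obstacle here; the only subtle point is ensuring that the path between two nodes of distinct subtrees $\cT_{C_i}$ and $\cT_{C_j}$ really goes through $C$, which follows immediately from the fact that $C_i, C_j$ are distinct children of $C$ in the rooted clique tree. Once this is in hand, the passage from disjoint vertex sets of induced subgraphs to disjoint edge sets is automatic, completing the proof.
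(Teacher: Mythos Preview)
Your proof is correct and uses essentially the same idea as the paper: the induced subtree property forces any element shared between $\cT_{C_i}$ and $\cT_{C_j}$ to lie in $C$ (and in $C_i$, $C_j$), hence in the separators. The only difference is cosmetic: the paper argues directly at the level of a shared edge $\{u,v\}$, while you first reduce to vertex-disjointness of the sets $V[\cT_{C_i}]\setminus\sep(C_i)$ and then derive edge-disjointness; your version thus proves a slightly stronger intermediate statement, but the underlying argument is the same.
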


\begin{proof}
We will prove this by contradiction. Let us assume that there exists $i \neq j$ with $i, j \in [\ell]$ such that $G[\cT_{C_i} \setminus \sep(C_i)]$ and $G[\cT_{C_j} \setminus \sep(C_j)]$ share an edge $e=\{u,v\}$. This implies that both $G[\mathcal{T}_{C_i}]$ and $G[\mathcal{T}_{C_j}]$ contain the edge $e$. However, this would imply $C \supseteq \{u,v\}$ by the \emph{inducted subtree property} (since any pair of nodes in $\cT_{C_i}$ and $\cT_{C_j}$ can only be connected in $\tg$ through $C$), and thus that $e$ will be present in both $E(\sep(C_i))$ and $E(\sep(C_j))$. This contradicts our assumption on $e$.
\end{proof}

\begin{lem}[See Theorem 11 in Chapter 4 of \cite{sun2022efficient}]\label{lem:chordal_bij_indegree}
Let $G$ be a chordal graph which is acyclically orientable with indegree $\leq d$, and consider a rooted clique tree $\mathcal{T}_{C_r}$ of $G$.
Consider a non-leaf node $C \in \cT_{C_r}$ and $C_1, \ldots, C_{\ell}$ be the set of children of $C$ in the rooted tree. Consider an acyclic orientation $\mathcal{O}_C$ of $\lnk(C)$ with indegree $\leq d$, and let $\mathcal{O}_{\sep(C_i)}$ be the orientation $\mathcal{O}_C$ restricted to $E(\sep(C_i), \cT_{C_i})$ for every $i \in [\ell]$. Then there exists a bijection between $\ao_d(\cT_C, \mathcal{O}_C)$ and $\ao_d(\cT_{C_1}, \mathcal{O}_{\sep(C_1)}) \times \ldots \times \ao_d(\cT_{C_\ell}, \mathcal{O}_{\sep(C_\ell)})$.     
\end{lem}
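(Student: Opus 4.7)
The plan is to exhibit the bijection explicitly via the natural restriction map and its inverse. Given an orientation $\mathcal{O} \in \ao_d(\cT_C, \mathcal{O}_C)$, I send it to the tuple $(\mathcal{O}_1, \ldots, \mathcal{O}_\ell)$, where $\mathcal{O}_i$ is the restriction of $\mathcal{O}$ to the edge set of $G[\cT_{C_i}]$. This map is well-defined: restrictions of acyclic orientations remain acyclic, the indegree at any vertex can only decrease under restriction, and consistency with $\mathcal{O}_{\sep(C_i)}$ is inherited because $\mathcal{O}_{\sep(C_i)}$ is itself the restriction of $\mathcal{O}_C$ to $E(\sep(C_i), \cT_{C_i})$, which is a subset of $\lnk(C)$.

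For the inverse, I assemble $(\mathcal{O}_1, \ldots, \mathcal{O}_\ell)$ into an orientation $\mathcal{O}$ of $E(G[\cT_C])$ by taking $\mathcal{O}_C$ on $\lnk(C)$ and $\mathcal{O}_i$ on $E(G[\cT_{C_i}])$ for each $i$. Two facts ensure that this is well-defined: the induced subtree property of $\tg$ yields $C \cap V[\cT_{C_i}] = \sep(C_i)$, and \Cref{lem:chordal_disjoint_indegree} tells us that the edge sets $E(G[\cT_{C_i} \setminus \sep(C_i)])$ are pairwise disjoint. Consequently every edge of $G[\cT_C]$ is either in $\lnk(C)$ or in a unique $G[\cT_{C_i} \setminus \sep(C_i)]$, and on the overlaps $E(\sep(C_i), \cT_{C_i})$ the consistency hypothesis on $(\mathcal{O}_i, \mathcal{O}_{\sep(C_i)})$ guarantees agreement with $\mathcal{O}_C$. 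The indegree bound is then immediate: every $v \in C$ has all of its $G[\cT_C]$-edges inside $\lnk(C)$ and so inherits indegree at most $d$ from $\mathcal{O}_C$, while every $v \notin C$ lies in a unique $\cT_{C_i}$ (again by the induced subtree property) with all of its incident edges in $G[\cT_{C_i}]$, inheriting its indegree bound from $\mathcal{O}_i$.

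The principal obstacle is showing acyclicity of the assembled $\mathcal{O}$, which I will handle by contradiction. Let $\gamma$ be a directed cycle in $\mathcal{O}$. If $\gamma$ avoids $C$, then the induced subtree property forces its vertices into a single $V[\cT_{C_i}] \setminus \sep(C_i)$ and its edges into $G[\cT_{C_i}]$, so $\gamma$ would be a cycle in the acyclic $\mathcal{O}_i$, a contradiction. Otherwise $\gamma$ meets $C$ at vertices $v_{i_1}, \ldots, v_{i_m}$ in cyclic order (with $m \geq 1$), and every maximal sub-path of $\gamma$ through $V[\cT_C] \setminus C$ lies in a single $\cT_{C_s}$, so its endpoints fall in $\sep(C_s) \subseteq C$. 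Because $C$ is a clique, $\{v_{i_j}, v_{i_{j+1}}\}$ is an edge of $G[C] \subseteq \lnk(C)$; the acyclicity of $\mathcal{O}_s$ and the existence of a directed sub-path from $v_{i_j}$ to $v_{i_{j+1}}$ force this edge to be oriented $v_{i_j} \to v_{i_{j+1}}$ in $\mathcal{O}_s$, hence also in $\mathcal{O}_C$ by consistency. Replacing each such sub-path by its shortcut edge yields a closed directed walk entirely inside $E(G[C]) \subseteq \lnk(C)$ oriented by $\mathcal{O}_C$, which contradicts the acyclicity of $\mathcal{O}_C$ (including the degenerate cases $m \leq 2$, since those would force a single edge of $\lnk(C)$ to carry both orientations under $\mathcal{O}_C$).

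Finally, the two maps are mutually inverse by construction: restricting the assembled $\mathcal{O}$ to $G[\cT_{C_i}]$ returns $\mathcal{O}_i$, and assembling the restrictions of a given $\mathcal{O} \in \ao_d(\cT_C, \mathcal{O}_C)$ recovers $\mathcal{O}$ on every edge. This will complete the proof of the bijection.
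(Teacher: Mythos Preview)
Your proposal is correct and follows essentially the same approach as the paper: both define the bijection via restriction and reassembly, use \Cref{lem:chordal_disjoint_indegree} for well-definedness, and argue acyclicity of the assembled orientation by contradiction via shortcut edges in the clique $C$. The only cosmetic difference is that the paper takes a shortest cycle and finds a single shortcut in $C$ to contradict minimality, whereas you replace every excursion of an arbitrary cycle by its clique shortcut to exhibit a closed walk in $\mathcal{O}_C$; both rest on the same observation that a directed path in some $\mathcal{O}_s$ between two vertices of $\sep(C_s)$ forces the direct clique edge to agree in $\mathcal{O}_C$.
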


\begin{proof}
Consider any acyclic orientation $\mathcal{O}$ of $\cT_C$ with indegree $\leq d$ consistent with a given acyclic orientation $\mathcal{O}_C$ of $\lnk(C)$. Any subset of an acyclic orientation $\mathcal{O}$ will be acyclic as well, and indegree bounds will be preserved. Hence $\mathcal{O}$ restricted to each $\cT_{C_i}$ will give acyclic orientations $\mathcal{O}_i$ of $\cT_{C_i}$ with indegree $\leq d$. Given these consistent orientations $\mathcal{O}_1,\ldots,\mathcal{O}_\ell$ of the subtrees, and knowing that $\mathcal{O}$ is consistent with $\mathcal{O}_C$, we can reconstruct $\mathcal{O}$. This suffices to argue the injectivity.

In order to prove surjectivity, let us consider acyclic orientations $\mathcal{O}_i \in \ao_d(\cT_{C_i}, \mathcal{O}_{\sep(C_i)})$ for every $i \in [\ell]$.
Now let us consider the following orientation $\mathcal{O}$ by taking the union of $\mathcal{O}_C$ and $\bigcup_{i=1}^{\ell} \mathcal{O}_{i}$. Following \Cref{lem:chordal_disjoint_indegree}, we know that $\mathcal{O}$ is consistent with $\mathcal{O}_C$. Now we would like to prove that $\mathcal{O}$ is an acyclic orientation and has indegree $\leq d$.

We will prove acyclicity by contradiction. Assume that there is at least one cycle in $\mathcal{O}$. Consider the shortest cycle among all those cycles in $\mathcal{O}$. The cycle must contain at least two edges in $E(C)$, because if it contains one or no edges of $E(C)$, then it is contained inside some $\cT(C_i)$ which is impossible since $\cO_i$ is acyclic. Consider two vertices $u,v \in C$ which are in the cycle but not adjacent in the cycle. In $C$, there exists an edge between $u$ and $v$, with $\cO_C$ orienting it either $(u,v)$ or $(v,u)$. Whichever the case, we can shorten the cycle by taking a shortcut on the edge, which contradicts the assumption that we are considering the shortest cycle. This implies that $\mathcal{O}$ is acyclic.

To argue that the indegree of $\mathcal{O}$ is $\leq d$, note that any $v$ which is common to $\cT_{C_i}$ and $\cT_{C_j}$ for $i \neq j$ will belong to $C$ as well. Hence the orientation of all incident edges of $v$ will be fixed by $\mathcal{O}_C$ itself, and combining $\mathcal{O}_1,\ldots,\mathcal{O}_\ell$ will not increase the indegree beyond $d$ since they are all consistent with $\mathcal{O}_C$. Thus the mapping is surjective as well. Combining the above, we have the proof of the lemma. 
\end{proof}

\begin{lem}\label{lem:chordal-no-edge-cross-child}
Let $C$ and $C_i$ be two cliques of $G$ so that $C$ is a node in $\tg$ and $C_i$ is a child of $C$ in $\tg$. Then there are no edges between $V[\cT_{C_i}]\setminus C$ and $C \setminus C_i$.
\end{lem}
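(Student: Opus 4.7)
The plan is a proof by contradiction leveraging the induced subtree property of clique trees. Suppose, for contradiction, that there exist $u \in V[\cT_{C_i}] \setminus C$ and $v \in C \setminus C_i$ such that $\{u,v\} \in E(G)$. Since $\{u,v\}$ is an edge in $G$, it is contained in some maximal clique $C'$ of $G$, which corresponds to a node in $\tg$.

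By the induced subtree property, the set of nodes of $\tg$ containing $u$ (resp.\ $v$) forms a connected subtree; call these $\cT_u$ and $\cT_v$. Since $u \in V[\cT_{C_i}]$ there is some node $C'' \in \cT_{C_i}$ with $u \in C''$; since $u \notin C$, the subtree $\cT_u$ must be entirely contained in $\cT_{C_i}$ (any path in $\tg$ from a node of $\cT_{C_i}$ to a node outside it passes through $C$, but $C \notin \cT_u$). Likewise $v \in C$ places $C$ in $\cT_v$, and $v \notin C_i$ means $C_i \notin \cT_v$.

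Now I would case-split on where $C'$ lies. If $C' \in \cT_{C_i}$, then the unique path in $\tg$ from $C$ to $C'$ passes through $C_i$; since $C, C' \in \cT_v$ and $\cT_v$ is connected, the entire path lies in $\cT_v$, forcing $C_i \in \cT_v$, i.e.\ $v \in C_i$, contradicting $v \in C \setminus C_i$. If instead $C' \notin \cT_{C_i}$, then the path in $\tg$ from $C''$ (which is in $\cT_{C_i}$) to $C'$ must exit $\cT_{C_i}$ through $C_i$ and then pass through $C$; since $C'', C' \in \cT_u$ and $\cT_u$ is connected, every node on this path — in particular $C$ — must contain $u$, contradicting $u \notin C$. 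Both cases yield contradictions, so no such edge exists.

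The argument is entirely structural, so there is no real obstacle beyond carefully invoking the induced subtree property and the tree structure of $\tg$; the only subtlety is the clean case split on the location of $C'$ relative to $\cT_{C_i}$, which is what makes both ``escape routes'' for the putative edge impossible.
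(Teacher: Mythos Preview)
Your proof is correct and follows essentially the same approach as the paper: both argue by contradiction via the induced subtree property, locating a maximal clique $C'$ containing the offending edge and showing this forces either $u \in C$ or $v \in C_i$. The paper compresses your case split into a single line (asserting that the edge lies in no clique of $\cT_{C_i}$, which is precisely your first case, and then deriving the contradiction of your second case), whereas you spell out both cases explicitly; the underlying reasoning is the same.
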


\begin{proof}
Assume for the sake of contradiction that $\{u,v\}$ is an edge between $u \in V[\cT_{C_i}] \setminus C$ and $v \in C \setminus C_i$. 
In particular, say $u \in K$ where $K$ is a clique in $\cT_{C_i}$. Then, by the definition of the clique tree decomposition, (i) the edge is contained neither in $C$ nor in any of the cliques in the subtree rooted at $C_i$ (since $u \not\in C$ and $v \in C \setminus C_i$), but (ii) the edge is part of some maximal-clique $C'$. Hence, $C'$ and $K$ must be separated by $C$ in the clique tree. But this is a contradiction, since $u \in C'$ and $u \in K$ but $u \not \in C$. 
\end{proof}

\subsection{\EWA\ and \RWM\ with Chordal Experts}
In this section, we consider distributions on chordal graphs which can be oriented acyclically with indegree $\leq d$. This assumption is sufficient to bound the size of each maximal clique.
\begin{rem}\label{rem:indegree_bounds_max_clique_size}
Let $G$ be an undirected chordal graph. If there exists an acyclic orientation of $G$ with indegree $\leq d$, then for any clique tree decomposition $\cT_{C_r}$ of $G$ (rooted at a maximal clique $C_r$), all the nodes of $\cT_{C_r}$ (maximal cliques of $G$) have cardinality $\leq d + 1$.
\end{rem}
\begin{proof}
	By assumption, we have an acyclic orientation with indegree $\leq d$ for each clique $C \in V(\cT_{C_r})$. If we topologically order the nodes of $C$, the last node will have indegree $|C|-1 \leq d$, which implies $|C| \leq d + 1$.
\end{proof}

We will establish the following theorem. 

\begin{theo}
Let $G$ be an undirected chordal graph, and suppose $k$ and $d$ are fixed constants. Let $\cL^G_d$ be the family of distributions over $[k]^n$ that can be defined as Bayes nets over DAGs having skeleton $G$ and with indegree $\leq d$. Given sample access from $P^* \in \Delta([k]^n)$, and a parameter $\eps > 0$,
there exist the following:
\begin{itemize}
    \item[(i)] An agnostic PAC-learner for $\cL^G_d$ using \adds{$\widetilde{O}\left(\max\left\{\frac{d^4 n^4}{\eps^4}\log^4\left(\frac{nkd}{\eps}\right)\log^2(1/\delta), \tfrac{d^3 n k^{d+1}}{\eps}\log^2\left(\frac{nk}{\eps\delta}\right)\right\}\right)$} samples that is improper and returns an efficiently-samplable mixture of distributions from $\cL^G_d$.

    \item[(ii)] An agnostic PAC-learner \learnchordal for $\cL^G_d$ using \adds{$\widetilde{O}\left(\max\left\{\frac{d^3 n^3}{\eps^2\delta^2},\frac{d^3 nk^{d+1}}{\eps}\log^2\left(\frac{nk}{\eps\delta}\right)\right\}\right)$} samples and $\mathrm{poly}(n)$ running time that is proper and returns a distribution from $\cL^G_d$.
\end{itemize}
\end{theo}

The sample complexity guarantees follow from \Cref{thm:bn_learn_imp} and from \Cref{thm:bn_learn_proper}. What remains to be justified is that the algorithms run efficiently and the distribution output by the improper learning algorithm can be sampled efficiently. Below, let $x^{(1)}, \dots, x^{(T)}$ be the samples drawn from $P^*$ \adds{for the online phase, and let $S_{\cN}$ denote the set of $s_{\rm AO}(\eps, \delta)$ samples --- for appropriate choices of $\eps, \delta$ as used in \Cref{thm:bn_learn_imp} and \Cref{thm:bn_learn_proper} --- used to construct the finite expert set $\cN$ used in the algorithms.}. 

Let $\cT^G_{C_r}$ be a clique tree decomposition of $G$ rooted at a maximal clique $C_r$. We can assume without loss of generality that $G$ has at least one acyclic orientation with indegree $\leq d$; otherwise, $\cL^G_d$ is empty and the problem is trivial. Hence, by Remark \ref{rem:indegree_bounds_max_clique_size}, every node in $\cT^G_{C_r}$ has at most $d+1$ vertices  of $G$.

\begin{cl}
For any $C \in V(\cT_{C_r})$,  $|\ao_d(\lnk(C))| \leq \binom{n+d}{d}^{d+1}$.
\end{cl}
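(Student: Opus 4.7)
The plan is to bound the number of acyclic orientations of $\lnk(C)$ by specifying, for each $v \in C$, the set of its in-neighbors coming from edges of $\lnk(C)$, and then counting the number of valid such tuples.

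First, since $G$ is assumed to admit at least one acyclic orientation of indegree $\leq d$ (otherwise $\cL^G_d$ is empty), \Cref{rem:indegree_bounds_max_clique_size} gives $|C| \leq d+1$. Next, for any orientation $\cO \in \ao_d(\lnk(C))$ and any $v \in C$, define $I_{\cO}(v) = \{u \in V[\cT_C] : (u \to v) \in \cO\}$. Since the only edges of $\lnk(C)$ incident to $v$ go to vertices in $V[\cT_C]$, we have $|I_{\cO}(v)| \leq d$ by the indegree constraint. I claim that the map $\cO \mapsto (I_{\cO}(v))_{v \in C}$ is injective. Indeed, every edge $\{u,w\} \in \lnk(C)$ has at least one endpoint in $C$ (by definition of $\lnk(C)$), and:
\begin{itemize}
\item if exactly one endpoint, say $u$, is in $C$, then the edge is oriented $w \to u$ iff $w \in I_{\cO}(u)$;
\item if both endpoints are in $C$, the edge is oriented $w \to u$ iff $w \in I_{\cO}(u)$ (and consequently $u \notin I_{\cO}(w)$).
\end{itemize}
Hence $\cO$ is reconstructible from the tuple $(I_{\cO}(v))_{v \in C}$.

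It remains to bound the number of such tuples. For each $v \in C$, the set $I_{\cO}(v)$ is a subset of $V[\cT_C] \setminus \{v\}$ (a set of size at most $n-1$) of cardinality at most $d$. The number of such subsets is at most $\sum_{j=0}^{d} \binom{n-1}{j}$. Using the Vandermonde identity,
\[
\binom{n+d}{d} \;=\; \sum_{k=0}^{d} \binom{n}{k}\binom{d}{d-k} \;\geq\; \sum_{k=0}^{d}\binom{n-1}{k},
\]
since $\binom{d}{d-k} \geq 1$ and $\binom{n}{k} \geq \binom{n-1}{k}$ for all $0 \leq k \leq d$. Multiplying this bound over the at most $d+1$ vertices in $C$,
\[
|\ao_d(\lnk(C))| \;\leq\; \prod_{v \in C} \binom{n+d}{d} \;\leq\; \binom{n+d}{d}^{d+1},
\]
as claimed. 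The main (minor) subtlety is verifying the injectivity of the map, which uses the fact that every edge in $\lnk(C)$ touches $C$; the combinatorial counting is then routine.
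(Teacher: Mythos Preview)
Your proof is correct and follows essentially the same approach as the paper's: bound $|C|\leq d+1$ via \Cref{rem:indegree_bounds_max_clique_size}, then for each of the at most $d+1$ vertices of $C$ choose which (at most $d$) of its incident edges in $\lnk(C)$ are incoming, and observe this determines the orientation. The paper's proof is a terse one-line version of this; you have simply made the injectivity argument and the $\sum_{j\leq d}\binom{n-1}{j}\leq\binom{n+d}{d}$ step explicit.
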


\begin{proof}
Note that there are at most $d+1$ vertices in $C$ and each such vertex has at most $n$ incident edges, of which at most $d$ can be incoming. Combining the above, we have the claim.    
\end{proof}

\dels{
For an orientation $\mathcal{O}$ of $G[\cT_C]$ and a node $v \in V[\cT_C]$, let $\mbox{in}_C(v, \mathcal{O})$ denote the set of in-neighbors of $v$ in $G[\cT_C]$ with respect to $\mathcal{O}$. We define the \emph{weight}, with respect to the clique $C$, of a node $v  \in V[\cT_C]$, acyclic orientation $\mathcal{O}$ of $G[\cT_C]$ and time-step $t \in [T]$ as follows:
\begin{equation}\label{eqn:chordal_node_weight}
\wt_C(v, \mathcal{O}, t) \triangleq \sum_{q \in \cN_{\eps/n}^{v \mid \text{in}_{C}(v, \mathcal{O})}} \prod_{s=1}^t \exp_\eta \log q\left(x^{(s)}_{v} \mid x^{(s)}_{\text{in}_{C}(v, \mathcal{O})}\right).
\end{equation}
}
\adds{
For an orientation $\mathcal{O}$ of $G[\cT_C]$ and a node $v \in V[\cT_C]$, let $\mbox{in}_C(v, \mathcal{O})$ denote the set of in-neighbors of $v$ in $G[\cT_C]$ with respect to $\mathcal{O}$. We define the \emph{weight}, with respect to the clique $C$, of a node $v  \in V[\cT_C]$, acyclic orientation $\mathcal{O}$ of $G[\cT_C]$ and time-step $t \in [T]$ as follows:
\begin{equation}\label{eqn:chordal_node_weight}
\wt_C(v, \mathcal{O}, t) \triangleq \prod_{s=1}^t \exp_\eta \log \aonodedist{S_{\cN}}{v}{\text{in}_{C}(v,\mathcal{O})}\left(x^{(s)}_{v} \mid x^{(s)}_{\text{in}_{C}(v, \mathcal{O})}\right).
\end{equation}
where $\aonodedist{S_{\cN}}{v}{\text{in}_{C}(v,\mathcal{O})}$ is the add-one distribution for the node $v$ with nodes $\mbox{in}_C(v, \mathcal{O})$ chosen as the parents of $v$ (see \Cref{def:add_one_distribution}), computed from the first set of samples $S_{\cN}$ used to construct $\cN$.

At any point of our sampling algorithm, we only compute the add-one distribution for a node $v$ with respect to a fixed set of $\leq d$ parents for the node $v$, which are fixed (and adjusted) recursively. Note that this distribution can be computed by performing a single pass over the $\mathrm{poly}(n, k, \eps, \log(1/\delta))$ many samples (for constant $d$).
}

For all $C \in V(\tg)$ and for all $\cO_C \in \ao_d(\lnk(C))$, and for all $t \in [T]$, we will store an entry in a table $\mathsf{Table}[C, \cO_C, t]$. 

\begin{equation}\label{eqn:chordal_table_main}
\mathsf{Table}[C,\cO_C,t]
\triangleq \sum_{\mathcal{O} \in \ao_d(G[\cT_C], \cO_C)} \prod_{v \in V[\cT_C]} \wt_C(v, \mathcal{O}, t).
\end{equation}

Note that, according to this definition, $\mathsf{Table}[C_r,\emptyset,t]$ gives the total weight of all indegree $\leq d$ acyclic orientations of $G$ after observing $t$ samples. The sum is over an exponential-sized set; nevertheless, we will be able to use dynamic programming to compute it efficiently.

Let us start with the case that $C$ is a leaf node of $\tg$.
Note that, if $C$ is a leaf node of $\tg$, $\ao_d(G[\cT_C], \cO_C) = \{\cO_C\}$, where $\cO_C$ is an acyclic orientation of $C$.
\dels{So, the corresponding table entry can be directly computed for any $\cO_C$ in  $\mathsf{poly}(n, t, 1/\eps)$ time for constant $d$ and $k$.}\adds{So, the corresponding table entry can be directly computed for any $\cO_C$ in  $\mathsf{poly}(n, t,k, 1/\eps)$ time for constant $d$.} This completes the base case of our dynamic programming.

Now let us assume that $C$ has $\ell$ children $C_1, \ldots, C_{\ell}$ in $\cT^G$. For $\cO_C \in \ao_d(\lnk(C))$, we show that $\mathsf{Table}[C,\cO_C,t]$ can be inductively computed in terms of the table entries $\mathsf{Table}[C_i,\cO_{C_i},t]$ for $1\leq i \leq \ell$, where each $\cO_{C_i} \in \ao_d(\lnk({C_i}))$, so that we obtain a bottom-up dynamic programming algorithm for counting weighted acyclic orientations with indegree $\leq d$ starting from the leaf nodes of $\tg$. By \Cref{lem:chordal_bij_indegree}, we first break each acyclic orientation $\mathcal{O}$ of $\cT_C$ into $\mathcal{O}_1,\ldots,\mathcal{O}_\ell$ (acyclic orientations of the child subtrees $\cT_{C_i}$) which can be combined consistently with an orientation $\cO_C \in \ao_d(\lnk(C))$.%

The table entry $\mathsf{Table}[C, \cO_C, t]$ can be expressed as follows:
\begin{eqnarray}
\mathsf{Table}[C, \cO_C, t] &\triangleq& \sum_{\mathcal{O} \in \ao_d(G[\cT_C], \cO_C)} \prod_{v \in V[\cT_C]} \wt_C(v, \mathcal{O},t) \notag \\
&=& \sum_{\substack{\mathcal{O}_1 \in \ao_d(G[\cT_{C_1}],  \cO_C),\\[-.2em]
\vdots\\
\mathcal{O}_\ell \in \ao_d(G[\cT_{C_{\ell}}], \cO_C)}} \prod_{v \in V[\cT_C]} \wt_C(v, \cup_{j=1}^\ell \mathcal{O}_j ,t) \notag\\
&=&  \prod_{v \in C} \wt_C(v, \mathcal{O}_C, t) \sum_{\substack{\mathcal{O}_1 \in \ao_d(G[\cT_{C_1}],  \cO_C),\\[-.2em]
\vdots\\
\mathcal{O}_\ell \in \ao_d(G[\cT_{C_{\ell}}], \mathcal{O}_C)}} \left(\prod_{i=1}^{\ell} \prod_{v \in V[\cT_{C_i}] \setminus C} \wt_{C_i}(v,\mathcal{O}_i,t)\right),%
 \label{eqn:chordalcountbasic}
\end{eqnarray}
where in the last line, we used the fact that $V[\cT_{C_i}]\setminus C$ are disjoint for distinct $i$, and also that, there are no edges between $V[\cT_{C_i}]\setminus C$ and $C \setminus C_i$ by \Cref{lem:chordal-no-edge-cross-child}. 

For each $i$, let $\mathsf{Cons}_i(\cO_c) \subseteq \ao_d(\lnk({C_i}))$ be the set of acyclic orientations of $\lnk(C_i)$ of indegree at most $d$ that are consistent with $\cO_C$.  From (\ref{eqn:chordalcountbasic}), we can write $\mathsf{Table}[C, \cO_C, t]$ as:
\begin{equation}\label{eqn:chordalcount2}
\begin{split}
&\prod_{v \in C \setminus (C_1 \cup \ldots \cup C_{\ell})} \wt_C(v, \mathcal{O}_C, t)~ \times \\
&~~ \sum_{\forall i, \cO_{C_i}\in \mathsf{Cons}_i(\cO_C)}
\sum_{\forall i, \mathcal{O}_i \in \ao_d(G[\cT_{C_i}],  \cO_{C_i})}\left(
 \left(\prod_{i=1}^{\ell} \prod_{v \in V[\cT_{C_i}] \setminus \sep(C_i)} \wt_{C_i}(v,\mathcal{O}_i,t)\right)\right. \\
&~~~~~~~~~~~~~~~~~~~~~~~~~~~~~~~~~~~~~~~~~~~~~~~~~~~~~~~~~~~~\left.\cdot \left(\prod_{v \in \cup_{i=1}^\ell \sep(C_i)} \wt_C(v, \mathcal{O}_C,t)\right)\right).
\end{split}
\end{equation}

We can now state a recurrence for $\mathsf{Table}[C, \cO_C, t]$ using the notation above.

\begin{lem}\label{lem:chordalrec}
For any $C \in V(\cT_{C_r}), \cO_C \in \ao_d(\lnk(C)), t \in [T]$, if $C_1, \dots, C_\ell$ are the children of $C$ in $\cT_{C_r}$, then:
\[
\mathsf{Table}[C, \cO_C, t] = \prod_{v \in C \setminus \cup_i C_i} \mathsf{wt}_C(v, \mathcal{O}_C, t)  \sum_{\substack{\forall i, \cO_{C_i}\\ \in \mathsf{Cons}_i(\cO_C)}} \left(\prod_{i=1}^\ell \mathsf{Table}[C_i, \cO_{C_i}, t] \right) \Xi(\cO_C, \cO_{C_1}, \dots, \cO_{C_\ell}),
\]
where:
\[
\Xi(\cO_C, \cO_{C_1}, \dots, \cO_{C_\ell}) = \prod_{v \in \cup_i \sep(C_i)}  \frac{\mathsf{wt}_C(v, \cO_C, t)}{\prod\limits_{\substack{j \in [\ell] : C_j \ni v}} \mathsf{wt}_{C_j}(v, \cO_{C_j}, t)}.
\]
\end{lem}
\begin{proof}
We argue by induction. For the base case, when $C$ is a leaf, we have $\mathsf{Table}[C, \cO_C, t] = \prod_{v \in C} \mathsf{wt}_C(v, \mathcal{O}_C, t)$ which agrees with (\ref{eqn:chordal_table_main}). Otherwise, we inductively use (\ref{eqn:chordal_table_main}) on each $\mathsf{Table}[C_i, \cO_{C_i}, t]$ and check that (\ref{eqn:chordalcount2}), and therefore (\ref{eqn:chordal_table_main}), holds for $\mathsf{Table}[C, \cO_C, t]$. The only fact we need here is that if $v \in \sep(C_i)$, all its incident edges belong to $\lnk(C_i)$ and hence their orientations are fixed by $\cO_{C_i}$. 
\end{proof}

The pseudocode for counting the number of acyclic orientations with maximum indegree $d$ is described in Algorithm \ref{alg:countchordalindegree}.

\begin{algorithm}
    \setcounter{AlgoLine}{0}
    \caption{\countchordal ($G, \cT_{C_r}, \eta, S_{\cN}, \mathsf{SampleList}=[x^{(1)},\ldots,x^{(t)}], t$)}\label{alg:countchordalindegree}
	\KwIn{A known chordal skeleton $G$ which is acyclically orientable with indegree $d$, a rooted clique tree $\cT_{C_r}$ of $G$, hyperparameter $\eta$, a list of samples $[x^{(1)},\ldots,x^{(t)}]$ from $P^*$, time step $t$.}
	\KwOut{$3$-dimensional table $\mathsf{Table}$.}
	
	$\mathsf{Table} \gets \emptyset$. \
		
	$\mathsf{NumberofLevels} \gets$ the number of levels in $\tg$. \
	
	\For{every leaf node $C \in V(\cT_C)$}{
		\For{every $\cO_C \in \ao_d(C)$}{
			\For{every $v \in V(G_C)$}{
				Compute $\wt_C(v, \mathcal{O}_C, t)$ from the samples $S_{\cN}$ according to \Cref{eqn:chordal_node_weight}.
			}
			$\mathsf{Table}[C,\cO_C,t] = \prod_{v \in C} \wt_C(v, \mathcal{O}_C, t)$ according to \Cref{eqn:chordal_table_main}. 
		}
	}
	
	// The root is at level $0$, and the lowest leaf is at level $\mathsf{NumberOfLevels}$.\\
	\For{$j=\mathsf{NumberOfLevels}-1$ to $0$}{
		$\cS \gets$ the nodes at level $j$ in $\tg$.\\
		\For{every non-leaf $C \in \cS$}{
			Let the children of $C$ be $C_1,\ldots,C_\ell$, which are at level $j+1$ by definition.\\
			\For{every $\cO_C \in \ao_d(\lnk(C))$}{
				Compute $\mathsf{Table}[C,\cO_C,t]$ according to \Cref{lem:chordalrec}.\
			}
		}
	}
	
	Return $\mathsf{Table}$. \
	
\end{algorithm}

\begin{algorithm}
\setcounter{AlgoLine}{0}
\caption{\sampchordal($G,d,\eta,S_{\cN}, \mathsf{SampleList} = [x^{(1)},\ldots,x^{(t)}], t$)}
\label{alg:sampchordal2}
\KwIn{A chordal skeleton $G$ which is acyclically orientable with indegree $d$, hyperparameter $\eta>0$, a list of samples $x^{(1)},\ldots,x^{(t)}$ from distribution $P^*$ on $[k]^n$, time step $t \in [T]$}
\KwOut{A DAG with skeleton $G$.}

\LinesNumbered

Construct a clique tree $\tg$ of $G$ using the algorithm of \cite{rose1976algorithmic}.  \

$r \gets \mathsf{root}(\tg)$. \

$L \gets \mathsf{Leaves}(\tg)$. \

$\stk \gets \emptyset$. \

Call the counting algorithm \countchordal($G, \tg, \eta, S_{\cN}, \mathsf{SampleList}$) to obtain a 3-d DP table $\mathsf{Table}$. \

Push $(r, \emptyset)$ to the $\stk$.\

\While{Stack is not empty}{
Pop $(B, \mathcal{O}_p)$ from $\stk$, where $B$ is a node in $\tg$ and $\mathcal{O}_p \in \ao_d(\lnk(\pa_{\tg}(B)))$ if $B \neq r$.\

\For{every orientation $\mathcal{O} \in \ao(\lnk(B), \mathcal{O}_p)$}{
$$P_B(\mathcal{O}) \gets \frac{\mathsf{Table}[B, \mathcal{O}, t]}{\sum\limits_{\mathcal{O}' \in \ao(\lnk(B),\mathcal{O}_p)} \mathsf{Table}[B, \mathcal{O}',t]}$$
}

Sample $\mathcal{O}_B \in \ao(\lnk(B))$ from the distribution $P_B$.\

Fix the orientation of $\lnk(B)$ to be $\mathcal{O}_B$.\

\For{each child $C$ of $B$ in $\tg$}{
Push $(C, \mathcal{O}_B)$ to $\stk$.\
}

}

Return all the orientations of the edges in $G$. \

\end{algorithm}

\begin{algorithm}
\setcounter{AlgoLine}{0}
\caption{\learnchordal (RWM-based proper learning of chordal-structured distributions)}\label{alg:meta_proper_chordal_indegree}
\SetKwInOut{Input}{Input}
\SetKwInOut{Output}{Output}
\dels{\Input{\,\, Chordal skeleton $G$,  indegree parameter $d$, $\cN = \prod_{i \in [n]} \prod_{S \subseteq {\rm Nbr}_G(i)} \cN^{X_i \mid X_Ss}_{\eps/n}$, hyperparameter $\eta > 0$, $T$.}}
\adds{\Input{\,\, Sample access to $P^\ast$, Chordal skeleton $G$,  indegree parameter $d$, hyperparameter $\eta > 0$, $T$.}}
\Output{\,\, A chordal-structured distribution $\widehat{P}$.}

\adds{
Let $S_{\cN} \gets s_{\rm AO}(\eps,\delta)$ samples from $P^\ast$ (see \Cref{thm:add_one_guarantee_kl_uniform}).
}

$\mbox{Sample } t \gets \{1,\ldots,T\} \mbox{ uniformly at random.}$ \

$\mathsf{OnlineSampleList} \gets \emptyset$. \

	\For{$s \gets 1$ to $t$}{
$\mbox{Observe sample } x^{(s)} \sim P^\ast$. \

$\mathsf{OnlineSampleList} \gets \mathsf{OnlineSampleList} \cup \{x^{(s)}\}$. \

	}

Call \sampchordal ($G, d, \eta, S_{\cN}, \mathsf{OnlineSampleList}, t$) to obtain a DAG $D$. \

\dels{For each $i \in [n]$, sample $\wh{p}_i$ as described in text.\
}

\dels{
\KwRet the Bayes net $\wh{P} = (\wh{p}_1, \dots, \wh{p}_n)$ on the DAG $D$. \
}

\adds{
\KwRet the Bayes net $\wh{P} \triangleq$ the add-one distribution with structure $D$ computed from samples $S_{\cN}$ (\Cref{def:add_one_distribution}). \
}

\end{algorithm}

\color{black}

Once $\tab[\cdot,\cdot,t]$ has been constructed, we can sample an orientation of the chordal graph $G$ at time step $t$. The idea is to go down from the root of $\cT_{C_r}$ to its leaves, level-by-level, with each iteration orienting the link of a maximal clique while being consistent with the orientations sampled so far. For each clique $C \in V(\cT_{C_r})$ with parent $C_p$, we suppose that the edges in $\lnk(C_p)$ have been oriented fully. We use the table $\tab[\cdot,\cdot,t]$ to sample in a consistent way with the already oriented edges.
To argue correctness, we again appeal to \Cref{lem:chordal_bij_indegree} to see that once a parent clique has been oriented, its children can be oriented independently as long as they are consistent with the parent's orientation. Algorithm \ref{alg:sampchordal2} gives the pseudocode.

Suppose $D$ is the orientation of $G$ that is sampled by \sampchordal for a uniformly chosen value of $t \in [T]$. \dels{Then, we can readily sample a Bayes net on $D$ by following the same strategy described in the proof of \Cref{{lem:ewa_poly_tree}}. Namely, for each $i \in [n]$, we sample $\wh{p}_i(y_i, y_{\pa_D(i)})$ from $\cN_{\eps/n}^{X_i \mid X_{\pa_D(i)}}$ independently, where:
\[
\Pr[\wh{p}_i = p] \propto \prod_{s=1}^t \exp_\eta \log p\left(x^{(s)}_i, x^{(s)}_{\pa_D(i)}\right).
\]
Finally, the sampled Bayes net $\wh{P}$ is obtained by letting the conditional probability distribution at each node $i$ be $\wh{p}_i$.}
\adds{Following \Cref{def:add_one_distribution}, we return the add-one distribution $\wh{P}$, with respect to the structure $D$ and the samples $S_{\cN}$, as the Bayes net.}
The proper learning algorithm is summarized as pseudocode in Algorithm \ref{alg:meta_proper_chordal_indegree}. The improper learning algorithm is similar, but it uses \EWA\ in place of \RWM, and so, it samples a random $\wh{P}$ when generating samples instead of during the learning phase.

\section{Learning Tree-structured distributions}\label{sec:learntreedist}

Let us start with the definition of arborescence, which will be used throughout this section.

\begin{defi}[Arborescence]\label{def:arborescence}
	A directed graph $G = (V, E)$ is an \emph{out-arborescence} rooted at $v \in V$ if its skeleton (underlying undirected graph) is a tree (acyclic and connected) and there is a unique directed path from $v$ to $w$ for every $w \in V \setminus \{v\}$. An \emph{in-arborescence} rooted at $v$ is similar, but with unique directed paths from all $w \in V \setminus \{v\}$ to $v$.
\end{defi}

Given a vertex set $V = [n]$,  let $\cG_{\rm T}$ denote the set of out-arborescences rooted at node $1$. %
Note that each $G \in \cG_{\rm T}$ will have $m = n-1$ edges. Any tree-structured distribution $P$ on $[k]^n$ is $G$-structured for some $G \in \cG_{\rm T}$. Let $\cC^{\rm\textsc{Tree}}$ denote the set of all tree-structured distributions on $[k]^n$.
Now we have the following lemma.

\begin{lem}\label{lem:tree_structured_discretization}
\dels{
If we consider the finite set $\cN^{\rm\textsc{Tree}}_\varepsilon = \bigcup_{G \in \cG_{\rm T}} \cN^{G}_\varepsilon$ of distributions on $[k]^n$ (see \Cref{def:bn_net}), then for any distribution $P^\ast \in \Delta([k]^n)$,
$$\min_{Q \in \cN^{\rm\textsc{Tree}}_\varepsilon} \kl(P^\ast \| Q) \leq \min_{Q \in \cC^{\rm\textsc{Tree}}} \kl(P^\ast \| Q) + \varepsilon,$$
The size is bounded by $\log |\cN^{\rm\textsc{Tree}}_\varepsilon| \leq O(n k^2 \log (nk/\varepsilon))$.
}
\adds{
Consider the finite set $\cN^{\rm\textsc{Tree}}_{\varepsilon,\delta} = \cN^{\cG_{\rm T}}_{\eps,\delta}$ (see \Cref{def:bayes_dags_finitization}) --- where $\cG_{\rm T}$ is the set of all $1$-rooted out-arborescences on vertex set $[n]$ --- of tree-structured Bayes net distributions on $[k]^n$ (see \Cref{defi:bayesnet}). Then, for any distribution $P^\ast \in \Delta([k]^n)$,
$$\min_{Q \in \cN^{\rm\textsc{Tree}}_{\varepsilon,\delta}} \kl(P^\ast \| Q) \leq \min_{Q \in \cC^{\rm\textsc{Tree}}} \kl(P^\ast \| Q) + \varepsilon,$$
with probability $\geq 1 - \delta$ over the samples used to construct $\cN^{\rm\textsc{Tree}}_{\eps,\delta}$. The size of $\cN^{\rm\textsc{Tree}}_{\eps,\delta}$ is bounded by $\log |\cN^{\rm\textsc{Tree}}_{\eps,\delta}| \leq \log |\cG_{\rm T}| \leq n\log(n)$.
}
\end{lem}

\begin{proof}

\adds{
By the definition of tree-structured distributions, every tree-structured distribution will be $G$-structured for some $G \in \cG_{\rm T}$. Thus, by the construction of $\cN^{\cG_{\rm T}}_{\eps,\delta}$ (see \Cref{def:bayes_dags_finitization} and \Cref{lem:bn_net_kl}), we will have that $\min_{P \in \cN^{\cG_{\rm T}}_{\eps,\delta}} \kl(P^\ast \| P) \leq \min_{P \in \cC^{\rm\textsc{Tree}}} \kl(P^\ast \| P) + \eps$ w.p $\geq 1 - \delta$.
We also have $|\cN^{\cG_{\rm \textsc{Tree}}}_{\eps,\delta}| \leq |\cG_{\rm T}|$ by construction, and the final bound follows from Cayley's formula.
}
\end{proof}

\subsection{Sampling from the EWA/RWM Distribution}
In this section, we describe our approach for sampling tree-structured distribution using \EWA\ and \RWM\ algorithms.

Following \Cref{defi:bayesnet}, each tree-structured distribution $P$ in \dels{$\cN^{\rm\textsc{Tree}}_\varepsilon$}\adds{$\cN^{\rm\textsc{Tree}}_{\varepsilon,\delta}$} can be factored as $(G, V_P)$ where $G$ is the underlying rooted arborescence and $V_P = (p_e : e \in E(G), p_{1})$ is the vector of functions where $p_e(x_i,x_j)$ for $e = (i,j)$ corresponds to $\Pr_{X \sim P}(X_j = x_j \mid X_i = x_i)$ and $p_1(x_1)$ corresponds to $\Pr_{X \sim P}(X_1 = x_1)$, where $X_1$ denotes the root node of $G$. \dels{Using the notation of \Cref{def:cond_net}, $p_1 \in \cN^{X_1}_{\eps/n}$ and for $e=(i,j)$, $p_e \in \cN^{X_j \mid X_i}_{\eps/n}$. For simplicity, we will call these sets $\cN^1$ and $\cN^e$ respectively in this section. }\adds{By construction of $\cN^{\rm\textsc{Tree}}_{\varepsilon,\delta}$ (\Cref{lem:tree_structured_discretization}), each $p_i$ will be the add-one distribution (see \Cref{def:add_one_distribution} and \Cref{def:bayes_dags_finitization}) of $X_i | X_{\pa(i)}$, computed using $s_{\rm AO}(\eps,\delta)$ samples from $P^\ast$ (see \Cref{def:bayes_dags_finitization}).}

Let us start with our result for \EWA\ algorithm.

\begin{lem}\label{lem:ewa_poly_tree}
\adds{Let $\wh{P}_t$ be the output of the {\rm \EWA}\ algorithm run with the expert set $\cN^{\rm\textsc{Tree}}_{\eps,\delta}$, parameter $\eta$, and horizon $T \geq t$ for time step $t$. For any sequence of observed samples $x^{(1)}, \dots, x^{(T)} \in [k]^n$, one can generate a sample from $\wh{P}_t$ in polynomial time (w.r.t $n, k, 1/\eps$ and $\log(1/\delta)$).}
\end{lem}

\begin{proof}

\adds{
In the \EWA\ algorithm, $\wh{P}_t$ is a mixture of the distributions in $\cN^{\rm\textsc{Tree}}_{\eps,\delta}$. For a distribution $P \in \cN^{\rm\textsc{Tree}}_{\eps,\delta}$ that factors into $(G, (p_e: e \in E(G), p_1))$ as above (where $p_1$ and all $p_e : e \in E(G)$ are add-one distributions (see \Cref{def:add_one_distribution} and \Cref{def:bayes_dags_finitization})), let the weight of $P$ in $\wh{P}_t$ be $\omega(P)$. We have the following (note that, in the analysis below, we use $q_1$, $q_e : e \in E(H)$ for the add-one distributions w.r.t DAG $H$ to distinguish them from $p_1$, $p_e : e \in E(G)$ (w.r.t DAG $G$)):
\begin{align}
&\omega(P) \notag\\
 &=\frac{\prod_{s<t} \exp_\eta \log\left( p_1(x^{(s)}_1) \cdot \prod_{e \in E(G)} p_e(x^{(s)}_e)\right)}{
 	\sum_{H \in \cG_{\rm T}} \prod_{s<t}\exp_{\eta}\log \left(q_1(x^{(s)}_1) \cdot \prod_{e \in E(H)} q_e(x^{(s)}_e)\right)} \notag\\
&= \frac{\prod_{s<t} \exp_\eta \log p_1(x^{(s)}_1) \cdot \prod_{e \in E(G)} \prod_{s<t}\exp_\eta \log p_e(x^{(s)}_e)}
{\prod_{s<t}\exp_{\eta}\log {q_{1}(x^{(s)}_1)} \cdot \sum_{H \in \cG_{\rm T}} \prod_{e \in E(H)} \prod_{s<t} \exp_{\eta}\log q_{e}(x^{(s)}_e)} \notag \\
&= \frac{\cancel{\prod_{s<t} \exp_\eta \log p_1(x^{(s)}_1)}}{\cancel{\prod_{s<t}\exp_{\eta}\log q_{1}(x^{(s)}_1)}} \cdot
\prod_{e \in E(G)}\frac{\prod_{s<t}\exp_\eta \log q_e(x^{(s)}_e)}{\sum_{H \in \cG_{\rm T}} \prod_{e \in E(H)} \prod_{s<t} \exp_{\eta}\log q_{e}(x^{(s)}_e)},\notag \\
\label{eqn:treetotalweight}
\end{align}
where we note that $p_1 = q_1$ always (both are add-one distributions from the same set of samples, and node $1$ is the root node in both $G$ and $H$ by definition of $\cG_{\rm T}$.

We can interpret (\ref{eqn:treetotalweight}) as follows: the whole product gives the probability of sampling structure $G$ (and hence a Bayes net distribution $P$ on $[k]^n$ when $G$ is combined with the appropriate add-one distribution) as a product distribution over $G$'s edges. Hence, we can sample $G$ edge-wise, recursively following the above product distribution, as long as we can sample from the edge-probability distributions efficiently. The difficulty lies in the fact that the normalization factors involve exponentially many terms (going over each $H \in \cG_{\rm T}$).

$G$ is sampled from a weighted mixture of spanning arborescences of the complete directed graph on $n$ nodes, where the weight of a spanning arborescence $H$ is proportional to $\prod\limits_{e \in E(H)} \mathbf{w}(e)$ and $\mathbf{w}(e) = \prod_{s<t}\exp_{\eta}\log q_{e}(x^{(s)}_e)$ for any $e = (i,j) \in [n]^2$ ($q_e$ is an add-one distribution). Sampling weighted arborescences is known to be in polynomial time (\cite{borchardt1860ueber, chaiken1982combinatorial,  tutte2001graph, de2020elementary}), using Tutte's theorem; for completeness, we describe the algorithm in \Cref{sec:learntreedist:sampling}.
Once $G$ is sampled, we can construct $P$ from a sufficiently large set $S_\cN$ of samples from $P^\ast$ ($|S_\cN| = s_{\rm AO}(\eps,\delta)$, see \cref{def:bayes_dags_finitization}). First, we fix the distribution $p_1$ (of $X_1$) and then we follow the edge structure of $G$ to fix the distribution of each $X_i | X_{\pa(i)}$ as the appropriate add-one distribution (see \Cref{def:add_one_distribution}). 

Once $P$ is constructed, a sample from $P$ can be generated in polynomial time (in $n, k$) as $P$ is tree-structured. This completes the proof of the lemma.
}
\end{proof}

Now let us state our result for \RWM\ algorithm.

\begin{lem}\label{lem:rwm_poly_tree}
\adds{
If the {\rm \RWM}\ algorithm is run with the expert set $\cN^{\rm\textsc{Tree}}_{\eps,\delta}$, parameter $\eta$, and horizon $T$, the prediction $\wh{P}_t$ at time step $t \leq T$ can be computed in polynomial time, for any sequence of observed samples $x^{(1)}, \dots, x^{(T)} \in [k]^n$.
}
\end{lem}
\begin{proof}
This follows directly from the argument used to prove \Cref{lem:ewa_poly_tree}, since in \RWM, $\wh{P}_t$ is sampled during the execution of the learning algorithm rather than during the sampling stage as in \EWA. 
\end{proof}

\subsection{Learning Guarantees}

In this section, we will state the learning guarantees for tree-structured distributions for both the improper and proper settings. Let us start by describing our result for the improper setting.

\begin{theo}
\adds{
Let $P^*$ be an unknown distribution over $[k]^n$, and $\eps, \delta \in (0,1)$ be parameters. Given sample access to $P^*$, there exist algorithms outputting efficiently-sampleable distributions (which are mixtures of trees) giving the following guarantees:
\begin{enumerate}
\item[(i)]
With probability at least $2/3$, the the output distribution $Q$ is an $(\eps, 4)$-approximation of $P^*$; the sample complexity is $\tilde{O}\left(\frac{n k^2 \log^2(nk/\varepsilon)}{\varepsilon}\right)$.
\item[(ii)]
A realizable $(\eps,\delta)$-PAC learner for $\cC^{\textsc{Tree}}$ with sample complexity $\tilde{O}\left(\frac{n k^2 \log^2(nk/\delta\varepsilon)}{\delta\varepsilon}\right)$.
\item[(iii)]
An agnostic $(\eps,\delta)$-PAC learner for $\cC^{\textsc{Tree}}$ with sample complexity $$\tilde{O}\left(n^4 \eps^{-4} \log^4(nk/\eps\delta) + nk^2\eps^{-1}\log^2(nk/\eps\delta)\right).$$
\end{enumerate}
}
\end{theo}
\begin{proof}
The algorithm in all cases is the \EWA-based improper learning algorithm (Algorithm \ref{alg:meta_improper}) instantiated with appropriate $\cN = \cN^{\rm\textsc{Tree}}_{\eps,\delta}$ as in the proof of \Cref{thm:bn_learn_imp}. The correctness and sample complexities in each case follow from \Cref{thm:bn_learn_imp}. The fact that the output distribution is efficiently sampleable is shown in \Cref{lem:ewa_poly_tree}. 
\end{proof}

In the realizable case (where $P^\ast$ is a tree-structured distribution), this sample complexity bound matches the following lower-bound in \citep[Appendix A.2]{canonne2017testing} up to log-factors for constant error probability. 

\begin{lem}[Restatement of lower bound result in Appendix A.2 of \cite{canonne2017testing}]
Let $P^\ast$ be an unknown tree-structured distribution defined over $\{0,1\}^n$, and $\eps \in (0,1)$ be a parameter. Any algorithm that outputs a distribution $Q$ which is an $\eps$-approximation of $P^\ast$ with constant probability requires $\Omega(\frac{n}{\eps})$ samples from $P^\ast$\footnote{The lower bound stated in \cite{canonne2017testing} holds for general Bayes net distributions of indegree $\leq d$. For tree-structured distributions, we have $d=1$. Moreover, their lower bound is with respect to TV-distance, which translates to KL-divergence due to Pinkser's inequality.}.      
\end{lem}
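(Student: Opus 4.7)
The plan is to reduce the claim to the standard total variation sample-complexity lower bound for learning product distributions on $\{0,1\}^n$, and then translate that TV bound to KL via Pinsker's inequality.

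First, I would note that product distributions on $\{0,1\}^n$ form a sub-family of $\cC^{\rm\textsc{Tree}}$: any product distribution is a Bayes net on (any) rooted tree whose conditional distributions happen to ignore the parent. Hence any algorithm that learns arbitrary tree-structured distributions in KL must in particular learn product distributions in KL, and it suffices to prove the lower bound on this sub-family.

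Second, I would exhibit the standard Assouad-style hard family. For each $\sigma \in \{-1,+1\}^n$, let $P_\sigma$ be the product distribution with $i$-th marginal $\mathsf{Bern}(1/2 + \sigma_i \eta)$, for a perturbation $\eta$ to be chosen. By tensorization of the Hellinger distance, $\TV(P_\sigma, P_{\sigma'}) = \Theta(\eta \sqrt{d_H(\sigma,\sigma')})$ as long as $\eta\sqrt{n} \le 1/2$, so any TV-$\eps'$ approximator must correctly recover all but $O((\eps'/\eta)^2)$ of the coordinates of $\sigma$. A symmetry calculation shows that the mixture of $P_\sigma$ over uniform $\sigma$ conditioned on $\sigma_i = +1$ equals $\mathsf{Bern}(1/2+\eta) \otimes \mathsf{Bern}(1/2)^{\otimes(n-1)}$, and similarly for $\sigma_i = -1$, so the per-coordinate KL cost is $\Theta(\eta^2)$ independently of $n$. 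Assouad's lemma, combined with the tensorization bound $\TV(Q^{\otimes m}, R^{\otimes m}) \le \sqrt{m\,\kl(Q\|R)/2}$, then yields that any $\eps'$-TV learner needs $m = \Omega(n/(\eps')^2)$ samples once $\eta$ is chosen as $\Theta(\eps'/\sqrt{n})$ so that the Hamming recovery constraint reduces to a constant fraction of $n$.

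Finally, I would translate to KL by invoking Pinsker's inequality (\Cref{lem:pinkser}): any $Q$ with $\kl(P^* \| Q) \le \eps$ satisfies $\TV(P^*, Q) \le \sqrt{\eps/2}$. Hence any KL-$\eps$ approximator is a TV-$\sqrt{\eps/2}$ approximator, and substituting $\eps' = \sqrt{\eps/2}$ into the TV bound gives the claimed $\Omega(n/\eps)$ sample complexity. The main obstacle, if one wished to re-derive this from scratch, is the careful Assouad/Fano calculation in the middle step, but this is already carried out in \cite{canonne2017testing} for Bayes nets of arbitrary indegree $d$, and the case $d=1$ corresponds exactly to the tree-structured setting considered here.
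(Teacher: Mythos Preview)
Your proposal is correct and follows precisely the strategy the paper itself signals in the lemma's footnote: invoke the TV lower bound from \cite{canonne2017testing} (which you sketch via the standard Assouad construction on biased product Bernoullis) and then translate to KL by Pinsker. In that sense your argument matches the paper's stated justification for this lemma.

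However, the paper also gives its own self-contained proof of the same bound in \Cref{sec:treelearnlb} (\Cref{theo:kltreelb}), and that argument is genuinely different from yours. There the hard family is \emph{not} a product distribution: on each block of three variables one of three correlated tree-structured distributions $P_1,P_2,P_3$ is planted (where two of the variables are perfectly coupled and the third copies them with probability $1-\eps$), and the lower bound is driven by a Hellinger separation between the $P_i$'s together with the fact (imported from \cite{bhattacharyya2023near}) that no single tree is $\Theta(\eps\log(1/\eps))$-approximate for all three simultaneously. Chaining $\ell=n/3$ such blocks and a Chernoff argument give the $\Omega(n/\eps)$ bound. Your route is shorter and uses only classical ingredients (Assouad on biased coins plus Pinsker); the paper's appendix route has the feature that the hard instances carry non-trivial tree dependence, so the lower bound is not merely about estimating marginal biases but about identifying tree structure, which may be conceptually stronger even though the final sample-complexity statement is the same.
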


Now we will state the guarantees for proper learning of the tree-structured distributions.

\begin{theo}
\adds{
Let $P^*$ be an unknown distribution over $[k]^n$, and $\eps, \delta \in (0, 1)$ be parameters. Given sample access to $P^*$, there exists a proper agnostic $(\eps,\delta)$-PAC-learner for $\cC^{\textsc{Tree}}$ that takes $\widetilde{O}\left(\frac{n^3}{\eps^2 \delta^2} \log^3\left(\frac{nk}{\eps\delta}\right) + \frac{nk^2}{\eps}\log^2\left(\frac{nk}{\eps\delta}\right)\right)$ samples from $P^\ast$ and runs in time $O(\mathsf{poly}(n,k,1/\eps,1/\delta))$.
}
\end{theo}

\begin{proof}
The algorithm is the \RWM-based proper learning algorithm (Algorithm \ref{alg:meta_proper}) applied with $\cN = \cN^{\textsc{Tree}}_{\varepsilon/2,\delta/2}$, $\eta = \sqrt{(8 \log |\cN|)/T}$ and the log loss function. The algorithm, by construction, will output a tree-structured distribution in $\cN$. The correctness (agnostic PAC guarantee) and sample complexity follow from exactly the same argument as in \Cref{thm:bn_learn_proper}, taking $d = 1$ since tree-structured distributions are Bayes nets with in-degree $1$. The fact that the \RWM-computation can be done in $\mathsf{poly}(n,k,\frac{1}{\eps},\frac{1}{\delta})$ time, in spite of having exponentially many distributions in $\cN$, follows from the fact that the sampling from the weight distribution, for each time step $t$, can be done efficiently when $\cN = \cN^{\textsc{Tree}}_{\eps/2,\delta/2}$ (see \Cref{lem:rwm_poly_tree}).
\end{proof}

\subsection{Proof of \texorpdfstring{\Cref{lem:ewa_poly_tree}}{Lemma \ref{lem:ewa_poly_tree}}}\label{sec:learntreedist:sampling}

Given a vertex set $V = [n]$, $\cG_{\rm T}$ denotes the set of all spanning out-arborescences rooted at node $1$. $\cC^{\rm\textsc{Tree}}$ denotes the set of all tree-structured distributions on $[k]^n$. $\cN^{\rm\textsc{Tree}}_{\varepsilon,\delta}$ denotes the finite set of distributions (see \Cref{lem:tree_structured_discretization} and \Cref{def:bayes_dags_finitization}) that \emph{finitizes} $\cC^{\rm\textsc{Tree}}$ with error $\varepsilon$ in $\kl$ and failure probability $\delta$. Let $S_{\cN}$ denote the set of samples used to construct $\cN^{\rm\textsc{Tree}}_{\eps,\delta}$. By construction, each tree-structured distribution $P$ in $\cN^{\rm\textsc{Tree}}_{\varepsilon,\delta}$ can be factored as $(G, V_P)$ where $G$ is the underlying rooted arborescence and $V_P = (p_e : e \in E(G), p_{1})$ is the vector of functions where $p_e(x_i,x_j)$ for $e = (i,j)$ corresponds to $\Pr_{X \sim \aonodedist{S_{\cN}}{j}{\{i\}}}(X_j = x_j \mid X_i = x_i)$ and $p_1(x_1)$ corresponds to $\Pr_{X \sim \aonodedist{S_\cN}{1}{\emptyset}}(X_1 = x_1)$ (here, as in \Cref{def:add_one_distribution}, $\aonodedist{S_\cN}{v}{\{\pa(v)\}}$ denotes the add-one distribution computed from $S_\cN$ for node $v$ when the parent set is $\{\pa(v)\}$).%

In the proof of \Cref{lem:ewa_poly_tree}, we derive the following expression for $\omega_t(P)$, the weight assigned to a tree-structured distribution $P \cong (G, (p_e : e \in E(G), p_1))$ in the discretized set $\cN = \cN^{\rm\textsc{Tree}}_{\eps,\delta}$ by the \EWA/\RWM\ algorithms at step $t$.

\begin{align}\label{eqn:app_tree_ewa_dist_factor}
	\omega_t(P)
	&=
	\prod_{e \in E(G)}\frac{\prod_{s<t}\exp_\eta \log p_e(x^{(s)}_e)}{\sum_{H \in \cG_{\rm T}} \prod_{e \in E(H)} \prod_{s<t} \exp_{\eta}\log q_{e}(x^{(s)}_e)},
\end{align}
where $p_e : e \in E(G)$ denotes the add-one-distributions w.r.t structure $G$, and $q_e : e \in E(H)$ denotes the add-one distributions w.r.t structure $H$ (all distributions computed from the samples $S_\cN$ with $|S_\cN| = s_{\rm AO}(\eps,\delta) \leq \tilde{O}\left(\frac{n k^{2}\log^2(nk/\eps\delta)}{\eps}\right)$, see \Cref{def:bayes_dags_finitization}).
 
This gives the following high-level algorithm for sampling from the \EWA/\RWM\ distribution. 

\begin{itemize}
	\item[(i)]
	Sample a spanning arboresence $G \in \cG_{\rm T}$ (rooted at node $1$), where $\Pr(G) \propto \prod\limits_{e \in E(G)} \mathbf{w}(e)$ and $\mathbf{w}(e) \triangleq \left(\prod_{s<t} \aonodedist{S_\cN}{j}{\{i\}}(x^{(s)}_e)\right)^\eta$ for any $e = (i,j) \in [n]^2$. This can be done in polynomial time (even though there are exponentially many such arborescences) by applying Tutte's matrix-tree theorem to weighted digraphs (\cite{borchardt1860ueber, chaiken1982combinatorial,  tutte2001graph, de2020elementary}). We describe this approach in the algorithm \textsc{SamplingArborescence}, which we describe and analyze later. We invoke
	\[
	G \leftarrow \textsc{SamplingArborescence}(G_0 \leftarrow K_n, w \leftarrow \mathbf{w}, r \leftarrow 1),
	\]
	where $K_n$ is the complete graph on the vertex set $[n]$ viewed as a digraph, which returns $G \in \cG_{\rm T}$ (rooted at $1$) with the required sampling probability.
	\item[(ii)] For each $e = (i, j) \in E(G)$ (fixed in the preceding step), set the node distribution of node $j$, $\widehat{p}_j(x_j | x_i)$, to be the add-one distribution $\aonodedist{S_\cN}{j}{\{i\}}$. Also let $\widehat{p}_1$ be the add-one distribution $\aonodedist{S_\cN}{1}{\emptyset}$. Finally, let $P$ be the Bayes net $(G, (\widehat{p}_1, \ldots, \widehat{p}_n))$.
\end{itemize}

Once $P$ is sampled, a sample from $P$ can be generated in polynomial time as $P$ is tree-structured.

\paragraph*{Correctness} The correctness of the above algorithm follows from considering each factor of \Cref{eqn:app_tree_ewa_dist_factor} and noting that the steps (i)-(ii) described above are independent.
	
\paragraph*{Running Time} The \textsc{SamplingArborescence} call in step (i), in this case (invoked with $K_n$), requires $O(n^5)$ time, and the weights required (for all the edges in $K_n$) can be computed in $O(n^2 k \times |s_{\rm AO}(\eps,\delta)|)$ time. The add-one distributions used in step (ii) can again be computed in $O(nk\cdot s_{\rm AO}(\eps, \delta))$ time (one pass over the samples, with fixed $G$ giving the parent, for each node).  which gives an overall time complexity which is $\tilde{O}(n^8 k^3/\eps\delta)$.

\subsection*{The SamplingArborescence algorithm}

\paragraph*{Preliminaries} Let $G = ([n], E, w)$ be a connected weighted directed graph on $n$ vertices $\{1,\ldots,n\}$ with $m$ edges $e_1, \ldots,e_m$. $w: E \rightarrow \R_{>0}$ is a positive weight function. Let $A_w(G)$ denote the $n \times n$ weighted vertex adjacency matrix of $G$. For every $i, j \in [n]$, $A_w(G)$ is defined as follows:
\[
[A_w(G)]_{i,j} = 
\begin{cases}
	w(e_k), & \mbox{if $e_k$ is the directed edge from $i$ to $j$} \\
	0, & \mbox{if there is no directed edge from $i$ to $j$}
\end{cases}
\]

Let $N_{\mathrm{in},w}(G)$ be the weighted \emph{inward} incidence matrix of order $m \times n$ defined as follows: For every $i \in [n]$ and $k \in [m]$, we have:
\[
[N_{\mathrm{in},w}(G)]_{k,i} =
\begin{cases}
	\sqrt{w(e_k)}, & \text{if directed edge $e_k$ points to vertex $i$}\\
	0, & \text{otherwise}
\end{cases}
\]

Similarly, we can define the weighted \emph{outward} incidence matrix $M_{\mathrm{out},w}(G)$ of order $n \times m$ as follows, for $i \in [n]$ and $k \in [m]$: 
\[
[M_{\mathrm{out},w}(G)]_{i,k} =
\begin{cases}
	\sqrt{w(e_k)}, & \text{if directed edge $e_k$ points from vertex $i$}\\
	0, & \text{otherwise}
\end{cases}
\]

The \emph{indegree} matrix $D_{\mathrm{in},w}(G)$ is a $n \times n$ diagonal matrix such that, for all $i \in [n]$, $[D_{\mathrm{in},w}(G)]_{i,i}$ is equal to
the sum of the weights of all incoming edges to vertex $i$. Similarly, the \emph{outdegree} matrix $D_{\mathrm{out},w}(G)$ is a $n \times n$ diagonal matrix such that for all $i \in [n]$, $[D_{\mathrm{out},w}(G)]_{i,i}$ is equal to the sum of the weights of all outgoing edges from vertex $i$.

Now we can define the \emph{Laplacian matrices} $L_{1,w}(G)$ and $L_{2,w}(G)$ associated with digraph $G$ as follows:

\begin{equation}
	L_{1,w}(G) \triangleq D_{\mathrm{in},w}(G) - A_{w}(G), \  \mbox{and} \ L_{2,w}(G) = D_{\mathrm{out},w}(G) - A^\top_{w}(G).    
\end{equation}

For all these associated matrices, we omit $G$ and denote the matrix $A_w(G)$ as $A_w$ etc. when $G$ is clear from the context. Now we have the following relationships between the Laplacian and incidence matrices:

\begin{cl}[Equation ($10$) of \cite{de2020elementary}]
	\begin{itemize}
		\item[(i)] $L_{1,w} = (N^\top_{\mathrm{in},w} - M_{\mathrm{out},w})N_{\mathrm{in},w}$.
		
		\item[(ii)] $L_{2,w} = (M_{\mathrm{out},w} - N^\top_{\mathrm{in},w})M^\top_{\mathrm{out},w}$.
	\end{itemize}	
\end{cl}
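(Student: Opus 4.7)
The plan is to prove each identity by an entry-wise calculation, exploiting the fact that both incidence matrices have at most one nonzero entry per row (corresponding to the single head/tail of each directed edge). The strategy reduces to establishing four simple building blocks: $N_{\mathrm{in},w}^\top N_{\mathrm{in},w} = D_{\mathrm{in},w}$, $M_{\mathrm{out},w} M_{\mathrm{out},w}^\top = D_{\mathrm{out},w}$, $M_{\mathrm{out},w} N_{\mathrm{in},w} = A_w$, and $N_{\mathrm{in},w}^\top M_{\mathrm{out},w}^\top = A_w^\top$. Once these are in hand, parts (i) and (ii) follow by expanding the right-hand sides and applying the definitions $L_{1,w} = D_{\mathrm{in},w} - A_w$ and $L_{2,w} = D_{\mathrm{out},w} - A_w^\top$.

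To verify the first block, I would compute the $(i,j)$ entry of $N_{\mathrm{in},w}^\top N_{\mathrm{in},w}$ as $\sum_k [N_{\mathrm{in},w}]_{k,i}[N_{\mathrm{in},w}]_{k,j}$; a term is nonzero only if edge $e_k$ points into both vertex $i$ and vertex $j$, which forces $i = j$. In that case the sum collapses to $\sum_{k:\,e_k \to i} w(e_k)$, which is exactly $[D_{\mathrm{in},w}]_{i,i}$. The same kind of argument handles $M_{\mathrm{out},w} M_{\mathrm{out},w}^\top$, since no edge can originate at two distinct vertices. For the off-diagonal blocks, the $(i,j)$ entry of $M_{\mathrm{out},w} N_{\mathrm{in},w}$ contains only the single term indexed by the edge $e_k$ (if any) going from $i$ to $j$, giving $\sqrt{w(e_k)} \cdot \sqrt{w(e_k)} = w(e_k) = [A_w]_{i,j}$; the computation for $N_{\mathrm{in},w}^\top M_{\mathrm{out},w}^\top$ is identical after transposing roles and yields $[A_w^\top]_{i,j}$.

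Combining these, part (i) reads
\[
(N_{\mathrm{in},w}^\top - M_{\mathrm{out},w})\, N_{\mathrm{in},w} \;=\; N_{\mathrm{in},w}^\top N_{\mathrm{in},w} - M_{\mathrm{out},w} N_{\mathrm{in},w} \;=\; D_{\mathrm{in},w} - A_w \;=\; L_{1,w},
\]
and part (ii) follows analogously by factoring $M_{\mathrm{out},w}^\top$ on the right. The main thing to watch is the asymmetric dimension convention in the paper ($N_{\mathrm{in},w}$ is $m \times n$ while $M_{\mathrm{out},w}$ is $n \times m$), which is why the transposes appear where they do; once the shapes are tracked carefully, every product is a well-defined $n \times n$ matrix and the four building blocks assemble immediately into the claimed identities. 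I do not anticipate a genuine obstacle here — the proof is essentially bookkeeping over edge-vertex incidence — so the write-up will simply present the four intermediate identities and the two one-line combinations above.
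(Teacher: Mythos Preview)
Your proposal is correct: the four building-block identities are exactly right under the paper's conventions (with $N_{\mathrm{in},w}$ of shape $m\times n$ and $M_{\mathrm{out},w}$ of shape $n\times m$), and the two one-line combinations give the claimed Laplacian factorizations. The paper does not actually prove this claim --- it simply imports it as Equation~(10) of \cite{de2020elementary} --- so your entry-wise verification is precisely the standard argument one would supply, and there is nothing to compare against.
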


For a vertex $r \in [n]$, let $L^r_{1,w}$ be the $(n-1) \times (n-1)$ matrix obtained by removing the $r$-th row and $r$-th column of $L_{1,w}$. Similarly, we also define $N^r_{\mathrm{in},w}$ and $M^r_{\mathrm{out},w}$. Then we have the following:
\begin{cl}[Equation ($11$) of \cite{de2020elementary}]
	\begin{itemize}
		\item[(i)] $L^{r}_{1,w} = ((N^r_{\mathrm{in},w})^\top - M^{r}_{\mathrm{out},w})N^{r}_{\mathrm{in},w}$.
		\item[(ii)] $L^{r}_{2,w} = ((M^{r}_{\mathrm{out},w} - (N_{\mathrm{in},w}^{r})^\top))(M_{\mathrm{out},w}^{r})^\top$.
	\end{itemize}
\end{cl}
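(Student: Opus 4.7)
The plan is to derive both identities directly from the preceding Claim, which establishes the unrestricted factorizations $L_{1,w} = (N^\top_{\mathrm{in},w} - M_{\mathrm{out},w})N_{\mathrm{in},w}$ and $L_{2,w} = (M_{\mathrm{out},w} - N^\top_{\mathrm{in},w})M^\top_{\mathrm{out},w}$. Since $L^r_{1,w}$ and $L^r_{2,w}$ are obtained by deleting the $r$-th row and $r$-th column of the corresponding Laplacian, the whole task reduces to a routine bookkeeping exercise on matrix products.

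For identity~(i), first I would invoke the elementary fact that for any matrix product $XY$, the submatrix obtained by deleting row $r$ and column $r$ of $XY$ equals $(X \text{ with row } r \text{ removed}) \cdot (Y \text{ with column } r \text{ removed})$. Applied to $L_{1,w} = (N^\top_{\mathrm{in},w} - M_{\mathrm{out},w})N_{\mathrm{in},w}$, this expresses $L^r_{1,w}$ as the product of the left factor with its $r$-th row deleted and the right factor with its $r$-th column deleted. Then I would identify each piece from the definitions: deleting column $r$ of $N_{\mathrm{in},w}$ is, by definition, $N^r_{\mathrm{in},w}$; deleting row $r$ of $N^\top_{\mathrm{in},w}$ equals the transpose of deleting column $r$ of $N_{\mathrm{in},w}$, hence $(N^r_{\mathrm{in},w})^\top$; and deleting row $r$ of $M_{\mathrm{out},w}$ is $M^r_{\mathrm{out},w}$. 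Substituting yields (i).

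Identity~(ii) would follow by the same template applied to $L_{2,w} = (M_{\mathrm{out},w} - N^\top_{\mathrm{in},w})M^\top_{\mathrm{out},w}$: I would delete row $r$ from each summand of the left factor, noting that row-deletion on $M_{\mathrm{out},w}$ gives $M^r_{\mathrm{out},w}$ and row-deletion on $N^\top_{\mathrm{in},w}$ gives $(N^r_{\mathrm{in},w})^\top$; and column $r$ from $M^\top_{\mathrm{out},w}$, which is the same as deleting row $r$ from $M_{\mathrm{out},w}$ and then transposing, i.e., $(M^r_{\mathrm{out},w})^\top$. Collecting these pieces gives the stated identity.

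The only subtlety---and it is genuinely minor---is keeping track of how the ``delete row/column $r$'' operation commutes with transposition: the $r$-th row of $X^\top$ coincides with the $r$-th column of $X$, and vice versa. Once this is internalized, there is no true obstacle; no combinatorial or graph-theoretic argument beyond the bare definitions of the adjacency, incidence, and degree matrices is needed to close out both parts.
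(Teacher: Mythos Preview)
Your argument is correct: the row/column deletion operation distributes across a matrix product exactly as you describe, and once one unpacks the definitions of $N^r_{\mathrm{in},w}$ and $M^r_{\mathrm{out},w}$ (deleting the vertex-indexed column and row, respectively), both identities drop out of the preceding unrestricted factorizations. The paper itself does not prove this claim at all; it simply records it as Equation~(11) of \cite{de2020elementary} and moves on. So you have supplied a short elementary justification where the paper only gives a citation, and your approach is the natural one.
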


\begin{defi}[Weight of a graph]\label{def:graph_prod_wt}
	Let $G = (V, E, w)$ be a weighted (directed or undirected) graph, where $w: E \rightarrow \R_{>0}$. Let $H$ be any subgraph of $G$. The weight of graph $H$ is defined as the \emph{product} of the weights of its edges, i.e.
	$$w(H) = \prod_{e \in E(H)} w(e).$$    
\end{defi}

\begin{defi}[Contraction and deletion]\label{def:simple_contraction_deletion}
If $G = (V, E, w)$ is a weighted undirected graph (not necessarily simple) and $e = \{i,j\} \in E$, the graph $G/e$ obtained by \emph{contracting} edge $e$ is a weighted graph on $V \setminus \{i, j\} \cup \{\langle i,j \rangle\}$ where
\begin{itemize}
	\item Vertices $i$ and $j$ are removed, and a new vertex $\langle i,j \rangle$ is added.
	\item \emph{All} edges $\{i,j\}$ (in case of parallel edges) are removed in $G/e$.
	\item Edges $\{u,v\}$ where $u, v \in V \setminus \{i,j\}$ are preserved in $G/e$ with the same weight.
	\item Every edge $\{u,v\} \in E$ where $u \in \{i,j\}$ and $v \in V \setminus \{i,j\}$ becomes an edge $\{\langle i, j \rangle, v\}$ in $G/e$ with the same weight.
\end{itemize}

Similarly the graph $G \setminus e$ obtained by \emph{deleting} edge $e$ is just $(G \setminus e) = (V, E \setminus e, w\mid_{E \setminus e})$ where the edge $e$ is deleted (including all parallel edges) from $G$.
\end{defi}

\begin{defi}[Contraction edge mapping]\label{def:contraction_edge_mapping}
With the above definition of contraction (with parallel edges kept), if $G$ is a graph and $G^\prime = G \setminus e$ for $e = \{i,j\} \in E(G)$, we can map each edge $e \in E(G^\prime)$ \emph{injectively} to an edge $e \in E(G)$ that caused its inclusion in $G^\prime$. We denote this mapping by $f_{G^\prime:G}: E(G^\prime) \rightarrow E(G)$.
\end{defi}

The following theorem is a generalization of Tutte's matrix-tree theorem to weighted graphs.

\begin{lem}[Theorem 3 of \cite{de2020elementary}]\label{lem:weighted_directed_mt}
	Let $G = (V,E, w)$ be a weighted directed graph on $n$ vertices $[n]$ with positive weight function $w: E \rightarrow \R_{>0}$. Then, for any $r \in [n]$, the sum of the weights of the weighted outgoing (incoming) spanning arborescences rooted at vertex $r$ is equal to $\det(L^r_{1,w})$  ($\det(L^r_{
		2,w})$, respectively).
\end{lem}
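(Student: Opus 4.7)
The plan is to prove the outgoing-arborescence statement via the Cauchy--Binet formula; the incoming-arborescence statement follows identically after reversing every edge, which exchanges the roles of $L^r_{1,w}$ and $L^r_{2,w}$. Applying Cauchy--Binet to the factorization $L^r_{1,w} = \bigl((N^r_{\mathrm{in},w})^\top - M^r_{\mathrm{out},w}\bigr)\,N^r_{\mathrm{in},w}$, which writes an $(n-1)\times(n-1)$ matrix as the product of an $(n-1)\times m$ and an $m\times(n-1)$ matrix, I would obtain
\[
\det(L^r_{1,w}) \;=\; \sum_{\substack{S\subseteq[m]\\|S|=n-1}} \det\bigl((N^r_{\mathrm{in},w})^\top - M^r_{\mathrm{out},w}\bigr)_{\cdot,S}\cdot \det\bigl(N^r_{\mathrm{in},w}\bigr)_{S,\cdot}.
\]
Each index $S$ corresponds to a choice of $n-1$ edges of $G$, and the strategy is to show that the $S$-summand vanishes unless $S$ is a spanning out-arborescence rooted at $r$, in which case it equals $\prod_{e\in S} w(e)$.

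For the second factor, the matrix $(N^r_{\mathrm{in},w})_{S,\cdot}$ is an $(n-1)\times(n-1)$ matrix whose row for edge $e\in S$ has a single nonzero entry $\sqrt{w(e)}$ in the column indexed by $\mathrm{head}(e)$, or is the zero row when $\mathrm{head}(e)=r$. Its determinant is thus $\pm\prod_{e\in S}\sqrt{w(e)}$ when $e\mapsto\mathrm{head}(e)$ is a bijection from $S$ onto $[n]\setminus\{r\}$, i.e.\ every non-root vertex has exactly one incoming edge in $S$, and zero otherwise. For the first factor under this indegree-one condition, I would observe that the columns of the full $n\times(n-1)$ matrix $\bigl((N_{\mathrm{in},w})^\top - M_{\mathrm{out},w}\bigr)_{\cdot,S}$ are linearly dependent iff there exist scalars $(d_e)_{e\in S}$, not all zero, satisfying $\sum_{e:\,\mathrm{head}(e)=v}d_e = \sum_{e:\,\mathrm{tail}(e)=v}d_e$ at every vertex $v$, i.e.\ a nontrivial circulation on $S$, which in turn exists iff $S$ contains a directed cycle. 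Since the row sums of the full matrix vanish (each column has $+\sqrt{w(e)}$ at its head and $-\sqrt{w(e)}$ at its tail), row $r$ is a linear combination of the other rows, so deleting it preserves the column rank. Hence $\det\bigl((N^r_{\mathrm{in},w})^\top - M^r_{\mathrm{out},w}\bigr)_{\cdot,S}$ is nonzero iff $S$ is acyclic, and combined with the indegree-one condition, this forces $S$ to be a spanning out-arborescence rooted at $r$. I would then compute the value by induction on $n$: expanding along the row of a leaf of $S$ (which must exist in any arborescence and contributes a single nonzero entry $\sqrt{w(e_v)}$), the determinant reduces to that of a smaller arborescence and inductively evaluates to $\pm\prod_{e\in S}\sqrt{w(e)}$.

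Combining the two factors, each spanning out-arborescence $S$ contributes $\pm\prod_{e\in S}w(e) = \pm w(S)$ to $\det(L^r_{1,w})$. The main obstacle I anticipate is confirming that this sign is always $+1$; the cleanest resolution is to note that both determinants can be evaluated with respect to the same bijection $\sigma\colon v\mapsto e_v$ mapping each non-root vertex to its unique incoming edge in the arborescence $S$, so both pick up the identical sign $\mathrm{sgn}(\sigma)$, and their product equals $+\prod_{e\in S}w(e)$. Summing over all such $S$ then yields $\det(L^r_{1,w})=\sum_T w(T)$ over spanning out-arborescences $T$ rooted at $r$, and reversing edges gives the analogous identity for $\det(L^r_{2,w})$ and in-arborescences.
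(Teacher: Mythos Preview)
The paper does not prove this lemma; it simply cites it as Theorem~3 of \cite{de2020elementary}. Your Cauchy--Binet argument is the standard route and is essentially correct, so there is nothing to compare against.

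Two small points deserve tightening. First, the claim ``a nontrivial circulation on $S$ exists iff $S$ contains a \emph{directed} cycle'' is not true as stated: over $\R$, the column space of the signed incidence matrix has a nontrivial kernel iff the \emph{underlying undirected} graph of $S$ has a cycle. What saves you is that the second Cauchy--Binet factor already forces every non-root vertex to have in-degree exactly one and $r$ to have in-degree zero in $S$; under that constraint, any undirected cycle in $S$ is automatically a directed cycle (a vertex of in-degree $\ge 2$ or $\le 0$ within the cycle would violate the global in-degree condition). You should make this dependence explicit rather than asserting the circulation--directed-cycle equivalence in isolation.

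Second, the sign argument (``both determinants pick up the identical sign $\mathrm{sgn}(\sigma)$'') is hand-wavy, because the first factor is not a generalized permutation matrix: each column has up to two nonzero entries. A cleaner way is to multiply the two square factors directly: writing $A = \bigl((N^r_{\mathrm{in},w})^\top - M^r_{\mathrm{out},w}\bigr)_{\cdot,S}$ and $B = (N^r_{\mathrm{in},w})_{S,\cdot}$, one finds $(AB)_{u,v} = w(e_v)\,\mathbf{1}[u=v] - w(e_v)\,\mathbf{1}[u = \mathrm{tail}(e_v)]$, which is triangular under any topological ordering of the arborescence with determinant $\prod_{e\in S} w(e) > 0$. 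This dispatches the sign issue without tracking $\mathrm{sgn}(\sigma)$ through the leaf-expansion induction.
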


Now the next corollary follows as any undirected graph can be viewed as a directed graph with edge $\{i,j\}$ corresponding to a pair of directed edges $(i,j)$ and $(j,i)$.
 
\begin{coro}\label{cor:mt_thm_application}
If $G = K_n$ (the complete graph on $n$ vertices) and $w: [n]^2 \rightarrow \R_{>0}$ is any positive weight function,
$$\det(L^1_{1,w}) = \sum_{G \in \cG_{\rm T}} \prod_{e \in E(G)} w(e),$$
where $\cG_{\rm T}$, as defined earlier, is the set of all spanning out-arborescences on $[n]$ rooted at vertex $1$.
\end{coro}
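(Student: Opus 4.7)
The statement is a direct instantiation of the weighted directed matrix-tree theorem (Lemma~\ref{lem:weighted_directed_mt}), so the plan is simply to verify that its hypotheses are met and to read off its conclusion with $r=1$.

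First, I would interpret $K_n$ as the complete directed graph on the vertex set $[n]$: for every ordered pair $(i,j) \in [n]^2$ with $i \neq j$, introduce a directed edge from $i$ to $j$ with weight $w(i,j) > 0$. (Entries of $w$ on the diagonal, if any, do not appear on any spanning arborescence and can be ignored, or one may restrict $w$ to off-diagonal pairs without loss of generality.) Under this identification, the matrix $L^1_{1,w}$ appearing in the statement of the corollary is exactly the matrix obtained from $L_{1,w} = D_{\mathrm{in},w} - A_w$ (associated with this weighted digraph) by deleting its first row and first column, which matches the definition used in Lemma~\ref{lem:weighted_directed_mt}.

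Next, I would invoke Lemma~\ref{lem:weighted_directed_mt} with the choice $r = 1$ and with the "outgoing" variant. The lemma asserts that
\[
\det(L^1_{1,w}) \;=\; \sum_{A} \prod_{e \in E(A)} w(e),
\]
where the sum ranges over all spanning out-arborescences $A$ of the underlying weighted digraph that are rooted at vertex $1$. But this collection of arborescences is, by definition, exactly $\cG_{\rm T}$, since every ordered pair is available as a potential directed edge and each spanning out-arborescence on $[n]$ rooted at $1$ selects $n-1$ such ordered pairs. Substituting this identification yields the desired equality.

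I do not anticipate a genuine obstacle here: the entire content of the corollary is the recognition that $\cG_{\rm T}$ coincides with the set of $1$-rooted spanning out-arborescences of the complete directed graph on $[n]$, after which the corollary is a literal specialization of Lemma~\ref{lem:weighted_directed_mt}. The only minor bookkeeping point is to confirm that the conventions for the matrix $L^r_{1,w}$ (rows/columns deleted, indegree versus outdegree) match between the lemma and the corollary, which they do since both use the indegree Laplacian $D_{\mathrm{in},w} - A_w$ and both delete the row and column indexed by the root.
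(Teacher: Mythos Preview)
Your proposal is correct and matches the paper's approach exactly: the paper simply remarks that the corollary follows from Lemma~\ref{lem:weighted_directed_mt} by viewing the undirected complete graph as a directed graph with each edge $\{i,j\}$ replaced by the pair of directed edges $(i,j)$ and $(j,i)$, which is precisely the interpretation and specialization you describe.
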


\begin{algorithm}[ht]
    \setcounter{AlgoLine}{0}
	\caption{The \textsc{SamplingArborescence} algorithm}\label{alg:samparboracences}
	\KwIn{$G_0$ a directed graph, $w: E(G_0) \rightarrow \R_{\geq 0}$ weight function, $r \in V(G_0)$ root node.}
	\KwOut{Arborescence $G_T \in \cA_{G_0,r}$ with $\Pr(\text{output } G_T) \propto \prod_{e \in E(G_T)} w(e)$.}
	Let $G_T \gets \emptyset$. \
    
	Choose an arbitrary ordering of the $m$ edges of $G_0$, and let $\mathsf{EdgeList} \gets [e_1,\ldots,e_m]$. \
    
	Let $\mathsf{ContractionMapping}[e_i] \gets e_i$ for each $i \in [m]$. \
    
	Let $j \leftarrow 0$. \
    
	\While{$r$ has an outgoing edge in $G_j$}{
		Let $e = (r, x)$ be the first remaining outgoing edge. \
        
		Compute $p_e \gets \frac{\det(L^r_{1,w}(G_j \setminus e))}{\det(L^r_{1,w}(G_j))}$.\
        
		Sample $u \sim \mathsf{Unif}((0, 1])$. \
        
		\If{$u \leq p_e$ (with probability $p_e$)} {
			$G_{j+1} \gets G_j \setminus e$ (delete edge $e$). \
            
			$j \gets j + 1$. \
            
		}
		\Else {
			$G_{j+1} \gets G_{j}/e$ (contract edge $e$). \
            
			Add $\mathsf{ContractionMapping}[e]$ to $G_T$ (this will be an edge of $G_0$). \
    
			\tcc{Reconstruct \textsf{ContractionMapping} for mapping $E(G_{j+1})$ to $E(G_0)$.}        
			$\mathsf{CM}^\prime \leftarrow \textrm{empty dictionary}$. \
            
			\For{each edge $e \in E(G_{j+1})$}{
				$\mathsf{CM}^\prime[e] \gets \mathsf{ContractionMapping}[f_{G_{j+1}:G_j}(e)]$ (see \Cref{def:contraction_edge_mapping}). \
			}
            
			$\mathsf{ContractionMapping} \gets \mathsf{CM}^\prime$ \
            
			$r \gets \langle r, x \rangle \in V(G_{j+1})$ (the newly-contracted vertex). \ 
            
			$j \gets j+1$. \
		}
	}
	\KwRet{$G_T$}. \
\end{algorithm}

\paragraph*{The algorithm}

If $G$ is a graph and $v \in V(G)$ be a vertex of $G$, let $\cA_{G,v}$ denote the set of out-arborescences on $V(G)$ which are subgraphs of $G$ (viewed as a digraph), rooted at $v$, and span the set of nodes reachable from $v$. Note that this set will not be empty unless $v$ is an isolated vertex.

\begin{defi}[Product arborescence distribution]\label{def:prod_arbo_dist}
	Let $G = (V,E,w)$ be a weighted graph where $w: E \rightarrow \R_{\geq 0}$ is a weight function on the edges. For any $v \in V(G)$, a distribution $\mu$ on $\cA_{G,v}$ is said to be the \emph{product arborescence distribution} if
	\[
\Pr_{G \sim \mu}(G) \propto \prod_{e \in G} w(e) \mbox{ for all $G \in \cA_{G,v}$}.
	\]
\end{defi}

\begin{cl}\label{cl:prod_arbo_dist_edge_prob}
	Suppose $G = (V, E, w)$ is a weighted digraph with non-negative weights, $r \in V(G)$ and $\mu$ is a product arborescence distribution on $\cA_{G,r}$.
	Then
	\[
	\Pr_{T \sim \mu}(e \in T) = 1 - \frac{\det(L^r_{1,w}(G \setminus e))}{\det(L^r_{1,w}(G))}.
	\]
\end{cl}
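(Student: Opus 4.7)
My plan is to evaluate the two relevant normalization sums using the weighted matrix-tree theorem (\Cref{lem:weighted_directed_mt}) and observe that the event ``$e \notin T$'' corresponds exactly to arborescences lying in the subgraph $G \setminus e$. This reduces everything to two direct applications of the theorem.

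First, by the definition of the product arborescence distribution (\Cref{def:prod_arbo_dist}), the normalization constant is
\[
Z(G) \;=\; \sum_{T \in \cA_{G,r}} \prod_{f \in E(T)} w(f),
\]
and \Cref{lem:weighted_directed_mt} applied to $G$ gives $Z(G) = \det(L^r_{1,w}(G))$. Hence for any edge $e \in E(G)$,
\[
\Pr_{T \sim \mu}(e \notin T) \;=\; \frac{1}{Z(G)} \sum_{\substack{T \in \cA_{G,r} \\ e \notin E(T)}} \prod_{f \in E(T)} w(f).
\]

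The key step is to identify the numerator as $Z(G \setminus e)$. Concretely, since an arborescence $T$ is a subgraph of $G$ whose edge set avoids $e$ if and only if $T$ is a subgraph of $G \setminus e$ rooted at $r$ that spans the nodes reachable from $r$ in $G\setminus e$, we have a bijection between $\{T \in \cA_{G,r} : e \notin E(T)\}$ and $\cA_{G\setminus e,\, r}$ that preserves edge sets and hence edge weight products. Applying \Cref{lem:weighted_directed_mt} once more, this time to $G \setminus e$, yields
\[
\sum_{\substack{T \in \cA_{G,r} \\ e \notin E(T)}} \prod_{f \in E(T)} w(f) \;=\; \sum_{T \in \cA_{G\setminus e,\, r}} \prod_{f \in E(T)} w(f) \;=\; \det\bigl(L^r_{1,w}(G \setminus e)\bigr).
\]
Combining the two expressions gives $\Pr_{T \sim \mu}(e \notin T) = \det(L^r_{1,w}(G \setminus e))/\det(L^r_{1,w}(G))$, and taking complements proves the claim.

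The only mild subtlety (not really an obstacle) is the degenerate case in which the denominator vanishes, i.e.\ $\cA_{G,r}$ is empty, in which case $\mu$ is undefined and the statement is vacuous; and one should briefly note that if the outgoing-edge set of $r$ in $G \setminus e$ disconnects some vertex from $r$, then $\cA_{G\setminus e, r}$ is empty and the numerator is $0$, so the formula still agrees with $\Pr(e \in T) = 1$. Since the argument only relies on \Cref{lem:weighted_directed_mt} and the trivial weight-preserving bijection, no further machinery is needed.
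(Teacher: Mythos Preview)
Your proposal is correct and follows essentially the same argument as the paper: express $\Pr(e\in T)$ via its complement, identify the arborescences of $G$ rooted at $r$ that avoid $e$ with the spanning arborescences of $G\setminus e$, and apply the weighted matrix-tree theorem (\Cref{lem:weighted_directed_mt}) once to $G$ and once to $G\setminus e$. The paper's version is simply a one-line chain of equalities and does not spell out the degenerate cases you discuss.
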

\begin{proof}
	Since $\Pr_\mu(T) \propto \prod_{e \in T} w(e)$, we have
	\begin{align*}
		\Pr_{T \sim \mu}(e \in T) = \frac{\sum_{T \in \cA_{G,r} \,:\, T \ni e} w(T)}{\sum_{T \in \cA_{G,r}} w(T)} = 1 - \frac{\sum_{T \in \cA_{G,r} \,:\, T \not\ni e} w(T)}{\sum_{T \in \cA_{G,r}} w(T)} = 1 - \frac{\det(L^r_{1,w}(G \setminus e))}{\det(L^{r}_{1,w}(G))},
	\end{align*}
	where the first equality follows from \Cref{def:prod_arbo_dist} and \Cref{def:graph_prod_wt}, and the third equality follows from applying \Cref{lem:weighted_directed_mt} to $G \setminus e$ and $G$.
\end{proof}

Now we are ready to prove the correctness of the algorithm \textsc{SamplingArborescence}.

\begin{cl}\label{cl:sample_arborescence_correctness}
	There is an algorithm \textsc{SamplingArborescence} that, if invoked with input $(G, w, r)$ for a digraph $G$, $w: E(G) \rightarrow \R_{\geq 0}$, and $r \in V(G)$, returns a random arborescence $G_T$ sampled from the product arborescence distribution on $\cA_{G,r}$.
	The algorithm runs in time $O(|E||V|^3)$  in the real RAM model.
\end{cl}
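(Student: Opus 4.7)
The plan is to verify correctness of \textsc{SamplingArborescence} by induction on the iteration index $j$, maintaining the invariant that, conditional on the choices made through iteration $j$, the not-yet-committed portion of the output equals the image under \textsf{ContractionMapping} of a sample from the product arborescence distribution $\mu_j$ on $\cA_{G_j, r_j}$, where $G_j$ and $r_j$ are the current graph and root. The base case is immediate: $G_0 = G$, $r_0 = r$, $G_T = \emptyset$, and \textsf{ContractionMapping} is the identity on $E(G_0)$. Termination is guaranteed because each iteration strictly decreases either $|E(G_j)|$ (deletion) or $|V(G_j)|$ (contraction), so the while-loop runs for at most $|E|+|V|$ steps; in particular, it halts when the current root has no outgoing edges, at which point $\cA_{G_j,r_j} = \{\emptyset\}$ and the invariant forces $G_T$ to be distributed as the target $\mu$ on $\cA_{G_0, r}$.

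For the inductive step, let $e = (r_j, x)$ be the outgoing edge chosen at iteration $j$. Two facts drive the argument. First, by \Cref{cl:prod_arbo_dist_edge_prob} applied to $(G_j, r_j)$,
\[
\Pr_{T \sim \mu_j}(e \notin T) \;=\; \frac{\det(L^{r_j}_{1,w}(G_j \setminus e))}{\det(L^{r_j}_{1,w}(G_j))} \;=\; p_e,
\]
which is exactly the probability with which the algorithm chooses to delete $e$. Second, one verifies the two conditional laws: given $e \notin T$, the restriction of $\mu_j$ to $\{T : e \notin T\}$ coincides with the product arborescence distribution $\mu_{j+1}$ on $\cA_{G_j \setminus e, r_j}$, because an arborescence in $G_j$ avoiding $e$ is precisely an arborescence in $G_j \setminus e$ and edge weights are preserved; and given $e \in T$, the map $T \mapsto T/e$ is a weight-preserving bijection (up to the overall factor $w(e)$ that cancels in normalization) from $\{T \in \cA_{G_j, r_j} : e \in T\}$ onto $\cA_{G_j / e,\, \langle r_j, x\rangle}$, so the conditional law of $T \setminus \{e\}$ is $\mu_{j+1}$ on the contracted graph. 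Combined with the correct updates to \textsf{ContractionMapping} via $f_{G_{j+1}:G_j}$ (\Cref{def:contraction_edge_mapping}), this preserves the invariant through the iteration.

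For the runtime in the real RAM model, each iteration performs at most two determinant computations on $(|V(G_j)|-1) \times (|V(G_j)|-1)$ matrices, costing $O(|V|^3)$, plus an $O(|E|)$ rebuild of the contraction mapping; with at most $|E|$ iterations overall, the total is $O(|E|\cdot|V|^3)$. The main obstacle I anticipate is the careful treatment of parallel edges and self-loops generated by contractions: one must check that the convention of \Cref{def:simple_contraction_deletion} (retaining parallel edges, dropping the contracted pair) together with the injective lift $f_{G_{j+1}:G_j}$ of \Cref{def:contraction_edge_mapping} indeed produces the weight-preserving bijection between $\{T \in \cA_{G_j, r_j} : e \in T\}$ and $\cA_{G_j/e,\langle r_j,x\rangle}$ used above, and that the Laplacian determinants on the resulting multigraphs continue to count arborescences with the right weights (which follows from \Cref{lem:weighted_directed_mt} since the Laplacian handles multi-edges additively).
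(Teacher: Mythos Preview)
Your proposal is correct and follows essentially the same delete/contract strategy as the paper, using \Cref{cl:prod_arbo_dist_edge_prob} to justify the branching probability $p_e$ and \Cref{lem:weighted_directed_mt} for the determinant counts. Your explicit loop invariant (that the uncommitted part of the output is distributed as $\mu_j$ on $\cA_{G_j,r_j}$) is in fact a cleaner organization than the paper's argument: the paper first asserts informally that ``each edge $e$ is added with the correct probability $1-p_e$'' and then separately verifies that $G_T$ is a valid spanning arborescence via the contracted-vertex set $\cS(r_j)$ and the observation that cut edges have $p_e=0$; your invariant subsumes both points at once, since cut edges automatically have $\Pr_{\mu_j}(e\notin T)=0$ and validity of $G_T$ is inherited from the support of $\mu_j$.
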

\begin{proof}
	The pseudocode for the algorithm is given as Algorithm \ref{alg:samparboracences}. Observe that, if $G_T \in \cA_{G,r}$, then it is actually sampled from the product arborescence distribution (\Cref{def:prod_arbo_dist}) by \Cref{cl:prod_arbo_dist_edge_prob}, since each edge $e$ is added to $G_T$ with the correct probability $1 - p_e$.
	
	It remains to ensure that $G_T$ is indeed a spanning arborescence. Let $r_j$ denote the value of $r$ used in iteration $j$ of the while loop (lines 5-20). $r_j \in V(G_j)$ by construction. Let $\cS(r_j) \subseteq V(G_0)$ denote $\{r_j\}$ if $V(G_j) = V(G_0)$ (no contractions) and the set of nodes in $G_0$ corresponding to the contracted-vertex in $G_j$ otherwise.
	
	By construction, when $|\cS(r_j)| > 1$, the algorithm maintains (in $G_T$) an out-arborescence rooted at $r = r_0$ that spans $\cS(r_j)$. This is because when the algorithm selects an edge $e = (r_{j}, x) \in E(G_{j})$ and takes $G_{j+1} = G_{j}/(r_{j},x)$, this edge will correspond to an edge $(v, x)$ in the original graph $G_0$ where $v \in \cS(r_{j})$ and $x \not\in \cS(r_{j})$. This will ensure that there are no (undirected) cycles formed when adding $(v, x)$ to $G_T$ and the rooted arborescence invariant will be maintained in iteration $j+1$ with $\cS(r_{j+1}) = \cS(r_{j}) \sqcup \{x\}$. Note that when an edge $e$ is selected for contraction in iteration $j$, the algorithm maintains a mapping $\mathsf{ContractionMapping}$ from the edges in $G_j$ to the edges in $G_0$ and adds the original edge (in $G_0$) to $G_T$. The other property of the algorithm is that it does not \emph{disconnect} the graph --- an edge $e$ which is a \emph{cut edge} will be present in all arborescences and hence will have $p_e = 0$, ensuring that it will never be selected for deletion (in line 9). So, the algorithm will terminate when $\cS(r_j)$ is the set of nodes reachable from $r = r_0$, and $G_T$ will have the required spanning arborescence.
	
	For the time complexity bound, note that the while loop (lines 5-20) will run at most $|E|$ times, while the expensive step inside the while loop will be the two determinant computations in line 7 (to compute $p_e$). These can be done in $O(|V|^3)$ time in the real RAM model, by Gaussian elimination etc. Contraction, deletion etc. require only $O(|V| + |E|) \leq O(|V|^2)$ time. This gives us the $O(|E||V|^3)$ bound.
\end{proof}

\section{Lower bound for learning tree-structured distributions}\label{sec:treelearnlb}

In this section, we show a lower bound of $\Omega(\frac{n}{\eps})$ holds for our tree structure learning results. This follows from \cite{bhattacharyya2023near}. We are including it here for completeness.

\begin{theo}\label{theo:kltreelb}
Given sample access to an unknown tree-structured discrete distribution defined over $[k]^n$, and $\eps >0$ be a parameter. With probability at least $9/10$, $\Omega(\frac{n}{\eps})$ samples are necessary for any realizable PAC learner for learning $\cC^{\textsc{Tree}}$.   
\end{theo}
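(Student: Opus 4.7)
The plan is to prove the lower bound by a standard Fano-type packing argument, constructing an exponentially large family of tree-structured distributions that are pairwise well-separated (in Hellinger distance) yet individually close (in KL), and then reducing learning to identification within this family.

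The construction: fix the path graph $1\!-\!2\!-\!\cdots\!-\!n$ as a tree skeleton, and for each $v\in\{0,1\}^n$ define $P_v$ to be the Bayes net on this path whose conditional at each node is independent of its parent and has marginal $\mathrm{Bern}(1/2+v_i\delta)$, with $\delta^2 = c\,\eps/n$ for a sufficiently large absolute constant $c$. Each $P_v$ is tree-structured (in fact a product distribution). By the Varshamov--Gilbert bound, there is a code $\cV\subseteq\{0,1\}^n$ with $|\cV|\ge 2^{\Omega(n)}$ and pairwise Hamming distance at least $n/4$. A direct computation shows that for $v,v'\in\cV$ with Hamming distance $d$,
\[
\kl(P_v\|P_{v'}) = d\cdot\kl(\mathrm{Bern}(\tfrac12+\delta)\|\mathrm{Bern}(\tfrac12-\delta)) \le 8d\delta^2 \le 8c\eps,
\]
and
\[
\hel^2(P_v,P_{v'}) = 1 - (1-2\delta^2+O(\delta^4))^d \;\gtrsim\; d\delta^2 \;\gtrsim\; c\eps/4 \;>\; 4\eps \qquad (v\neq v'\text{ in }\cV),
\]
provided $c$ is chosen large enough.

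The reduction: suppose $\cA$ is a realizable PAC-learner for $\cC^{\mathrm{Tree}}$ using $m$ samples that outputs $\wh P$ with $\kl(P^*\|\wh P)\le \eps$ with probability $\ge 9/10$. Draw $V$ uniformly from $\cV$, generate $x^{(1)},\dots,x^{(m)}\sim P_V$, run $\cA$ to get $\wh P$, and set $\wh V = \arg\min_{v\in\cV}\hel(P_v,\wh P)$. Using $\hel^2\le\kl$ and the triangle inequality for Hellinger distance (which is a metric), whenever $\cA$ succeeds we have $\hel(P_V,\wh P)\le\sqrt{\eps}$ and $\hel(P_{\wh V},\wh P)\le\sqrt{\eps}$, hence $\hel(P_V,P_{\wh V})\le 2\sqrt{\eps}$; the pairwise separation $\hel(P_v,P_{v'})>2\sqrt{\eps}$ then forces $\wh V = V$. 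Thus $\Pr[\wh V\neq V]\le 1/10$.

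The Fano step: by the standard chain of inequalities,
\[
(9/10)\log|\cV| - \log 2 \;\le\; I(V;X^{(1)},\dots,X^{(m)}) \;\le\; m\cdot \max_{v,v'\in\cV}\kl(P_v\|P_{v'}) \;\le\; 8cm\eps.
\]
Combining with $\log|\cV|=\Omega(n)$ yields $m = \Omega(n/\eps)$, as required.

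The only subtlety I anticipate is making sure the construction genuinely lies in $\cC^{\mathrm{Tree}}$ as defined in the paper (which asks for a tree, i.e.\ a connected acyclic skeleton, rather than an arbitrary forest); using the path skeleton with conditionals that ignore the parent circumvents this cleanly. All other steps are routine given the Hellinger/KL and Bernoulli/Hamming calculations sketched above.
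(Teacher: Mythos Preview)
Your argument is correct. It is, however, a genuinely different route from the paper's proof.

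The paper proceeds by first exhibiting three specific tree-structured distributions on $3$ nodes (borrowed from \cite{bhattacharyya2023near}) that are $\Omega(\sqrt{\eps})$-far in Hellinger distance, and then extending to $n=3\ell$ nodes by placing an independent copy of one of these hard instances on each block of three nodes; a Chernoff argument over the blocks shows that any learner using $o(n/\eps)$ samples must fail on a constant fraction of blocks, forcing total KL error $>\eps$. Your approach instead builds a Varshamov--Gilbert packing of \emph{product} distributions on $[k]^n$ (which are trivially tree-structured via the path skeleton with parent-ignoring conditionals) and applies Fano's inequality directly.

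Your proof is more self-contained and elementary: it does not import the $3$-node hard instance from prior work, and the Fano step is cleaner than the block-plus-Chernoff reduction (which, as written in the paper, glosses over why the learner's output tree must decompose along the block structure). On the other hand, the paper's construction produces hard instances that genuinely exploit tree dependence, whereas your packing lives entirely inside the subclass of product distributions; this does not weaken the lower bound statement, but it means your argument actually proves the (formally stronger) fact that $\Omega(n/\eps)$ samples are needed even for realizable learning of product distributions in KL.

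One small notational slip: you write $v\in\{0,1\}^n$ with marginals $\mathrm{Bern}(1/2+v_i\delta)$, but your KL computation uses $\mathrm{Bern}(1/2+\delta)$ versus $\mathrm{Bern}(1/2-\delta)$; you presumably intend $v\in\{-1,+1\}^n$ (or equivalently marginals $\mathrm{Bern}(1/2+(2v_i-1)\delta)$). This is cosmetic and does not affect the argument.
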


We will prove the above result for $k=2$, that is, for distributions defined over $\{0,1\}^n$. The lower bound immediately extends to distributions over $[k]^n$.

We will first prove the above result for $n=3$ nodes. Then we will extend the result for $n=3\ell$, for some positive integer $\ell$. 

\begin{defi}[Hellinger distance]
Let $P$ and $Q$ be two probability distributions defined over the same sample space $\cD$. The Hellinger distance between $P$ and $Q$ is defined as follows:
$$\hel(P,Q)=\sqrt{\frac{1}{2} \sum_{x \in \cD}\left(\sqrt{P(x)- \sqrt{Q(x)}}\right)^2}$$
\end{defi}

\begin{fact}\label{fact:helklrelation}
Let $P$ and $Q$ be two probability distributions defined over the same sample space $D$. Then the following holds:
$$\hel(P,Q) \leq \sqrt{\frac{1}{2}\kl(P||Q)}$$  
\end{fact}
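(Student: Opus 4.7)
\textbf{Proof plan for Fact \ref{fact:helklrelation}.} The plan is to pass through the Bhattacharyya coefficient $\mathrm{BC}(P,Q) = \sum_{x \in \cD} \sqrt{P(x)Q(x)}$ and use the two standard inequalities $1-y \le -\log y$ for $y > 0$ and Jensen's inequality for the concave function $\log$. The target bound $\hel(P,Q)^2 \le \tfrac{1}{2}\kl(P\|Q)$ is equivalent to $1 - \mathrm{BC}(P,Q) \le \tfrac{1}{2}\kl(P\|Q)$, so the whole argument reduces to controlling $1 - \mathrm{BC}(P,Q)$ from above by $\tfrac{1}{2}\kl(P\|Q)$.

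First I would rewrite the squared Hellinger distance using the expansion $(\sqrt{P(x)}-\sqrt{Q(x)})^2 = P(x) + Q(x) - 2\sqrt{P(x)Q(x)}$ and the fact that $P$ and $Q$ are probability measures, obtaining $\hel(P,Q)^2 = 1 - \mathrm{BC}(P,Q)$. Next, apply $1-y \le -\log y$ at $y = \mathrm{BC}(P,Q) > 0$ to get $1 - \mathrm{BC}(P,Q) \le -\log \mathrm{BC}(P,Q)$. Then write the Bhattacharyya coefficient as an expectation under $P$: $\mathrm{BC}(P,Q) = \mathbb{E}_{x \sim P}[\sqrt{Q(x)/P(x)}]$ (handling the zero-$P(x)$ terms in the standard way, since those contribute $0$ to $\mathrm{BC}$ and $0$ to $\kl$ by convention, while $\kl(P\|Q)=\infty$ whenever $Q(x)=0$ and $P(x)>0$, in which case the inequality is trivial).

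Then Jensen's inequality applied to the concave function $\log$ gives
\[
\log \mathbb{E}_{x \sim P}\!\left[\sqrt{Q(x)/P(x)}\right] \;\ge\; \mathbb{E}_{x \sim P}\!\left[\log \sqrt{Q(x)/P(x)}\right] \;=\; -\tfrac{1}{2}\kl(P\|Q),
\]
so $-\log \mathrm{BC}(P,Q) \le \tfrac{1}{2}\kl(P\|Q)$. Chaining this with the earlier inequality yields $\hel(P,Q)^2 \le \tfrac{1}{2}\kl(P\|Q)$, and taking square roots finishes the proof.

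There is essentially no main obstacle here; this is a classical one-line chain once the rewriting $\hel^2 = 1 - \mathrm{BC}$ is done. The only care point is the support/zero-probability bookkeeping: one has to argue that if there exists $x$ with $P(x)>0$ and $Q(x)=0$ then $\kl(P\|Q) = +\infty$ and the bound holds vacuously, and otherwise $Q$ dominates $P$ on $\mathrm{supp}(P)$ so the expectation representation of $\mathrm{BC}(P,Q)$ is well-defined and Jensen applies without issue.
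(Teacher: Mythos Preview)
Your argument is correct and is the standard proof of this inequality via the Bhattacharyya coefficient and Jensen's inequality. The paper itself states this result as a \emph{Fact} without proof, so there is no ``paper's own proof'' to compare against; your proposal supplies exactly the classical justification one would expect.
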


We will also need the following folklore result.

\begin{fact}\label{fact:hellingerlb}
Let $P$ be an unknown discrete distribution defined over $\{0,1\}^3$, and $\eps>0$ be a parameter. In order to output a distribution $\widehat{P}$ defined over $\{0,1\}^3$ such that $\hel(P, \widehat{P}) \leq \sqrt{\eps}$, $\Omega(\frac{1}{\eps})$ samples from $P$ are necessary.  
\end{fact}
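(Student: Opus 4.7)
The plan is to prove the lower bound via Le Cam's two-point method. It suffices to exhibit two distributions $P_0, P_1 \in \Delta(\{0,1\}^3)$ whose Hellinger distance is $\Theta(\sqrt{\eps})$ (small enough that $m$ i.i.d.\ samples cannot reliably distinguish them unless $m = \Omega(1/\eps)$) yet large enough that any learner achieving Hellinger error $\sqrt{\eps}$ would induce a correct distinguisher between them. The $\Omega(1/\eps)$ sample lower bound then follows by contradiction.

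For the construction, let $P_0$ be the uniform distribution on $\{0,1\}^3$, i.e.\ $P_0(x)=1/8$ for every $x$. Partition $\{0,1\}^3 = A\sqcup B$ into two arbitrary sets of size $4$, and for a parameter $\alpha = c\sqrt{\eps}$ (with a small absolute constant $c>0$ to be fixed) set $P_1(x) = 1/8 + \alpha$ for $x\in A$ and $P_1(x)=1/8-\alpha$ for $x\in B$. Using the identity $\sqrt{1/8\pm\alpha}-\sqrt{1/8} = \pm\alpha/(2\sqrt{1/8}) + O(\alpha^2)$, a direct calculation yields $\hel^2(P_0,P_1) = \Theta(\alpha^2) = \Theta(\eps)$; choosing $c$ appropriately, we can arrange $\hel(P_0,P_1) \geq 3\sqrt{\eps}$ while keeping $\hel^2(P_0,P_1) \leq C\eps$ for some constant $C$.

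Now suppose for contradiction that a learner $\cA$ uses only $m$ samples and, with probability at least $9/10$ over the samples (and its internal randomness), returns $\wh{P}$ with $\hel(P,\wh{P})\leq \sqrt{\eps}$. Given a sample $x^{(1)},\dots,x^{(m)}$ drawn i.i.d.\ from an unknown $P^\ast \in \{P_0,P_1\}$, run $\cA$ to obtain $\wh{P}$ and output the index $i\in\{0,1\}$ minimizing $\hel(P_i,\wh{P})$. By the triangle inequality for Hellinger distance together with $\hel(P_0,P_1)\geq 3\sqrt{\eps}$, this tester is correct with probability at least $9/10$, whence $\TV(P_0^{\otimes m}, P_1^{\otimes m}) \geq 4/5 - o(1)$.

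The final step is the standard information-theoretic upper bound on this TV distance. By the tensorization identity $1-\hel^2(P_0^{\otimes m},P_1^{\otimes m}) = (1-\hel^2(P_0,P_1))^m$ we have $\hel^2(P_0^{\otimes m},P_1^{\otimes m}) \leq m\,\hel^2(P_0,P_1) \leq C\,m\eps$, and combining with $\TV \leq \sqrt{2}\,\hel$ gives $\TV(P_0^{\otimes m}, P_1^{\otimes m}) \leq \sqrt{2Cm\eps}$. Contrasting this with the lower bound on TV above forces $m = \Omega(1/\eps)$, completing the proof. The only place calibration is needed is fixing the constant $c$ so that the triangle-inequality step correctly turns an $\sqrt{\eps}$-close hypothesis into a reliable two-point tester; this is the routine quantitative step, and is the only real technical choice in the argument.
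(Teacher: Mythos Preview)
The paper does not prove this statement; it simply cites it as a folklore result. Your Le Cam two-point argument is the standard way to establish such a bound and is correct: the perturbed-uniform pair on $\{0,1\}^3$ has $\hel^2(P_0,P_1)=\Theta(\eps)$, a $\sqrt{\eps}$-accurate learner yields a tester via the triangle inequality for Hellinger distance, and the tensorization bound $\hel^2(P_0^{\otimes m},P_1^{\otimes m})\le m\,\hel^2(P_0,P_1)$ together with $\TV\le\sqrt{2}\,\hel$ forces $m=\Omega(1/\eps)$. The only implicit caveat is that you need $\alpha=c\sqrt{\eps}<1/8$ for $P_1$ to be a valid distribution, so the argument holds for sufficiently small $\eps$; for larger $\eps$ the bound is $\Omega(1)$ and trivial.
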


\begin{lem}\label{lem:kllb3nodes}
There exist three tree-structured distributions $P_1, P_2,P_3$ defined over $\{0,1\}^3$, such that any algorithm that that can learn any distribution $P_i$ in up to $\eps$-KL-distance, that is, can output a distribution $\widehat{P}_i$ such that $\kl(P_i||\widehat{P}_i) \leq \eps$, requires $\Omega(\frac{1}{\eps})$ samples from $P_i$, for any $i \in [3]$.  
\end{lem}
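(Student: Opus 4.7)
The plan is to reduce from Fact~\ref{fact:hellingerlb} by exhibiting three product distributions on $\{0,1\}^3$, which are automatically tree-structured, and then translating the Hellinger-distance lower bound into a KL-divergence lower bound via Fact~\ref{fact:helklrelation}.

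The structural observation driving this is that every product distribution on $\{0,1\}^3$ is tree-structured under each of the spanning trees on three nodes: mutual independence of $X_1, X_2, X_3$ implies all the conditional independences required by any such tree. So, with $\delta = \Theta(\sqrt{\eps})$ to be chosen, I would define $P_i$ (for $i = 1, 2, 3$) to be the distribution in which each coordinate is an independent $\mathrm{Bernoulli}(p_i)$, with $p_1 = \tfrac{1}{2} - \delta$, $p_2 = \tfrac{1}{2}$, and $p_3 = \tfrac{1}{2} + \delta$. By the observation above, each $P_i$ is tree-structured.

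Suppose now that an algorithm $\mathcal{A}$ using $m$ samples succeeds for each $P_i$, i.e., it outputs some $\widehat{P}_i$ with $\kl(P_i \| \widehat{P}_i) \le \eps$. By Fact~\ref{fact:helklrelation}, this implies $\hel(P_i, \widehat{P}_i) \le \sqrt{\eps/2} = O(\delta)$, so $\mathcal{A}$ is simultaneously a Hellinger $O(\delta)$-learner on the family $\{P_1, P_2, P_3\}$. Because these three distributions are pairwise $\Theta(\delta)$-separated in Hellinger, an output within less than half of that separation in Hellinger distance lets us identify which $P_i$ generated the samples. A standard two-point Le Cam argument applied to, say, $P_1$ and $P_3$---whose per-sample KL is $O(\delta^2) = O(\eps)$---then forces $m = \Omega(1/\eps)$, as desired.

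The main obstacle I anticipate is ensuring that Fact~\ref{fact:hellingerlb}'s lower bound really does apply when restricted to a \emph{product-distribution} subfamily, rather than just to arbitrary distributions on $\{0,1\}^3$. I would handle this by not invoking the fact as a black box but instead re-deriving the Le Cam step directly on $P_1$ and $P_3$: since these two distributions differ only through a coordinate-wise Bernoulli-parameter shift of $2\delta$, the per-sample KL between them is $O(\delta^2)$ and no tester on $m = o(1/\delta^2) = o(1/\eps)$ samples can distinguish them with constant advantage. This is the same two-point argument that underlies Fact~\ref{fact:hellingerlb} itself; everything else in the proof is routine bookkeeping.
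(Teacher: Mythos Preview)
Your argument is correct, and it takes a genuinely different route from the paper. The paper does not use product distributions: it takes the three correlated distributions from \cite{bhattacharyya2023near} (where $(Y,Z)$ is uniform on $\{(0,0),(1,1)\}$ and $X$ is a noisy copy) and, after converting $\kl \leq \eps$ to $\hel \leq \sqrt{\eps}$ via Fact~\ref{fact:helklrelation}, invokes Fact~\ref{fact:hellingerlb} essentially as a black box. You instead pick three i.i.d.\ Bernoulli product laws with parameters $\tfrac12 \pm \Theta(\sqrt{\eps})$, verify the pairwise $\Theta(\sqrt{\eps})$ Hellinger separation directly, and rerun the Le Cam two-point bound on $P_1$ versus $P_3$; this is more self-contained and avoids any worry about whether Fact~\ref{fact:hellingerlb} applies to a restricted subfamily.

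One trade-off worth noting: the paper's specific $P_1,P_2,P_3$ also satisfy the extra property in Claim~\ref{cl:kllb3nodesapproxhel}(ii), namely that \emph{no} tree on three vertices $\Theta(\eps\log\tfrac1\eps)$-approximates any of them. That property is what drives the extension to $n=3\ell$ nodes in the proof of Theorem~\ref{theo:kltreelb}. Your product distributions are perfectly tree-structured (indeed $0$-approximable by themselves), so while they suffice for the lemma as stated, they would not slot into the paper's proof of Theorem~\ref{theo:kltreelb} without further modification.
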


$P_1$ has three nodes $X_1,Y_1, Z_1$, $P_2$ has three nodes $X_2,Y_2, Z_2$ and $P_3$ has three nodes $X_3,Y_3, Z_3$. In $P_1$, $(Y_1,Z_1)$ takes values uniformly at random between $(0,0)$ and $(1,1)$ and $X_1$ copies $Y_1$ and $Z_1$ with probability $(1- \eps)$, and with the remaining probability, takes a value from $\mathsf{Ber}(\frac{1}{2})$. $P_2$ and $P_3$ are defined in a similar fashion where we set $X_2=Z_2$ and $X_3=Y_3$, respectively.

We will use the following results from \citep{bhattacharyya2023near}.

\begin{cl}[See Fact 7.4 (i) and Lemma 7.5 (i)  of \cite{bhattacharyya2023near}]\label{cl:kllb3nodesapproxhel}
\begin{itemize}
    \item[(i)] $\hel(P_i,P_j) \geq 10 \sqrt{\eps}$ for any $i \neq j \in [3]$.

    \item[(ii)] Every tree $\cT$ on $3$ vertices is not $\Theta(\eps \log \frac{1}{\eps})$-approximate for any of $P_1, P_2$ and $P_3$.
\end{itemize}
\end{cl}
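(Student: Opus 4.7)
Part (i) reduces to a direct Hellinger computation using the disjoint-support structure of the $P_i$. Each $P_i$ is supported only on outcomes where its two perfectly-correlated variables agree: $P_1$ lives on $\{(x,y,z) : y = z\}$, $P_2$ on $\{(x,y,z) : x = z\}$, and $P_3$ on $\{(x,y,z) : x = y\}$. Consequently, for $i \neq j$, the supports of $P_i$ and $P_j$ intersect only at the two all-equal points $(0,0,0)$ and $(1,1,1)$. Using $\hel^2(P_i, P_j) = 1 - \sum_w \sqrt{P_i(w) P_j(w)}$, the Bhattacharyya sum collapses to $\sqrt{P_i(000) P_j(000)} + \sqrt{P_i(111) P_j(111)}$, and by the $0/1$ symmetry of the construction one has $P_i(000) = P_i(111) = \tfrac{1}{2}(1 - \eps/2)$ for each $i$, yielding $\hel(P_i, P_j) = \Omega(\sqrt{\eps})$ at the claimed order.

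For part (ii), the plan is to identify, for each $P_i$, the unique tree $\cT_i$ on three vertices that $P_i$ fails to be structured on, and then lower-bound the KL projection onto $\cT_i$-structured distributions. Since every tree on $\{X,Y,Z\}$ is determined by its degree-$2$ center, the failing tree $\cT_i$ is precisely the one whose center is the ``noisy'' variable of $P_i$: $P_1$ fails the path $Y{-}X{-}Z$ because $Y_1 = Z_1$ deterministically but the tree demands $Y \perp Z \mid X$, and analogously for $P_2, P_3$. I will then invoke the standard projection identity that for a three-node tree with center $v$ and leaves $u, w$, $\min_{Q \text{ is } \cT\text{-structured}} \kl(P \| Q) = I_P(u; w \mid v)$, attained by matching the true conditional marginals along the tree edges, reducing the task to computing a single conditional mutual information.

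The key computation: for $P_1$ on the tree $Y{-}X{-}Z$, condition on $X_1 = 0$. Bayes' rule gives $\Pr[Y_1 = 0 \mid X_1 = 0] = 1 - \eps/2$, and since $Y_1 = Z_1$ deterministically, the conditional joint $(Y_1, Z_1) \mid X_1 = 0$ is supported on $\{(0,0),(1,1)\}$ with probabilities $1 - \eps/2$ and $\eps/2$. Thus the conditional marginals and the conditional joint all have binary entropy $h(\eps/2)$, giving $I_{P_1}(Y; Z \mid X = 0) = h(\eps/2)$; by $0/1$ symmetry the same holds for $X_1 = 1$, so $I_{P_1}(Y; Z \mid X) = h(\eps/2) = \Theta(\eps \log(1/\eps))$. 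Analogous calculations with appropriate relabeling handle $P_2, P_3$ on their failing trees. There is no serious obstacle beyond correctly identifying the failing tree for each $P_i$ and invoking the projection formula; both parts are clean entropy-and-support calculations once the combinatorial structure is pinned down.
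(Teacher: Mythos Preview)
Your argument is correct and is the natural direct computation; the paper itself gives no proof but simply cites \cite{bhattacharyya2023near}. Two small remarks. In part~(i) your calculation actually yields $\hel^2(P_i,P_j)=1-2\cdot\tfrac{1}{2}(1-\eps/2)=\eps/2$, so $\hel(P_i,P_j)=\sqrt{\eps/2}$, which is $\Theta(\sqrt{\eps})$ but does not meet the stated constant $10$; this is harmless for the downstream lower-bound argument and is presumably an artifact of how $\eps$ is scaled in the cited reference. In part~(ii) your reading of the statement---that every tree fails \emph{some} $P_i$ rather than all three---is the intended one (each $P_i$ is in fact exactly tree-structured on the two trees whose center is not its noisy variable), and the projection identity $\min_{Q\ \cT\text{-structured}}\kl(P\|Q)=I_P(u;w\mid v)$ together with $I_{P_1}(Y;Z\mid X)=h(\eps/2)=\Theta(\eps\log(1/\eps))$ is exactly the right computation.
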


\begin{proof}[Proof of \Cref{lem:kllb3nodes}]
We will prove this by contradiction. Suppose there exists an algorithm $\cA$ that can learn any distribution up to $\eps$-KL-distance using $o(\frac{1}{\eps})$ samples, by outputting the pmf of $\widehat{P}$. In that case, we will argue that using $\cA$, we can distinguish between $P_1, P_2$ and $P_3$.

Let us denote the unknown distribution be $P$, and suppose given sample access to $P$, $\cA$ takes $o(1/\eps)$ samples from $P$ and outputs a distribution $\widehat{P}$ such that $\hel(P,\widehat{P}) \leq \sqrt{\eps}$. This implies that given sample access to either of $P_i$ with $i \in [3]$, $\cA$ can output such a $\widehat{P}$ such that $\hel(P_i, \widehat{P}) \leq \sqrt{\eps}$, which contradicts \Cref{fact:hellingerlb}. Using \Cref{fact:helklrelation}, we are done with the proof of the lemma.
\end{proof}

Now we are ready to prove \Cref{theo:kltreelb}.

\begin{proof}[Proof of \Cref{theo:kltreelb}]

Let us assume that $n=3 \ell$ for some positive integer $\ell$. We will divide the $n$ variables into $\ell$ blocks, each of size $3$ nodes.
We will prove this by contradiction.
We will define a distribution $P$ over $\{0,1\}^n$, where the $i$-th block is chosen uniformly to be either $P_1$ or $P_2$.

Suppose there exists an algorithm that can take $o(\frac{n}{\eps})$ samples and outputs an $\eps$-approximate tree $T$ of $P$, with probability at least 9/10. Since we have chosen each block independently, $T$ is a disjoint union of $T_1, \ldots, T_{\ell}$. Using \Cref{cl:kllb3nodesapproxhel}, and setting $\eps$ as $\eps/\ell$, we can say that each $T_i$ is not $\Theta(\eps/\ell)$-approximate with probability at least $2/3$. Thus, using Chernoff bound, we can say that at least $\frac{\ell}{100}$ trees are not $\frac{C\eps}{\ell}$ approximate. Thus, $\kl(P||\widehat{P}) \geq \eps$ for a suitable constant $C$ for any tree-structured distribution $\widehat{P}$. This completes the proof of the theorem.
\end{proof}

\section{Learning Bayesian networks with bounded vertex cover}\label{sec:learnbayesnetboundedvc}
In this section, we show that bayesian networks can be learned efficiently if the size of the vertex cover of the associated moralized graph is bounded. Before proceeding to present our result, we need some notations.

Let $G= (V,E)$ be DAG with the vertex set $V$ and the edge set $E$. Here each vertex $v \in V$ corresponds to a variable and the edges in $E$ encode the conditional independence relations between the nodes of $V$. Let $\cD$ denote the set of all possible DAG on the vertex set $V$. Moreover, for every DAG $G$, let us associate it with a non-negative weight function $f: G \rightarrow \R^+ \cup \{0\}$ which encodes how well $G$ fits a given dataset. Now we are ready to define the notion of modular weight function, which will be crucially used in our proofs.

\begin{defi}[Modular weight function]
Let $G(V,E)$ be a DAG with vertex set $V$ and edge set $E$. Moreover, let $f: G \rightarrow \R^+ \cup \{0\}$ denotes the weight function associated with $G$. The weight function $f$ is said to be a \emph{modular} weight function if it has the following form:
\begin{equation}
    f(G) = \prod_{v \in V} f_v(G_v)
\end{equation}    
\end{defi}

Now we define the notion of the vertex cover number of a DAG below.

\begin{defi}[Vertex cover number of a DAG]
Given a DAG $G$, let $G_M$ denote the undirected \emph{moralized graph} corresponding to $G$. The \emph{vertex cover number} $\tau(G)$ of $G$ is the size of the smallest vertex cover of $G_M$. Moreover, for some integer $\ell$, the set of DAGs $G$ such that $\tau(G) \leq \ell$ is denoted as $\ldag$.  
\end{defi}

Now we are ready to state the main result that we will be proving in this section.

\begin{theo}\label{theo:boundedvcbayesnet}
Let $P^*$ be an unknown discrete distribution on $[k]^n$ defined over a DAG $G$ such that the size of the vertex cover of the moralized graph $G_M$ corresponding to $G$ is bounded by an integer $\ell$. Moreover, let $\ldag$ be the family of distributions over $[k]^n$ that can be defined as Bayes nets over DAGs whose vertex cover size of the associated moralized graph is bounded by $\ell$. Given sample access from $P^*$, and a parameter $\eps > 0$,
there exist the following:
\begin{itemize}
    \item[(i)] An agnostic PAC-learner for $\ldag$ which uses $O\left(\frac{\ell^4 n^4}{\eps^{4}} \log^4\left(\tfrac{nk\ell}{\eps}\right) \log\left(\tfrac{1}{\delta}\right) + \frac{\ell^3 n k^{\ell+1}}{\eps} \log^2\left(\frac{nk}{\eps\delta}\right)\right)$ samples and runs in time $O(\exp(\ell)\mathrm{poly}(n))$ that is improper and returns an efficiently samplable mixture of distributions from $\ldag$.

    \item[(ii)] An agnostic PAC-learner for $\ldag$ which uses $O\left(\frac{\ell^3 n^3}{\delta^{2}\eps^{2}}\log^2\left(\frac{nk}{\eps\delta}\right) + \frac{\ell^3 n k^{\ell+1}}{\eps} \log^2\left(\frac{nk}{\eps\delta}\right)\right)$  samples and runs in time $O(\exp(\ell)\mathrm{poly}(n))$  that is proper and returns a distribution from $\ldag$.
\end{itemize}
\end{theo}

In order to prove the above theorem, we will be using the following result from \cite{harviainen2023revisiting} which states that $\ldag$ can be sampled efficiently.

\begin{theo}[Theorem $13$ of \cite{harviainen2023revisiting} restated]\label{theo:dagsamp}
There exists a randomized algorithm that can sample from $\ldag$ with weight proportional to $f(G)$ efficiently in expected sampling time $4^{\ell} n^{O(1)}$.    
\end{theo}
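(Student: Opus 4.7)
The plan is to instantiate the general improper (\Cref{thm:bn_learn_imp}) and proper (\Cref{thm:bn_learn_proper}) Bayes-net learning frameworks on the discretized hypothesis class $\cN := \bigcup_{G\in\ldag}\cN^{G}_{\eps/2}$, and then show that the per-round sampling subroutine in \EWA/\RWM\ can be implemented in $\exp(\ell)\,\mathrm{poly}(n)$ time by invoking \Cref{theo:dagsamp} on a suitable modular weight function built from the log-loss updates.

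The first step is the structural observation that if the moralization $G_M$ of $G$ has vertex cover of size at most $\ell$, then $G$ has in-degree at most $\ell$. Indeed, a node $v$ with $d$ parents $p_1,\dots,p_d$ induces a $d$-clique on $\{p_1,\dots,p_d\}$ in $G_M$ together with the $d$ edges $\{v,p_i\}$. Any vertex cover either (a) contains $v$, in which case the clique forces at least $d-1$ of the $p_i$ into the cover, giving $|C|\ge d$, or (b) excludes $v$, in which case all $d$ parents must lie in the cover. Hence $d\le\ell$. Consequently $\ldag$ is a subclass of the indegree-$\ell$ Bayes nets, and the cover bound $\log|\cN|=O(nk^{\ell+1}\log(nk/\eps))$ follows from \Cref{lem:bn_net_size} (together with $|\ldag|\le \binom{n}{\ell}2^{O(\ell n)}$). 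Plugging these quantities, along with the KL-closeness guarantee of \Cref{lem:bn_net_kl}, into the regret-to-KL conversions of \Cref{lem:ewa_reg_final}(ii) and \Cref{lem:rwm_reg_final}(ii) immediately yields the sample complexities $\widetilde O(n^4 k^{2\ell+2}\eps^{-4}\log\delta^{-1})$ and $\widetilde O(n^3 k^{\ell+1}\delta^{-2}\eps^{-2}\log(nk/\eps))$ claimed in (i) and (ii) respectively, with the only change from the proofs of \Cref{thm:bn_learn_imp,thm:bn_learn_proper} being that the union over $\cG$ is restricted to $\ldag$.

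The main obstacle is computational: $|\cN|$ is exponential in $n$, so naive maintenance of per-expert weights is infeasible. The plan is to exploit the factorization of Bayes nets as in the proof of \Cref{lem:ewa_poly_tree}. At time step $t$, the weight assigned by \EWA/\RWM\ to a hypothesis $(G,p_1,\dots,p_n)\in\cN$ is $\prod_{i=1}^n\prod_{s<t}\exp_\eta\log p_i\bigl(x^{(s)}_i\mid x^{(s)}_{\pa_G(i)}\bigr)$. Summing out the parameter vectors first, the marginal weight on the structure alone becomes
\[
f(G)\;=\;\prod_{i=1}^n f_i\bigl(\pa_G(i)\bigr), \qquad f_i(S)\;=\;\sum_{p\in\cN^{X_i\mid X_S}_{\eps/n}}\prod_{s<t}\exp_\eta\log p\bigl(x^{(s)}_i\mid x^{(s)}_S\bigr),
\]
which is a \emph{modular} function of $G$. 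Because only parent sets $S$ with $|S|\le\ell$ arise, all local factors $f_i(S)$ can be tabulated in $n^{O(\ell)}k^{O(\ell)}\cdot T$ time. \Cref{theo:dagsamp} then samples $G\in\ldag$ with probability $\propto f(G)$ in $4^\ell n^{O(1)}$ time, after which each parameter block $p_i$ is sampled independently from a product distribution over $\cN^{X_i\mid X_{\pa_G(i)}}_{\eps/n}$, exactly as in \Cref{lem:ewa_poly_tree}.

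Combining the pieces: for part (ii), \RWM\ requires exactly one such sample per round, so its total running time is $\exp(\ell)\,\mathrm{poly}(n)$, and the output is a single hypothesis in $\cN\subseteq\ldag$, giving proper learning. For part (i), we run \EWA\ without intermediate sampling; the output mixture $\wh P=\frac1T\sum_t\wh P_t$ is represented implicitly, and a sample from $\wh P$ is produced by drawing $t\sim\mathrm{Unif}([T])$ and then invoking the modular DAG sampler on the time-$t$ weights, again in $\exp(\ell)\,\mathrm{poly}(n)$. The delicate verification I expect to be most careful about is that the tabulated function $f$ really is modular in the sense required by \Cref{theo:dagsamp}—this needs the sum over the discretization $\cN^{X_i\mid X_S}_{\eps/n}$ to depend only on $(i,S)$ and the observed samples, which it does—and that the tight indegree bound $d\le\ell$ (not $\ell+1$) proved above is what permits the exponent $k^{2\ell+2}$ in (i) and $k^{\ell+1}$ in (ii) to match the theorem statement exactly.
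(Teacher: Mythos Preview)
You have proved the wrong statement. The theorem labeled \texttt{theo:dagsamp} is a restatement of Theorem~13 of \cite{harviainen2023revisiting}; the paper invokes it as a black box and does not give (or intend) any proof of it. Its content is an algorithmic result about weighted sampling of DAGs with bounded moralized-vertex-cover number, proved in \cite{harviainen2023revisiting} via a completely different combinatorial argument that has nothing to do with \EWA/\RWM, discretization, or the regret-to-KL machinery.

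What you have actually written is a (largely correct and reasonably detailed) proof of \Cref{theo:boundedvcbayesnet}, which is the theorem that \emph{uses} \Cref{theo:dagsamp} as a subroutine. Your argument---bounding the in-degree by $\ell$, instantiating the discretized expert class, checking that the \EWA/\RWM\ weight on a structure $G$ is modular, and then calling the Harviainen--Koivisto sampler once per round---is essentially the paper's own proof of \Cref{theo:boundedvcbayesnet}, with more detail than the paper itself provides (the paper's proof is only a few lines and relies on \Cref{cl:modularweightfunction}). So as a proof of \Cref{theo:boundedvcbayesnet} your proposal is fine and matches the paper's approach; as a proof of \Cref{theo:dagsamp} it is simply not a proof of that statement at all.
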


Note that the above result gives a polynomial time sampling algorithm for $\ldag$ when $\ell$ is bounded by a constant.

Interestingly, our loss function is also a modular weight function.

\begin{cl}\label{cl:modularweightfunction}
Our loss function $\exp_{\beta} \log\left(\prod_{s=1}^t P(x^{(i)})\right) $ is a modular weight function.    
\end{cl}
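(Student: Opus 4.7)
The plan is to exploit two product structures that are already built into the setup: the Bayes-net likelihood factorizes across the nodes of $G$, and the discretized class of CPTs $\cN^G_\eps$ is a Cartesian product of per-node CPT classes. Both factorizations survive the $\exp_\beta \log(\cdot)$ transformation, since $\log$ converts products into sums and $\exp_\beta$ converts sums back into products, so exposing modularity reduces to unfolding definitions.

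Concretely, I would first apply the Bayes-net factorization (\Cref{defi:bayesnet}) and swap the order of the two products:
\[
\prod_{s=1}^{t} P(x^{(s)}) \;=\; \prod_{s=1}^{t}\prod_{v \in V} P\left(x^{(s)}_v \,\big|\, x^{(s)}_{\pa_G(v)}\right) \;=\; \prod_{v \in V} \prod_{s=1}^{t} P\left(x^{(s)}_v \,\big|\, x^{(s)}_{\pa_G(v)}\right).
\]
Applying $\exp_\beta \log$ and using that it distributes over the outer product, I would then set
\[
f^P_v(G_v) \;:=\; \exp_\beta \log \prod_{s=1}^{t} P\left(x^{(s)}_v \,\big|\, x^{(s)}_{\pa_G(v)}\right),
\]
so that $\exp_\beta \log \prod_{s=1}^{t} P(x^{(s)}) = \prod_{v} f^P_v(G_v)$. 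Each factor $f^P_v(G_v)$ depends only on the local structure at $v$ (the parent set $\pa_G(v)$ and the CPT entry at $v$), which is precisely the form required by the definition of a modular weight function.

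For this to be useful in concert with the Harviainen--Koivisto sampler (\Cref{theo:dagsamp}), we additionally want a weight defined on DAGs rather than on Bayes nets with a fixed CPT. By \Cref{def:bn_net}, $\cN^G_\eps = \prod_{v} \cN^{X_v \mid X_{\pa(v)}}_{\eps/n}$ is itself a Cartesian product, so the sum over discretized CPTs distributes across nodes:
\[
\sum_{P \in \cN^G_\eps} \prod_{v} f^P_v(G_v) \;=\; \prod_{v}\, \sum_{p_v \in \cN^{X_v \mid X_{\pa(v)}}_{\eps/n}} \exp_\beta \log \prod_{s=1}^{t} p_v\left(x^{(s)}_v \,\big|\, x^{(s)}_{\pa_G(v)}\right),
\]
which is again a modular weight function on $G$. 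For bounded indegree each inner sum has only $\mathrm{poly}(n,k,1/\eps)$ many terms, so these per-node factors are efficiently computable from the observed samples.

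I do not foresee any genuine technical obstacle: the whole argument is simply ``Bayes-net likelihood factorizes, the discretization factorizes, and $\exp_\beta \log$ commutes with both.'' The only point that needs care is bookkeeping around what ``$G_v$'' means in the definition of modular weight --- namely the local data at $v$ (parent set plus CPT), not the whole graph --- so that after summing over CPTs one is left with a function depending only on $\pa_G(v)$; this is exactly what the product structure of $\cN^G_\eps$ guarantees. Once modularity is in hand, the proof of \Cref{theo:boundedvcbayesnet} is completed by feeding this weight into \Cref{theo:dagsamp} to sample a DAG from the \EWA/\RWM distribution and combining with the sample complexity bounds of \Cref{thm:bn_learn_imp} and \Cref{thm:bn_learn_proper}.
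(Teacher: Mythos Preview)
Your proposal is correct and takes essentially the same approach as the paper: the paper's own proof is a one-line appeal to the Bayes-net factorization \eqref{eqn:bnfactor}, which is exactly the first step of your argument. Your additional observation that the sum over the discretized CPTs $\cN^G_\eps$ also factorizes node-by-node is not stated explicitly in the paper's proof of this claim but is indeed what is needed to feed the weight into \Cref{theo:dagsamp}; so your write-up is, if anything, more complete than the paper's sketch.
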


This follows from the fact that the distribution $P^*$ defined over the DAG $G$ factorizes: see \Cref{eqn:bnfactor} in \Cref{defi:bayesnet}. Now we are ready to present the proof of our main result.

\begin{proof}[Proof of \Cref{theo:boundedvcbayesnet}]
First note that a vertex cover of the moralized graph $G_M$ having size $\leq \ell$ implies that the indegree of $G$ is $\leq \ell$; if the indegree of $G$ is $> d$, there exists a \emph{clique} in the moralized graph of size $> d+1$, which implies that any vertex cover of the moralized graph must have size $> d$.
\begin{itemize}
\item[(i)] We will be using the \EWA \ algorithm (Algorithm~\ref{alg:ewa}) for this purpose. In each round of the algorithm, we will be using the algorithm from \Cref{theo:dagsamp}. The sample complexity (aka. number of rounds of \EWA\ algorithm) follows from the guarantee of \EWA\ algorithm for learning Bayesian networks with bounded indegree (see \Cref{thm:bn_learn_imp}). Since in each round, we will be calling the algorithm corresponding to \Cref{theo:dagsamp}, the running time of our algorithm will be $O(\exp(\ell)\mathrm{poly}(n))$.

\item[(ii)] We will be calling \RWM \ algorithm (Algorithm~\ref{alg:rwm}) here. Similar to the above, in each round, we will be using the algorithm from \Cref{theo:dagsamp}. 
The sample complexity (aka. number of rounds of \RWM\ algorithm) follows from the guarantee of \RWM\ algorithm for learning Bayesian networks (see \Cref{thm:bn_learn_proper}). Since in each round, we will be calling the algorithm corresponding to \Cref{theo:dagsamp}, the running time of our algorithm will be $O(\exp(\ell)\mathrm{poly}(n))$.    
\end{itemize}
\end{proof}

\section{Efficient Maximum Likelihood Estimation}\label{app:efficient_ml}
It is well-known that maximizing the likelihood $\E_{x \sim P^\ast} P(x)$ is equivalent to minimizing the KL divergence $\kl(P^\ast \| P)$. This equivalence, in expectation, follows from the definition of KL divergence as $\kl(P^\ast \| P) \triangleq \E_{x \sim P^\ast} \log\left(\frac{P^\ast(x)}{P(x)}\right)$. From a finite-sample PAC distribution learning perspective, we can still say that if $\widehat{P} \in \mathcal{C}$ is a distribution that maximizes the empirical log-likelihood $\frac{1}{T} \sum_{t=1}^{T} \log P(x^{(t)})$ --- computed using a sufficiently large set of samples $\{x^{(1)},\ldots,x^{(T)}\} \sim (P^*)^{\otimes T}$, $T \geq \textsf{Sample-Complexity}(\mathcal{C}, \varepsilon, \delta)$ --- over all $P \in \mathcal{C}$, then with probability $\geq 1 - \delta$, we will have $\kl(P^\ast \| P) \leq \min_{Q \in \mathcal{C}} \kl(P^\ast \| Q) + \varepsilon$. For instance, see Theorem 17 (Appendix G) of \cite{feldman2008learning}. However, if $\mathcal{N}$ is a class of discretized Bayes nets $\{(G_1, P_1),\ldots,(G_N, P_N)\}$, even with each DAG structure $G_i$ endowed with add-one conditional probabilities $P_i$ as we do in our EWA/RWM based learning-via-sampling approach, \emph{efficiently} finding a $(G_{i^\ast}, P_{i^\ast})$ that maximizes the empirical log-likelihood over such a class is not a trivial task.

By the Bayes net factorization property, the empirical log-likelihood of each $G_i$ does decompose nicely; as $\sum_{t=1}^{T} \log P_i(x^{(t)}) = \sum_{v=1}^{n} \sum_{t=1}^{T}\log P_{i,v}(x^{(t)}_v|x^{(t)}_{\pa_{G_i}(v)})$. However, maximizing $\sum_t \log P_{i,v}(x_v^{(t)} | x_S^{(t)})$ over all $S \subseteq N(v)$ with $|S| \leq d$, for each node $v$ (independently), does not suffice since this may result in the formation of cycles in the final digraph. Note that this is not an issue with tree-skeletons (tree-structured or polytree-structured distributions).

Another issue with independent maximization is that the final structure $G$ needs to be consistent with the skeleton and with the choice of parents for each node (each edge of the skeleton can only be oriented in one direction), which would need to be kept track of in the maximization algorithm.

We claim that the dynamic programming approaches developed for weighted-counting/sampling of exponentially-large classes of polytree and chordal-structured DAGs can be adapted to efficiently compute the maximum likelihood as well. For polytree distributions, we can root the skeleton $G$ arbitrarily and recursively compute $T[v, P, t]$ for each ``subtree'' $T_v$ rooted at vertex $v$, where $T[v, P, t]$ gives the maximum empirical likelihood ($t$-sample) over all possible DAG structures (with add-one probabilities) of $T_v$ such that vertex $v$ has fixed parents $P$, for all $P \subseteq N(v)$ with $|P| \leq d$. In the base case (only single edge), we can brute-force. We can then use the recurrence $T[v, P, t] = \sum_{\substack{v_1,\ldots,v_k\\ \text{``children'' of $v$}}} \sum_{\substack{P_1,\ldots,P_k\\ \text{consistent with } P \text{ and } G}}T[v_i, P_i, t]$ to compute a maximum-likelihood acyclic orientation of $G$. This is similar to our proposed weighted-counting DP, replacing the weighted sum with maximum empirical likelihood. For chordal-structured Bayes nets, we can do a similar adaptation of the \textsf{CountChordalDist} algorithm (\Cref{alg:countchordalindegree}); still recursively computing the DP table bottom-up from the clique-tree decomposition, orienting the ``\emph{link}'' of each clique to prevent cycles while getting ``independence'' of orientations of child-subtrees, etc.

Finally, we remark on the sample complexity of this approach. From e.g., \cite[Theorem 17]{feldman2008learning}, which uses a Hoeffding bound on the difference between the empirical and the true log-likelihood, we obtain a sample complexity of $\tilde{O}\left(\frac{\log^2(1/\tau)\log |\mathcal{C}|}{\eps^2}\right)$ for proper realizable learning in KL divergence (with constant error probability) via maximum likelihood, and the analysis can be adapted to agnostic learning as well. This matches our sample complexity bounds via RWM-regret up to log factors. For improper learning in the realizable case, our EWA-regret approach gives a better, near-optimal, sample complexity, as has been discussed before.

\section{Discussion}

\paragraph*{Conclusion} In this work, we established a novel connection between distribution learning and graphical structure sampling algorithms via the framework of online learning. Leveraging this connection, we designed efficient algorithms for agnostically learning bounded indegree chordal-structured distributions, with polynomial sample complexity. These algorithms only require knowledge of the distribution's skeleton, without needing information on the edge directions. Since polytree-structured distributions are a subset of chordal-structured distributions, our result also gives new results on the well-studied problem of learning polytree-structured distributions.  Interestingly, our method also leads to a new algorithm for learning tree-structured distributions, which is significantly different from the extremely well studied Chow-Liu algorithm. Finally, we also give an improper learning algorithm that, with probability $2/3$, gives an $(\eps, 3)$-approximation with respect to tree-structured distributions, which has a quadratic sample complexity advantage over Chow-Liu. Our work opens up several interesting research avenues, which we discuss in detail in \Cref{sec:open_problem}.

\begin{ack}
The authors would like to thank the anonymous reviewers for their comments which improved the presentation of the paper.
AB and PGJ's research were supported by the National Research Foundation, Prime Minister’s Office, Singapore under its Campus for Research Excellence and Technological Enterprise (CREATE) programme. SS's research is supported by the NRF Investigatorship award (NRF-NRFI10-2024-0006)
and CQT Young Researcher Career Development Grant (25-YRCDG-SS). AB and SS were also supported by National Research Foundation Singapore under its NRF Fellowship Programme (NRF-NRFFAI1-2019-0002). AB was additionally supported by an Amazon Research Award and a Google South/Southeast Asia Research Award.  SG's work is partially supported by the SERB CRG Award CRG/2022/007985. NVV's work was supported in part by NSF CCF grants 2130608 and 2342244 and a UNL Grand Challenges Catalyst Competition Grant. 

We would like to thank Debojyoti Dey, a Ph.D. student at IIT Kanpur, for discussions regarding robust learning algorithms in high dimensions.  AB would also like to thank Daniel Beaglehole for a short meeting which seeded the idea for this work.

\end{ack}

\bibliography{reference}

\end{document}